\newcommand{\mc}{\mathcal}
\newcommand{\mb}{\mathbf}
\newcommand{\tr}{\text{Tr}}
\newcommand{\diag}{\text{diag}}
\newcommand\numberthis{\addtocounter{equation}{1}\tag{\theequation}}
\DeclareMathOperator*{\argmin}{arg\,min}
\newtheorem{theorem}{Theorem}
\newtheorem{lemma}[theorem]{Lemma}
\newtheorem{definition}[theorem]{Definition}
\newtheorem{corollary}[theorem]{Corollary}
\title{Provably noise-robust $k$-means clustering}
\author{\normalsize
{Shrinu Kushagra} {\textnormal {,}} {Yaoliang Yu} {\textnormal {and}} {Shai Ben-David} \\
\normalsize David R. Cheriton School of Computer Science \\
\normalsize University of Waterloo,\\
\normalsize Waterloo, Ontario, Canada\\
\normalsize \{skushagr,yaoliang.yu,shai\}@uwaterloo.ca \\
}
\begin{document}

\maketitle

\begin{abstract}
We consider the problem of clustering in the presence of noise. That is, when on top of cluster structure, the data also contains a subset of \emph{unstructured} points. Our goal is to detect the clusters despite the presence of many unstructured points. Any algorithm that achieves this goal is noise-robust. We consider a regularisation method which converts any center-based clustering objective into a noise-robust one. We focus on the $k$-means objective and we prove that the regularised version of $k$-means is NP-Hard even for $k=1$. We consider two algorithms based on the convex (sdp and lp) relaxation of the regularised objective and prove robustness guarantees for both. 

The sdp and lp relaxation of the standard (non-regularised) $k$-means objective has been previously studied by \cite{awasthi2015relax}. Under the stochastic ball model of the data they show that the sdp-based algorithm recovers the underlying structure as long as the balls are separated by $\delta > 2\sqrt{2} + \epsilon$. We improve upon this result in two ways. First, we show recovery even for $\delta > 2 + \epsilon$. Second, our regularised algorithm recovers the balls even in the presence of noise so long as the number of noisy points is not too large. We complement our theoretical analysis with simulations and analyse the effect of various parameters like regularization constant, noise-level etc. on the performance of our algorithm. In the presence of noise, our algorithm performs better than $k$-means++ on MNIST. 
\end{abstract}

\section{Introduction}
Clustering aims to group similar data instances together while separating dissimilar ones. However, often many datasets have, on top of cohesive groups, a subset of ``unstructured'' points as well. In such cases, the goal is to detect the cohesive structure while simultaneously separating the unstructured data points. Clustering in such situations can be viewed as a {\em noise-robustness} problem. 

Another important issue is that the clustering problem is {\em under-specified}. This means that the same dataset might need to be clustered in different ways depending upon the intended application. Consider, for example, the problem of clustering users of a movie-streaming service such as Netflix. The output clustering can be used to suggest similar movies to similar users or to gain insights into the daily/monthly behaviour of the users. Depending on the application, different clustering algorithms need to be chosen. Hence, any solution to clustering challenges like noise-robustness, under-specificity should be such that it is applicable across a wide-range of clustering algorithms. 

We propose a general method of regularisation that transforms any clustering objective which outputs $k$ clusters to one that outputs $k+1$ clusters. The algorithm is now allowed to `discard' a bunch of points into the extra `garbage' or noise cluster by paying a constant regularization penalty. The intuition is that this will make it easier to detect the structure in the remaining points. Similar to regularisation, {\em noise prototypes} (points which are equidistant to all other points) were considered by \cite{dave1993robust}. However, that idea was used only in the limited context of Lloyd’s algorithm and without any theoretical or noise robustness guarantees.

In this paper, we consider the following framework motivated by \cite{ben2014clustering}. We are given an input dataset $\mc X$ made of two components. The first is the clusterable or ``nice'' subset $\mc I$ which is the union of $k$ unit balls $B_i$ separated by a distance of atleast $\delta$. The second is the unstructured noise component $\mc N$. Note that the clustering algorithm only sees $\mc X$ and is not aware in advance of $\mc I$ or $\mc N$. The addition of $\mc N$ makes it more difficult to detect the structure in $\mc I$. Consider the original algorithm $\mc A$ and its robustified transformation $\mc A' = \mc R_{\lambda}(\mc A)$. $\mc A'$ is obtained by using our regularisation paradigm and specifying a parameter $\lambda$. By comparing the clusterings $\mc A(\mc I)$ and the clustering $\mc A'(\mc X)$ restricted to $\mc I$, we can examine the effect of $\mc N$ (in terms of size and distance relative to $\mc I$) on the ability to detect the cohesive structure of $\mc I$.

In this work, we consider two choices for $\mc A$. An algorithm based on SDP relaxation of the $k$-means objective and another based on LP relaxation. \cite{awasthi2015relax} showed that for $\delta > 2\sqrt{2}(1+\frac{1}{\sqrt d})$ (where $d$ is the dimension of the euclidean space) the SDP-based algorithm recovers the clustering of $\mc I$ if the balls $B_i$ are generated by an isotropic distribution (stochastic ball model). \cite{iguchi2015tightness} `improved' this to $\delta > 2 + \frac{k^2}{d}cond(\mc I)$. However, the condition number (ratio of maximum distance between any two centers and the minimum distance between any two cluster centers) can be arbitrarily large. We improve this to $\delta > 2\big(1+\sqrt\frac{k}{d}\big)$ which is optimal for large $d$. To our knowledge, this is best known guarantee for the SDP-based $k$-means algorithm. Note that all the above results are for the noiseless case. For the noisy case, our robustified (or regularised) version recovers the clustering of $\mc I$ for $\delta > 2\big(1+\sqrt{\sigma + \frac{k}{d}}\big)$ where $\sigma$ is a term which depends on the ratio of number of noisy points and the number of points in the smallest cluster. 

We also consider the distribution-free setting where  the balls $B_i$ have been generated by any unknown distribution. In this setting, the separation requirement for recovering the structure of $\mc I$ becomes $\delta > 2(1 + \sqrt{k})$ for the noiseless and $\delta > 2(1+\sqrt{\sigma + k})$ for the noisy case. As before, $\sigma$ depends on the number of noisy points. We also give robustness guarantees for the LP based algorithm. However, the separation requirement in this case is more strict $\delta > 4$ and the requirements on noise are also stronger (noisy points from the set $\mc N$ should be far from $\mc I$).  

We also prove hardness results for the regularised $k$-means objective. For $k \ge 2$, the NP-hardness follows from the NP-Hardness of the standard (non-regularised) $k$-means objective. We show that the regularised objective is NP-Hard to optimize even for $k=1$. An important choice in the implementation of the regularised algorithm is the value of the regularization constant $\lambda$. For $\lambda = 0$, the problem becomes trivial and for $\lambda \rightarrow \infty$, the problem reduces to $k$-means. We prove that there exists a range of $\lambda$ depending on the sizes of the clusters and the separation $\delta$ between them such that the regularised algorithm recovers the underlying structure when given that $\lambda$ as input. We also conduct simulation studies where we examine the effect of $\lambda$, the number of noisy points $m$, the separation $\delta$ and other parameters on the performance of our regularised SDP-based algorithm. We also perform experiments on the MNIST dataset. We observed that the regularised version performed better than $k$-means++ when the dataset had outliers. In the absence of outliers, the performance of both these algorithms were similar. 

\subsection{Related Work}
The problem of clustering in the presence of noise has been studied before both in distribution-based and distribution-free settings. In the distribution-based setting, the goal is to estimate the parameters of the distribution (say the mean and variance of gaussian etc). In the distribution-free setting, the goal is to prove that if the data has some structure (is clusterable) then the (proposed or existing) clustering algorithm recovers that structure even in the presence of noise. Different works define different notions of `clusterable' data. In the current work, the separation requirement on the clusters was global. That is, the each cluster was separated by atleast $\delta$ times the maximum radius amongst all the clusters. Other notions of clusterability, like $\alpha$-center proximity \cite{awasthi2012center}or $\gamma$-margin \cite{ashtiani2016clustering} require that two clusters be separated relative to their radii.

Different works on clustering have also made different assumptions on the type of noisy points. The most common is to assume that the noise is adversarial but the number of adversaries is not too large. For example, \cite{balcan2012clustering} and \cite{balcan2008discriminative} provide bounds for clustering in the presence of noise as long as the number of adversaries is constant-factor smaller compared to the size of the smallest cluster. \cite{kushagra2016finding} considered noise which is structureless, that is the noisy points do not form dense large subsets. Another field of work is to address noisy part of the data as being generated by some uniform random noise or gaussian perturbations \cite{cuesta1997trimmed}, \cite{garcia2008general} and \cite{dave1993robust}.  

Another line of work which is related to ours is clustering a mixture of $k$ gaussians with few adversaries. The best known result is by \cite{awasthi2012improved} which requires that the mean of the gaussians be separated by $\tilde O(\sigma \sqrt{k})$ where $\sigma^2$ is an upper bound on the variance of the $k$ gaussians. Recently, \cite{charikar2017learning} matched this result using different techniques. Although the distribution free setting considered in this paper is different from the above works, the separation required for the SDP-based algorithm to succeed also has a similar dependence on $k$, namely $\delta > 2(1 + \sqrt{k})$. 

Some works examine the robustness of different algorithms when the number of clusters is the same for the original data and the data with added noise. They show that in this setting the traditional algorithms are provably not noise-robust \cite{hennig2008dissolution} and \cite{ackerman2013clustering}. Another line of work which is related to ours is based on the convex relaxation of center-based clustering objectives. \cite{peng2007approximating} was the first to formulate the $k$-means cost function as a 0-1 SDP and then subsequently relaxed it to a standard SDP. In this work, we use a similar technique to first obtain a 0-1 SDP and subsequently a relaxed SDP for the regularised $k$-means objective. 

\section{Preliminaries and definition}
\label{sec:pre}

Let $(\mb M, d)$ be a metric space. Given a finite set $\mc X \subset \mb M$, a $k$-clustering $\mc C$ of $\mc X$ partitions the set into $k$ disjoint subsets $\mc C = \{C_1, \ldots, C_k\}$. An objective-based clustering algorithm associates a cost with each possible partition of $\mc X$ and then tries to find the clustering with minimum cost. Throughout this section, $f$ denotes a function on the nonnegative reals.

\begin{definition}[$(k, f)$-objective algorithm] Given $\mc X\subset \mb M$ and a distance function $d$, a $(k, f)$-objective based algorithm $\mc A$ tries to find centers $\mu_1, \ldots, \mu_k \in \mb M$ so as to minimize the following function
\begin{align}
\label{eqn:kfObjectiveAlg}
\text{Cost}(\mu_1, \ldots, \mu_k) = \sum_{x\in \mc X} f(d(x, \mu(x))), \quad \mu(x) = \argmin_{\mu \in \{\mu_1, \ldots, \mu_k\}} d(x, \mu).
\end{align}
\end{definition}
Note that algorithm $\mc A$ may not often find the optimal solution because for many common functions $f$, solving the optimization is NP-Hard. Thus, heuristics are used that can get stuck at a local minima. For example, when $f(x) = x^2$, the above definition corresponds to the $k$-means objective, and the Lloyd's algorithm that is used to solve this objective can get stuck at a local minima. 

\begin{definition}[$(k, f)$-$\lambda$-regularised objective algorithm] Given $\mc X\subset \mb M$ and a distance function $d$, a $(k, f)$-$\lambda$-regularised objective based algorithm $\mc A'$ tries to find centers $\mu_1, \ldots, \mu_k \in \mb M$ and set $\mc I\subseteq \mc X$ so as to minimize the following function
\begin{align}
\label{eqn:kfObjectiveAlg}
\text{Cost}(\mu_1, \ldots, \mu_k, \mc I) = \sum_{x\in \mc I} f(d(x, \mu(x))) + \lambda |\mc X \setminus \mc I|,
\end{align}
where $\mu(x) = \argmin_{\mu_i} d(x, \mu_i).$
\end{definition}

The regularised objective allows discarding certain points into a ``garbage'' cluster at the expense of paying a constant penalty. The intuition is that this will help the algorithm better detect the structure of the remaining points. We will see in \S \ref{section:hardness} that minimizing this objective function is NP-Hard for all $k \ge 1$.

\subsection{Robustification paradigms}
\begin{definition}[$\lambda$-Regularised Paradigm] The $\lambda$-regularised paradigm is a robustification paradigm which takes as input a $(k, f)$-objective algorithm $\mc A$ and returns a $(k, f)$-$\lambda$-regularised objective algorithm $\mc A'$.  
\end{definition}

In this work, we focus on robustification of the $k$-means objective. Hence, it is useful to define the regularised $k$-means objective as we will refer to it many times in the remainder of the paper. 

\subsection{Regularised $k$-means objective}
Given a finite set $\mc X \subset \mb R^{d}$ and an integer $k$, the regularised $k$-means objective aims to partition the data into $k+1$ clusters $\mc C = \{C_1, \ldots, C_{k}, C_{k+1}\}$ so as to solve 
\begin{align}
	\min_{C_1,\ldots, C_{k+1} \atop c_1, \ldots, c_k} \sum_{i=1}^k \sum_{x \in C_i} \|x-c_i\|_2^2 + \lambda |C_{k+1}|.
	\label{eqn:modifiedkmeans}
\end{align}

Note that the first term of the objective depends on the $l_2$ norm, while the second term depends only on the cardinality of the ``noise'' cluster. In order to make our objective function invariant to scaling, the regularization constant $\lambda$ is added to the cost function. 

Let $m(\mc X) := \min_{x\ne y \in \mc X} \|x-y\|_2^2$ and $N = |\mc X|$. Then, it is easy to see when $\lambda \leq \frac{m(\mc X)}{2}$, then \eqref{eqn:modifiedkmeans} admits a trivial solution: each cluster $C_i$ for $i \leq k$ has exactly one point and all the remaining points are in $C_{k+1}$, leading to an objective $\lambda (N-k)$. Indeed, for any other clustering with $|C_i| = n_i$ its objective is at least $\sum_{i=1}^k (n_i-1) m(\mc X)/2 + \lambda n_{k+1} = (n-n_{k+1} - k) m(\mc X)/2 + \lambda n_{k+1}$, where we have used the simple fact:
\begin{align}
\forall C \subset \mc X, ~~
\min_{c} \sum_{x \in C} \|x - c\|_2^2 ~=~ \frac{1}{2|C|}\sum_{x, y \in C} \|x - y\|_2^2.
\end{align}
Comparing the objectives we see the solution is indeed trivial when $\lambda \leq m(\mc X) /2$. Surprisingly, for the interesting case when $\lambda > m(\mc X) /2$, the problem suddenly becomes NP-Hard, as we prove below.

\subsection{Robustness measure}
Given two clusterings $\mc C$ and $\mc C'$ of the same set $\mc X$, we define the distance between them, $\Delta(\mc C, \mc C')$, as the fraction of pairs of points which are clustered differently in $\mc C$ than in $\mc C'$. 
Given $\mc I \subseteq \mc X$, $\mc C|\mc I$ denotes the restriction of the clustering $\mc C$ to the set $\mc I$.

\begin{definition}[$\gamma$-robust \cite{ben2014clustering}]
Given $\mc X\subset \mb M$ and clustering algorithm $\mc A$, let $\mc A'$ be its robustified version obtained using any robustification paradigm. Given $\mc I \subseteq \mc X$, we say that $\mc I$ is $\gamma$-robust w.r.t $\mc X \setminus \mc I$ and $\mc A'$ if 
\begin{align}
\Delta(\mc A'(\mc X)|\mc I, \mc A(\mc I)) \le \gamma
\end{align}
\end{definition}
This measure tries to quantify the difference in the clustering of the set $\mc I$ after the addition of `noisy' points $\mc X \setminus \mc I$. If $\mc A'$ is indeed robust to noisy points, then the clusterings should be similar.

\section{Hardness of regularised $k$-means}
\label{section:hardness}

In this section, we present hardness results for the regularised $k$-means objective. 
The proof for $k \ge 2$ is fairly straightforward and follows from known hardness results for the standard (non-regularised) $k$-means. The more interesting case is when $k = 1$. It is well-known that $1$-means can be solved in linear time \cite{Bellman73} hence the same reduction does not work any more. We reduce an instance of the MAX-CLIQUE problem to the regularised $1$-means problem. We give a proof sketch for the reader's intuition. The technical details can be found in the supplementary section.


\begin{restatable}{theorem}{hardForkone}
\label{theorem:hardFork1}
Given a clustering instance $\mc X \subset \mb R^d$. Finding the optimal solution to the regularised $1$-means objective is NP-Hard for all $\lambda > m(\mc X)/2$, where recall that $m(\mc X) := \min_{x\ne y \in \mc X} \|x-y\|_2^2$.
\end{restatable}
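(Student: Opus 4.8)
The plan is to reduce from a known NP-hard problem to the regularised $1$-means objective, establishing hardness for the regime $\lambda > m(\mc X)/2$ where the trivial solution no longer applies. The paper's proof sketch already announces a reduction from \textsc{Max-Clique}, so I would follow that route. First I would set up the reduction: given a graph $G = (V, E)$ on $n$ vertices, I would construct a point set $\mc X \subset \mb R^d$ (most naturally with one point per vertex, so $d = |V|$ or thereabouts) whose geometry encodes the adjacency structure. The natural encoding is to place points so that the squared distance between $x_u$ and $x_v$ is small (say equal to some value $a$) when $\{u,v\} \in E$ and large (say $b > a$) otherwise; for instance, scaled indicator vectors of vertices give pairwise squared distances that take exactly two values depending on adjacency, which is the kind of rigid two-distance structure a clique detector needs.

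Next I would analyse the optimal $1$-means cost as a function of which subset $\mc I \subseteq \mc X$ is retained (the complement going to the garbage cluster $C_{k+1}$ at penalty $\lambda$ per point). Using the identity $\min_c \sum_{x \in C}\|x-c\|_2^2 = \frac{1}{2|C|}\sum_{x,y\in C}\|x-y\|_2^2$ stated in the excerpt, the cost of keeping a subset $S$ of size $s$ is $\frac{1}{2s}\sum_{x,y\in S}\|x-y\|_2^2 + \lambda(n-s)$. With the two-distance encoding, $\sum_{x,y\in S}\|x-y\|_2^2$ is a linear function of the number of non-edges inside $S$, so the objective becomes a clean function of $s$ and the internal edge count of $S$. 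The key point is that for a fixed size $s$, minimising the objective is equivalent to maximising the number of edges among the $s$ retained vertices, and a set of size $s$ achieving the minimum possible intra-set distance corresponds exactly to an $s$-clique. I would then choose $\lambda$ (inside the required range $\lambda > m(\mc X)/2$, which is exactly the regime making it worthwhile to discard points) so that the optimal $\mc I$ is precisely a maximum clique: the penalty must be large enough that retaining a non-adjacent pair is never beneficial, yet the scaling must keep $\lambda$ above $m(\mc X)/2$ so the problem is nontrivial.

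The main obstacle I anticipate is the simultaneous tuning of the distance gap $b - a$ and the penalty $\lambda$ so that the unique global optimum corresponds to a maximum clique and so that no ``mixed'' retained set (one containing some non-edges but compensating by its size) can beat a genuine clique. Concretely I would need to verify a monotonicity/exchange argument: show that whenever the retained set $S$ contains a non-adjacent pair, either removing one of the offending vertices (paying $\lambda$) or the structure of the cost forces a strictly larger objective than the best clique of the appropriate size. This requires a careful case analysis comparing the marginal cost of adding a vertex that creates non-edges against the penalty $\lambda$, and balancing it against the $\frac{1}{2s}$ normalisation, whose dependence on $s$ makes the cost non-additive and is the source of most of the technical bookkeeping.

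Finally I would close the reduction in both directions: a clique of size $\omega$ in $G$ yields a retained set realising a target objective value $V^*(\omega)$, and conversely any solution achieving objective at most $V^*(\omega)$ must retain an $\omega$-clique, so that solving regularised $1$-means exactly recovers $\omega(G)$. Since the construction of $\mc X$ and the choice of $\lambda$ are polynomial in the size of $G$, and \textsc{Max-Clique} is NP-hard, this establishes NP-hardness of the regularised $1$-means objective for all $\lambda > m(\mc X)/2$, as claimed.
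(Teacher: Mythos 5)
Your reduction from \textsc{Max-Clique} with a two-valued squared-distance encoding (small for edges, large for non-edges), combined with the identity $\min_c\sum_{x\in C}\|x-c\|_2^2=\frac{1}{2|C|}\sum_{x,y\in C}\|x-y\|_2^2$, is exactly the first half of the paper's argument, and your planned case analysis (for fixed retained size $s$, minimising cost means maximising internal edges, so the optimum at the threshold is a clique) is the same bookkeeping the paper carries out. However, there is a genuine gap: this construction only establishes NP-hardness for the \emph{particular} value of $\lambda$ that you tune as part of the reduction (the paper shows the construction works for $\lambda$ in the narrow window $\frac{m(\mc X)}{2}<\lambda<\frac{m(\mc X)}{2}+\frac{1}{2n^2(n-1)}$). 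The theorem asserts hardness for \emph{every} $\lambda>m(\mc X)/2$, i.e.\ for each fixed admissible $\lambda$ the problem with that $\lambda$ as input is hard. Your proposal simply asserts the ``for all'' conclusion at the end without bridging this quantifier. The paper closes the gap with a separate scaling argument: given an arbitrary target $\lambda>m(\mc X)/2$, set $\lambda'=\sqrt{n^3(4\lambda-2m(\mc X))}$ and rescale all distances by $1/\lambda'$; the objective with parameter $\lambda$ on the original instance then equals $\lambda'^2$ times the objective with the ``canonical'' hard parameter $\lambda_0(\mc X')=\frac{m(\mc X')}{2}+\frac{1}{4n^3}$ on the rescaled instance, so hardness at one $\lambda$ transfers to all of them. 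Without this (or an equivalent argument showing your gap $b-a$ and geometry can be retuned for an arbitrary prescribed $\lambda$), the claim as stated is not proved.

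A secondary issue: your suggested concrete realisation, ``scaled indicator vectors of vertices,'' does not produce an adjacency-dependent two-valued metric --- distinct standard basis vectors are all at the same pairwise distance. The two-valued squared-distance matrix ($1$ on edges, $1+\Delta$ on non-edges) is not automatically realisable in Euclidean space; the paper needs $0<n\Delta<1$ and a separate embeddability lemma (via the negative-type condition $u^TDu\le 0$ for $u^T\mathbf{1}=0$, following Dasgupta) to place the points in $\mb R^{|\mc X|}$. Your proof would need to include such a verification, since the theorem is about instances $\mc X\subset\mb R^d$ rather than abstract metrics.
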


\begin{proof}[Proof sketch]
The proof has two parts. We first show that for fixed $\lambda$ the problem is NP-Hard. The proof works by reducing an instance of MAX-CLIQUE to the regularised $1$-mean instance. The idea is to define the distance between any pair of vertices as $1$ if there exists an edge between them. If not, then define the distance as $1 + \Delta$ for a suitably chosen $\Delta$. This construction guarentees that the problem is NP-Hard for atleast one $\lambda > \frac{m(\mc X)}{2}$. Next, using a scaling argument we show that if the problem is NP-Hard for one particular $\lambda$, then it is NP-Hard for all $\lambda > \frac{m(\mc X)}{2}$.
\end{proof}

The above theorem infact shows that regularised $k$-means is hard for all $k \ge 1$. This is becuase we can reduce an instance of regularised $1$-means to regularised $k$-means by placing $k-1$ points very far away. 

\section{The regularised $k$-means SDP-based algorithm}
\label{section:heuristic}
In the previous section, we showed that the regularised $k$-means objective is NP-Hard to optimize. Hence, we cannot hope to solve the problem exactly unless $P=NP$. In this section, we develop an algorithm based on semi-definite programming relaxation of the regularised objective. 

\cite{peng2007approximating} developed an algorithm $\mc A$ (Alg. \ref{alg:heuristicSDP} with $\lambda = \infty$) which tries to minimize the $k$-means objective. They obtained a convex relaxation of the $k$-means objective and solved it polynomially using standard solvers. In this section, we use the same technique to obtain and efficiently solve the convex relaxation of the regularised $k$-means objective. Our algorithm $\mc A'$ (Alg. \ref{alg:heuristicSDP}) is the robustified version of $\mc A$ using the $\lambda$-regularised paradigm. In \S \ref{subsection:sdpAlg}, we give the details of how we transform the regularised objective into an SDP. \S \ref{subsection:sdpRobust} has our main results where we give robustness guarentees for $\mc A'$.    

\begin{algorithm}[t]
\caption{SDP-based regularised $k$-means algorithm}
\label{alg:heuristicSDP}
	\textbf{Input: }{ $\mc{X} \subset R^d$, $k$, and hyperparameter $\lambda$.} \\
	\textbf{Output: }{$\mc C' := \{C_1, \ldots, C_k, C_{k+1}$\}.}
	
	Compute the matrix $D_{ij} = \|x_i-x_j\|^2_2$.\\
	Solve the SDP (Eqn. \ref{eqn:regularisedSDP}) using any standard SDP solver and obtain matrix $Z$ and vector $y$.\\
	Use the rounding procedure (Alg. \ref{alg:roundSDP}) to obtain the partition $\mc C'$. 
\end{algorithm}

\subsection{The SDP-based algorithm}
\label{subsection:sdpAlg}
The SDP relaxation of the regularised $k$-means objective is obtained in two steps. Using similar technique to that of \cite{peng2007approximating}, we translate Eqn. \ref{eqn:modifiedkmeans} into a 0-1 SDP (Eqn. \ref{eqn:regularisedSDP}). We then prove that solving the 0-1 SDP exactly is equivalent to solving the regularised $k$-means problem exactly. Then, we relax some of the constraints of the 0-1 SDP to obtain a tractable SDP which we then solve using standard solvers. We then describe the rounding procedure which uses the solution of the SDP to construct a clustering of the original dataset. 

\begin{equation*}
	\begin{split}
	\textbf{0-1}\\
	\textbf{SDP}
  \end{split}
	\begin{cases}
		\min_{Z, y} \enspace &\tr(DZ) + \lambda \langle \mb 1, y\rangle\\
		\text{s.t. } \enspace &\tr(Z) = k\\
		& Z\cdot \mb 1 + y = \mb 1\\	
		&Z\ge 0, Z^2 = Z, Z^T = Z \\
		& y \in \{0, 1\}^n
	\end{cases}
	\xrightarrow{\text{relaxed}} \textbf{ SDP } 
	\begin{cases}
		\min_{Z, y} \enspace &\tr(DZ) + \lambda \langle \mb 1, y\rangle\\
        \text{s.t. } \enspace &\tr(Z) = k\\
		& \Big(\frac{Z+Z^T}{2}\Big)\cdot \mb 1 + y = \mb 1\\		
		&Z \ge 0, y \ge 0, Z \succeq 0 \numberthis\label{eqn:regularisedSDP}
	\end{cases}
\end{equation*}

\begin{restatable}{theorem}{modifiedkmeans}
\label{thm:modifiedkmeans}
Finding a solution to the 0-1 SDP (\ref{eqn:regularisedSDP}) is equivalent to finding a solution to the regularised $k$-means objective (\ref{eqn:modifiedkmeans}). 
\end{restatable}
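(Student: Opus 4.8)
The plan is to set up a two-way, objective-preserving correspondence between feasible points $(Z,y)$ of the 0-1 SDP and ``partial clusterings'' of $\mc X$, by which I mean a partition of some $\mc I \subseteq \mc X$ into $k$ nonempty clusters $C_1,\dots,C_k$ together with a noise set $C_{k+1} = \mc X\setminus\mc I$. This follows the 0-1 SDP technique of \cite{peng2007approximating} but extended to carry the extra noise variable $y$. Once the correspondence is shown to be a bijection that preserves the objective value, the two minimisation problems have identical minimisers and the equivalence follows.

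First I would prove the easy (\emph{backward}) direction. Starting from any clustering $C_1,\dots,C_k,C_{k+1}$ with $|C_t| = n_t$, define the normalised partition matrix $Z = \sum_{t=1}^k \tfrac{1}{n_t}\,\mb1_{C_t}\mb1_{C_t}^T$ and $y = \mb1_{C_{k+1}}$. A direct check shows $(Z,y)$ is feasible: $Z$ is symmetric, entrywise nonnegative, and idempotent, since the indicators have disjoint supports and $\mb1_{C_t}^T\mb1_{C_t}=n_t$ give $Z^2=Z$; moreover $\tr Z = \sum_t \tfrac{1}{n_t}\|\mb1_{C_t}\|^2 = k$, and $Z\cdot\mb1 + y = \mb1$ because each assigned row sums to $1$ and each noise row is $0$. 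Using the variance identity $\min_c\sum_{x\in C}\|x-c\|^2 = \tfrac{1}{2|C|}\sum_{x,y\in C}\|x-y\|^2$ from the preliminaries, I would evaluate $\tr(DZ) = \sum_t \tfrac{1}{n_t}\sum_{i,j\in C_t}\|x_i-x_j\|^2$, which is exactly the clustering cost with the centres minimised out to the centroids, while $\langle \mb1, y\rangle = |C_{k+1}|$ supplies the regularisation term. Hence the SDP objective of $(Z,y)$ equals the regularised $k$-means objective of the clustering.

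The harder (\emph{forward}) direction is the structural lemma: every feasible $(Z,y)$ arises from such a clustering. For any index $i$ with $y_i=1$, the constraint $(Z\cdot\mb1)_i = 1-y_i = 0$ together with $Z\ge 0$ forces the whole $i$-th row, and by symmetry column, of $Z$ to vanish, so $i$ is unassigned; I set $C_{k+1} = \{i : y_i=1\}$. On the complementary set $\mc I$, the restriction of $Z$ is symmetric, idempotent, entrywise nonnegative, and \emph{doubly stochastic}. I would then split $\mc I$ into the connected components of the support graph of $Z$, where $i\sim j$ iff $Z_{ij}>0$; since all entries between distinct components are zero, $Z$ is block diagonal over these components, and each diagonal block inherits symmetry, idempotency, nonnegativity, double stochasticity, and irreducible (connected) support.

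The crux, and the step I expect to be the main obstacle, is showing that each such block equals $\tfrac{1}{n_t}\mb1\mb1^T$, i.e.\ has rank one. Here I would invoke Perron--Frobenius: an irreducible nonnegative matrix has a \emph{simple} leading eigenvalue, which for a doubly stochastic block is $1$ with eigenvector $\mb1$. But a symmetric idempotent matrix is an orthogonal projection whose eigenvalue $1$ has multiplicity equal to its rank, so simplicity forces rank one; the unique symmetric rank-one projection fixing $\mb1$ on a block $B$ is $\tfrac{1}{|B|}\mb1\mb1^T$. Thus the blocks are precisely normalised cluster indicators, $\tr Z = k$ counts them as exactly $k$ nonempty clusters, and $(Z,y)$ is the image of a genuine clustering whose objective matches as computed above. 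Combining both directions gives an objective-preserving bijection, so solving the 0-1 SDP is equivalent to solving the regularised $k$-means objective; the only bookkeeping to watch is the global constant introduced by the variance identity, which is absorbed into the normalisation of $D$ (equivalently, of $\lambda$).
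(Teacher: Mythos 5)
Your proof is correct, and your backward direction (clustering $\mapsto$ feasible $(Z,y)$ of equal cost) is essentially the paper's own derivation of the 0-1 SDP. The forward direction, however, takes a genuinely different route. The paper follows Peng and Wei's peeling argument: it takes the index $i^*$ maximising $Z_{ii}$, shows $Z_{ij}\le Z_{i^*i^*}$ for all $i,j$ via $(e_i-e_j)^TZ(e_i-e_j)\ge 0$, uses $Z^2=Z$ and the row-sum constraint to get $\sum_{j\in I}Z_{i^*j}=1$ and $\sum_{j\in I}Z_{i^*j}^2/Z_{i^*i^*}=1$ simultaneously for the support $I$ of row $i^*$, concludes from the equality condition that this support carries the uniform block $\frac{1}{|I|}\mb 1\mb 1^T$, and then peels that block off and recurses on a residual problem of trace $k-1$. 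You instead decompose the assigned part of $Z$ in one shot into the connected components of its support graph and identify each block as $\frac{1}{|B|}\mb 1\mb 1^T$ by combining Perron--Frobenius (the leading eigenvalue of an irreducible doubly stochastic block is $1$ and simple) with the fact that a symmetric idempotent is an orthogonal projection whose rank equals the multiplicity of the eigenvalue $1$. Both are sound; the paper's argument is more elementary (pure algebra plus positive semidefiniteness, no spectral machinery) while yours is more conceptual and avoids the induction. Two small points worth making explicit: $Z_{ii}>0$ for every assigned index (otherwise positive semidefiniteness forces a zero row, contradicting row sum $1$), so no component block is degenerate; and the factor-of-two bookkeeping you flag is real --- $\tr(DZ)$ equals \emph{twice} the within-cluster cost --- and is present in the paper too, whose final computation silently uses $\frac{1}{2}\tr(DZ)$ even though the displayed 0-1 SDP objective omits the $\frac{1}{2}$, so the equivalence as stated holds only after rescaling $\lambda$ (or $D$) accordingly.
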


Equation \ref{eqn:regularisedSDP} shows our 0-1 SDP formulation. The optimization is NP-Hard as it is equivalent to the regularised $k$-means objective. Hence, we consider a convex relaxation of the same. First, we replace $Z^2 = Z$ with $Z \succeq 0$. In addition, we relax $y \in \{0, 1\}^n$ to $y \geq 0$, as the constraint $y\leq 1$ is redundant. Using these relaxations, we obtain the SDP formulation for our objective function.

We solve the SDP using standard solvers \cite{yang2015sdpnal+} thereby obtaining $Z, y$. 
The proof of Thm. \ref{thm:modifiedkmeans} showed that the optimal solution of 0-1 SDP is of the following form. $Z$ is a $n\times n$ block diagonal matrix of the form $\diag(Z_{I_1}, \ldots, Z_{I_k}, 0)$, where $n = |\mc X|$ and $Z_{I_i} = \frac{1}{|C_i|}11^T$. Thus, given $Z$, we can extract the set of cluster centers $C = ZX$ which is an $n \times d$ matrix. Each row $C_i$ contains the cluster center to which data point $x_i$ belongs. For the points $x_j$ assigned to the noise cluster, the corresponding row $C_j$ is zero and $y_j = 1$. The SDP solver does not always return the optimal solution, as the relaxation is not exact. However, we expect that it returns a near-optimal solution. Hence, given $Z$ and $y$ returned by the solver, we use Alg. \ref{alg:roundSDP} to extract a clustering of our original dataset. The $threshold$ parameter indicates our confidence that a given point is noise. In our experiments, we have used a threshold of $0.5$. We did not tune the threshold as in our experiments the results
are not very sensitive to it.

\begin{algorithm}[t]
\caption{Regularised $k$-means rounding procedure}
\label{alg:roundSDP}
	\textbf{Input: }{ $Z \subset \mathbb{R}^{n\times n}$, $y \subset \mathbb{R}^{n}$, $\mc{X}$, and $threshold \in [0,1]$.}\\
	\textbf{Output: }{$\mc C'$.}
	
	If $y_i > threshold$ then
	\begin{itemize}[nolistsep] 
		\item[] Delete $z_i$ and $z_i^T$ from $Z$. Put $x_i$ in $C_{k+1}$.
		\item[] Delete $x_i$ from $X$.
	\end{itemize}
	$k$-cluster the columns of $X^TZ$ to obtain clusters $C_1, \ldots, C_k$.\\
	Output $\mc C' = \{C_1, \ldots, C_k, C_{k+1}\}$.
\end{algorithm}

\subsection{Robustness guarantees}
\label{subsection:sdpRobust}

Assume that we are given a set $\mc I$ of $k$ well-separated balls in $\mb R^d$. That is, $\mc I := \cup_{i=1}^k B_i$ where each $B_i$ is a ball of radius at most one and centered at $\mu_i$ such that $\|\mu_i - \mu_j\| \ge \delta$. On top of this structure, points are added from the set $\mc N$. Let $\mc A$ and $\mc A'$ be the SDP based standard and regularised $k$-means algorithm respectively (as defined in the begining of \S \ref{section:heuristic}). We will show that $\mc I$ is $0$-robust w.r.t $\mc A'$ and $\mc N$ under certain conditions on $\delta$ and mildness properties of the set $\mc N$. To show this, we need to compare the clusterings $\mc A(\mc I)$ and $\mc A'(\mc X)|_{\mc I}$. We first prove recovery guarentees for $\mc A$ in the absense of noisy points.

\begin{theorem}
\label{thm:SDPGeneral}
Given a clustering instance $\mc I \subset \mb R^{N\times d}$ and $k$. Let $\mc I := \cup_{i=1}^k B_i$ where $B_i$ is a finite set of radius at most one centered at $\mu_i$ and $\|\mu_i - \mu_j\| > \delta$. Let $\mc I' = \cup_i B_i'$ where $B_i' := \{x - \mu_i : x \in B_i\}$. Define $ n := \min_{i\in[k]} |B_i|$ and $\rho = \frac{N}{nk}$.
\begin{enumerate}[leftmargin=*,nolistsep,noitemsep]
	\item \textbf{Distribution-free} - If the distance between the centers of any two balls 
$$\delta > 1 + \sqrt{1+ 2\frac{\sigma_{\max}^2(\mc I')}{n}}$$ 
then the $k$-means SDP finds the intended cluster solution  $\mc C^* = \{B_1, \ldots, B_k\}$.
	\item \textbf{Stochastic ball assumption} - Let $\mc P$ denote the isotropic distribution on the unit ball centered at origin. Given points $c_1, \ldots, c_k$ such that $\|c_i - c_j\| > \delta  > 2$. Let $\mc P_i$ be the measure $\mc P$ translated with respect to the center $c_i$. If each $B_i$ is drawn i.i.d w.r.t the distribution $\mc P_i$ and 
$$\delta > 1 + \sqrt{1+\frac{2\theta\rho k}{d}\Big(1+\frac{1}{\log N}\Big)^2}$$  
where $\theta = \mb E[\|x_{pi}-c_p\|^2] < 1$, then there exists a constant $c > 0$ such that with probability at least $1 - 2d\exp(\frac{-cN\theta}{d\log^2N})$ the $k$-means SDP finds the intended cluster solution  $\mc C^* = \{B_1, \ldots, B_k\}$.
\end{enumerate}
\end{theorem}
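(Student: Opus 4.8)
The plan is to prove exact recovery via the standard dual-certificate (KKT) method for SDP relaxations, following the framework of \cite{awasthi2015relax} and \cite{iguchi2015tightness} but with a tighter choice of dual variables. The two parts share the same core: part (1) is a deterministic spectral condition, and part (2) follows by controlling the relevant spectral quantity under the stochastic ball model. Throughout, the SDP in question is the standard $k$-means SDP, i.e.\ \eqref{eqn:regularisedSDP} with $\lambda = \infty$ (equivalently $y = 0$, so $Z\cdot\mb{1} = \mb{1}$).

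For part (1), I would first write down the candidate primal solution $Z^*$, namely the block-diagonal matrix $\diag(Z_1, \ldots, Z_k)$ with $Z_i = \frac{1}{|B_i|}\mb{1}\mb{1}^T$, which is exactly the SDP encoding of the intended clustering $\mc C^* = \{B_1, \ldots, B_k\}$ (cf.\ the form established in the proof of Thm.\ \ref{thm:modifiedkmeans}). Since the SDP is convex, it suffices to exhibit a feasible dual solution satisfying complementary slackness with $Z^*$. I would introduce a vector $z \in \mb R^{N}$ for the row-sum constraint (entering symmetrically as $\frac12(z\mb{1}^T + \mb{1}z^T)$), a scalar $\alpha$ for $\tr(Z) = k$, and an entrywise-nonnegative multiplier for $Z \ge 0$. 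The KKT conditions then collapse to two requirements: (i) the correction matrix
\[
\Lambda \;=\; D - \tfrac{1}{2}\big(z\,\mb{1}^T + \mb{1}\,z^T\big) - \alpha I - \text{(nonnegative slack)}
\]
is positive semidefinite, and (ii) $\Lambda Z^* = 0$, i.e.\ the cluster-indicator vectors lie in the kernel of $\Lambda$.

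The heart of the argument is choosing $z$ and $\alpha$ so that (ii) holds block-by-block, and then verifying (i). Because $\Lambda Z^* = 0$ forces $\Lambda$ to act trivially on the span of the cluster indicators, the PSD condition (i) only needs to be checked on the orthogonal complement of that span. Writing each point as $x = \mu_i + (x - \mu_i)$ and expanding $D_{ij} = \|x_i - x_j\|^2$, the within-cluster contributions split into a mean part (cancelled exactly by $z, \alpha$) and a fluctuation part governed by the centered data matrix $\mc I'$, while the cross-cluster contributions are controlled by the separation $\delta$. Bounding the smallest eigenvalue of $\Lambda$ on the orthogonal complement reduces to the single scalar inequality $(\delta - 1)^2 \ge 1 + 2\,\sigma_{\max}^2(\mc I')/n$, where the favorable term is the squared post-centering center gap and the adverse term is the operator norm $\sigma_{\max}^2(\mc I')$ normalized by the smallest cluster size $n$. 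This is precisely where the improvement over the $2\sqrt{2}$ bound comes from, and I expect this spectral estimate --- making the dependence on $\sigma_{\max}^2(\mc I')/n$ tight rather than lossy --- to be the main obstacle. Rearranging gives exactly the claimed threshold $\delta > 1 + \sqrt{1 + 2\sigma_{\max}^2(\mc I')/n}$.

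For part (2), I would invoke part (1) and bound $\sigma_{\max}^2(\mc I')$ under the stochastic ball model. Here $\mc I'$ stacks the centered samples $x_{pi} - c_p$, so $\mc I'^T\mc I' = \sum_{p,i}(x_{pi}-c_p)(x_{pi}-c_p)^T$ is a sum of independent rank-one matrices, each with mean $\frac{\theta}{d}I$ by isotropy (since $\mb E[(x-c)(x-c)^T]$ is a scalar multiple of $I$ of trace $\theta = \mb E\|x-c\|^2$). A matrix Bernstein / matrix Chernoff inequality then yields $\sigma_{\max}^2(\mc I') \le \frac{N\theta}{d}\big(1 + \tfrac{1}{\log N}\big)^2$ outside an event of probability $2d\exp\!\big(\tfrac{-cN\theta}{d\log^2 N}\big)$, with the $(1 + 1/\log N)$ factor playing the role of the deviation tolerance. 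Substituting this bound together with $N = \rho n k$ into the part (1) threshold converts $2\sigma_{\max}^2(\mc I')/n$ into $\frac{2\theta\rho k}{d}\big(1 + \tfrac{1}{\log N}\big)^2$, which is exactly the stated separation requirement, completing the proof.
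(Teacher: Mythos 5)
Your proposal is correct and follows essentially the same route as the paper: a dual (KKT) certificate for the block-diagonal $Z^*$, with the kernel condition enforced block-by-block so that positive semidefiniteness need only be checked on the orthogonal complement of the cluster-indicator span, yielding the deterministic threshold $(\delta-1)^2 - 1 > 2\sigma_{\max}^2(\mc I')/n$, and then a spectral-norm concentration bound for independent isotropic rows (the paper uses Theorem 5.41 of \cite{vershynin2010introduction}, which is the same tool you invoke as matrix Bernstein) with deviation parameter $s = 1/\log N$ to obtain part (2). The only cosmetic difference is that your scalar/vector dual-variable naming is swapped relative to the paper's.
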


Thm. \ref{thm:SDPGeneral} improves the result of Thm. 11 in \cite{awasthi2015relax}. Under the stochastic ball assumption, they showed that the $k$-means SDP finds the intended solution for $\delta > 2\sqrt{2}(1+\frac{1}{\sqrt{d}})$. For $k \ll d$, which is the case in many situations our bounds are optimal in terms of the separation reqirement of the clusters.  \cite{iguchi2015tightness} obtained similar results but for $\delta > 2 + \frac{k^2}{d}cond(\mc I)$. Asymptotically, as $d$ goes to $\infty$ their bound matches our result. However, the condition number (ratio of maximum distance between any two centers to the minimum distance between any two centers) can be arbitrarily large. We also have a better dependence on $k$ and $d$. 

Next, we analyse the recovery guarentees for $\mc A'$ in the presence of noisy points $\mc N$. We decompose the noisy points into two disjoint sets $\mc N_1$ and $\mc N_2$. The set $\mc N_2$ consists of all the points which are far from any of the points in $\mc I$. The set $\mc N_1$ consists of points which are close to atleast one of the clusters. We also require that any point in $\mc N_1$ has an $\alpha$-margin w.r.t to the centers of the balls $B_1, \ldots, B_k$. That is the difference of the distance between any point in $\mc N_1$ to a cluster center is atleast $\alpha$. Now, we will show that if $\mc N$ has the aforementioned properties then $\mc I$ is robust w.r.t the regularised SDP algorithm $\mc A'$.

\begin{theorem}
\label{thm:regularisedSDPGeneral}
Given a clustering instance $\mc X \subset \mb R^{N \times d}$ and $k$. Let $\mc X : = \mc I \cup \mc N$. Let $\mc I := \cup_{i=1}^k B_i$ where $B_i$ is a ball of radius at most one centered at $\mu_i$ and $\|\mu_i - \mu_j\| > \delta$. Let $\mc I' = \cup_i B_i'$ where $B_i' := \{x - \mu_i : x \in B_i\}$.  Let $\mc N = \mc N_1 \cup \mc N_2$ have the following properties. For all $p \in \mc N_1$ and for all $i, j$, we have that $| \|p- \mu_i\| - \|p- \mu_j\|| \ge \alpha$. For all $p \in \mc N_2$ and for all $x \in I, \|p - x\| \ge 2\delta$. Note that $\mc N_1 \cap \mc N_2 = \phi$. Let $n = \min_i |B_i|$ and $\epsilon = \frac{|\mc N_1|}{n}$. If $\frac{|\mc N_2|}{n} \le \frac{\delta^2(1-4\epsilon^2)-2\delta(1+4\epsilon)}{\lambda}$ and 
\begin{enumerate}[leftmargin=*,nolistsep,noitemsep]
	\item \textbf{Distribution free} - If  the distance between the centers of any two balls
	\begin{align*}
		\delta > 2 + \frac{9\epsilon}{1-4\epsilon^2} + \sqrt{\frac{2\sigma_{\max}^2(I')}{n(1-4\epsilon^2)}} \text{ and }
		\alpha \ge \sqrt{10\delta \epsilon + 4\delta^2 \epsilon^2+ \frac{2\sigma_{\max}^2(|I'|)}{n(1-4\epsilon^2)}}
	\end{align*} 
	then the regularised $k$-means SDP finds the intended cluster solution  $\mc C^* = \{C_1, \ldots, C_k, \mc N_2\}$ where $B_i \subseteq C_i$ when given $\mc X$ and $\delta^2+2\delta \ge \lambda \ge (\delta-1)^2 + 1$ as input.
	\item  \textbf{Stochastic ball assumption} - Let $\mc P$ denote the isotropic distribution on the unit ball centered at origin. Given centers $c_1, \ldots, c_k$ such that $\|c_i - c_j\| > \delta > 2$. Let $\mc P_i$ be the measure $\mc P$ translated with respect to the center $c_i$. If each $B_i$ is drawn i.i.d w.r.t the distribution $\mc P_i$ 
	\begin{align*}
	   \delta > 2 + \frac{9\epsilon}{1-4\epsilon^2} + \sqrt{\frac{2\rho k\theta(1+\frac{1}{\log|I|})^2}{d(1-4\epsilon^2)}} \text{ and }
	  \alpha \ge \sqrt{10\delta \epsilon + 4\delta^2 \epsilon^2+ \frac{2\rho k\theta(1+\frac{1}{\log|I|})^2}{d(1-4\epsilon^2)}}
	\end{align*}
	then there exists a constant $c_2 > 0$ such that with probability at least $1 - 2d\exp(\frac{-c_2|I|\theta}{d\log^2|I|})$ the regularised $k$-means SDP finds the intended cluster solution  $\mc C^* = \{C_1, \ldots, C_k, \mc N_2\}$ where $B_i \subseteq C_i$ when given $\mc X$ and $\delta^2+2\delta \ge \lambda \ge (\delta-1)^2 + 1$ as input.
\end{enumerate}
\end{theorem}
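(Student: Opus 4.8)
The plan is to prove that the intended primal solution of the SDP in \eqref{eqn:regularisedSDP} is optimal by exhibiting an explicit dual certificate, following the Lagrangian-duality strategy behind the noiseless Theorem~\ref{thm:SDPGeneral} but augmented to accommodate the two species of noise. Let $(Z^*,y^*)$ be the primal point associated with $\mc C^* = \{C_1,\dots,C_k,\mc N_2\}$: on each genuine cluster $C_i \supseteq B_i$, which also absorbs each nearby $\mc N_1$ point into the cluster of its nearest center, set the block $Z^*_{C_i} = \frac{1}{|C_i|}\mb 1\mb 1^T$; set all rows and columns indexed by $\mc N_2$ to zero; and set $y^*_p = 1$ for $p \in \mc N_2$ and $y^*_p = 0$ otherwise. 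This point is feasible by construction, so it suffices to certify that it minimizes the objective.

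First I would form the Lagrangian dual, introducing a scalar multiplier for $\tr(Z)=k$, a vector multiplier $\beta$ for $\big(\frac{Z+Z^T}{2}\big)\mb 1 + y = \mb 1$, an entrywise-nonnegative matrix for $Z \ge 0$, a nonnegative vector for $y \ge 0$, and a PSD multiplier for $Z \succeq 0$. Stationarity in $Z$ collapses the certificate to a single dual-slack matrix $Q$ (assembled from $D$, the trace multiplier, and $\beta$) that must satisfy $Q \succeq 0$ together with the complementary-slackness relations $Q\,\mb 1_{C_i} = 0$ on every genuine cluster and $Q_{pq}\ge 0$ whenever $p,q$ lie in distinct clusters. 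Stationarity in $y$ forces $\beta_p = \lambda$ on $\mc N_2$ and $\beta_p \le \lambda$ on the clustered points; substituting the explicit values of $\beta$ into these relations is what carves out the window $(\delta-1)^2 + 1 \le \lambda \le \delta^2 + 2\delta$ and the budget $\frac{|\mc N_2|}{n} \le \frac{\delta^2(1-4\epsilon^2)-2\delta(1+4\epsilon)}{\lambda}$: the lower bound on $\lambda$ keeps every point of $\mc I \cup \mc N_1$ (whose marginal assignment cost is at most $(\delta-1)^2+1$) out of the garbage cluster, while the upper bound together with the $2\delta$ separation of $\mc N_2$ forces every far point into it, and the budget controls the aggregate regularization mass so the certificate survives.

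The heart of the argument, and the step I expect to be the main obstacle, is verifying $Q \succeq 0$, i.e.\ that apart from the designed kernel vectors $\mb 1_{C_i}$ the matrix $Q$ admits no negative direction. As in the noiseless proof I would split $Q$ into a block-diagonal ``ideal'' part, positive on each cluster's orthogonal complement with a spectral gap governed by the separation $\delta$, plus a perturbation whose operator norm is controlled by the within-cluster spread of the recentered data, namely $\sigma_{\max}(\mc I')$. The absorbed $\mc N_1$ points enter this perturbation in two ways: they inflate the cluster sizes (from $|B_i|\ge n$) by a factor $1+O(\epsilon)$, which after pairing terms produces the factors $\tfrac{1}{1-4\epsilon^2}$ and the additive $\tfrac{9\epsilon}{1-4\epsilon^2}$ in the separation requirement; and their off-center location could open a negative direction unless each is decisively closer to a single center, which is exactly what the margin hypothesis $|\,\|p-\mu_i\|-\|p-\mu_j\|\,|\ge\alpha$ guarantees through the stated lower bound on $\alpha$ (it forces the cross-cluster entries $Q_{pq}\ge 0$ with slack). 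Balancing the ideal gap against the perturbation norm yields precisely the distribution-free hypothesis $\delta > 2 + \frac{9\epsilon}{1-4\epsilon^2} + \sqrt{\frac{2\sigma_{\max}^2(\mc I')}{n(1-4\epsilon^2)}}$, closing part~1.

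For the stochastic-ball case the only change is to replace the deterministic quantity $\sigma_{\max}^2(\mc I')/n$ by a high-probability bound. Since each $B_i'$ is an i.i.d.\ sample from an isotropic law on the unit ball, $\tfrac1n\sigma_{\max}^2(\mc I')$ concentrates near $\tfrac{\theta}{d}$ up to lower-order terms, and I would invoke the same matrix (non-commutative Bernstein) concentration inequality used in Theorem~\ref{thm:SDPGeneral} to obtain $\sigma_{\max}^2(\mc I') \le \rho k\theta\big(1+\tfrac{1}{\log|I|}\big)^2$ on an event of probability at least $1 - 2d\exp\!\big(\tfrac{-c_2|I|\theta}{d\log^2|I|}\big)$. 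Substituting this bound into the deterministic separation and margin conditions of part~1 gives the claimed probabilistic guarantee and completes the proof.
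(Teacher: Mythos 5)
Your proposal follows essentially the same route as the paper's proof: a dual certificate for the regularised SDP, complementary slackness fixing the multipliers, the $\lambda$-window coming from the dual constraint tied to $y$, positive semidefiniteness of $Q$ via a block decomposition whose perturbation is controlled by $\sigma_{\max}(\mc I')$ (with the $\mc N_1$ margin condition handling the absorbed noise points), and matrix concentration for the stochastic-ball case. The only cosmetic difference is that the paper obtains the $|\mc N_2|$ budget from positive semidefiniteness of the $\mc N_2\times\mc N_2$ block of $Q$ (which after choosing $\beta^{(k+1,k+1)}=D^{(k+1,k+1)}$ reduces to $z\ge\lambda|\mc N_2|$) rather than from the stationarity condition in $y$, but this does not change the substance of the argument.
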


The proof of both the Thms. \ref{thm:SDPGeneral} and \ref{thm:regularisedSDPGeneral} use the following ideas. We construct a dual for the SDP. We then show that when the conditions of our theorems are satisfied then there exists a feasible solution for the dual program. Moreover, the objective function value of primal and dual sdp program are the same. Hence, the solution found is indeed optimal. The same idea was also used in the proof of Thm. 11 in \cite{awasthi2015relax}. However, our analysis is tighter which helps us to obtain better bounds. The details are in the supplementary section.

We have also developed a regularised version of the $k$-means LP based algorithm. However, due to space constraints we could not include it in the main version of the paper. However, the details and the robustness guarentees for the LP based algorithm are in the appendix. 

\section{Experiments}
We ran several experiments to analyse the performance of our regularised $k$-means algorithm. The first set of experiements were simulations done on synthetic data. The second set of experiments were done on real world datasets like MNIST where we compared the performance of our algorithm against other popular clustering algorithms like $k$-means++. All our experiments were run on Matlab. We solved the SDP formulation using the Matlab SDPNAL+ package \cite{yang2015sdpnal+}. To run $k$-means++ we used the standard implementation of the algorithm available on Matlab.

\subsection{Simulation studies}
The goal of these sets of experiments was to understand the effect of different parameters on the performance of the regularised SDP algorithm. Given the number of clusters $k$, the separation between the clusters $\delta$, the dimension of the space $d$, the number of points in each cluster $n$ and the number of noisy points $m$. We generate a clustering instance $\mc X$ in $\mb R^d$ as follows. We first pick $k$ seed points $\mu_1, \ldots, \mu_k$ such that each of these points are separated by atleast $\delta$. Next we generate $n$ points in the unit ball centered at each of the $\mu_i's$. Finally we add $m$ points uniformly at random. 

We analyse the performance of the regularised SDP algorithm as the parameters change. The most crucial amongst them is the separation between the clusters $\delta$, the regularization constant $\lambda$ and the dimension $d$. Fig. \ref{figure:simulation} shows the heatmap under different parameter values. For each setting of the parameters, we generated 50 random clustering instances. We then calculated the fraction of times the regularised sdp was able to recover the true clustering of the data. If the fraction is close to one, then its color on the plot is light. Darker colors represent values close to zero.

We see an interesting transition for $\lambda$. When $\lambda$ is `too small' then the probability of recovering the true clustering is also low. As $\lambda$ increases the probability of success goes up which are represented by the light colors. However, if we increase $\lambda$ to a very high value then the success probability again goes down. This shows that there is a `right' range of $\lambda$ as was also predicted by our theoretical analysis. 

Another parameter of interest is the dimension of the space $d$. Note that from our theoretical analysis, we know that both the probability of success and the separation depend on $d$. Fig. \ref{figure:simulation} shows that for very low dimension, the regularised sdp fails to perfectly recover the underlying clustering. However, as the dimension grows so does the probability of success. For these two simulations, we fixed the number of points per cluster $n = 30$, $k = 8$ and the number of noisy points $m = 30$. We have similar plots for $(\delta, n)$ and $(\delta, k)$ and $(\delta, m)$ and $(n, m)$. These plots very mostly light colored as long as the number of noisy points was not too large ($\frac{m}{n} \le 5$). Hence, due to space constraints, we have included them only in the supplementary section.
   
\begin{figure}[t]
  \label{figure:simulation}
  \centering
  \includegraphics[width=0.4\textwidth]{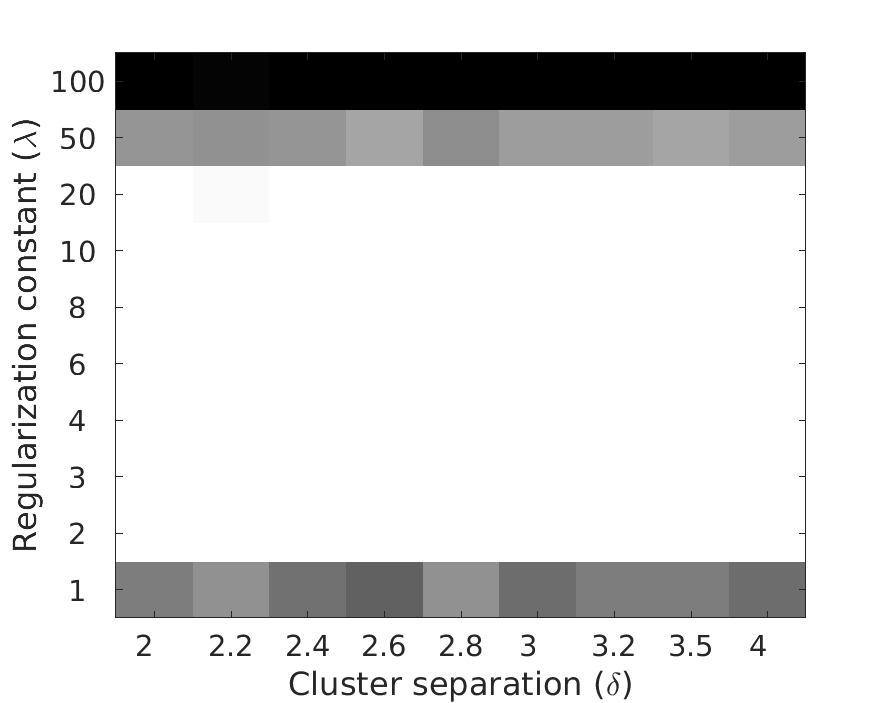}
  \includegraphics[width=0.4\textwidth]{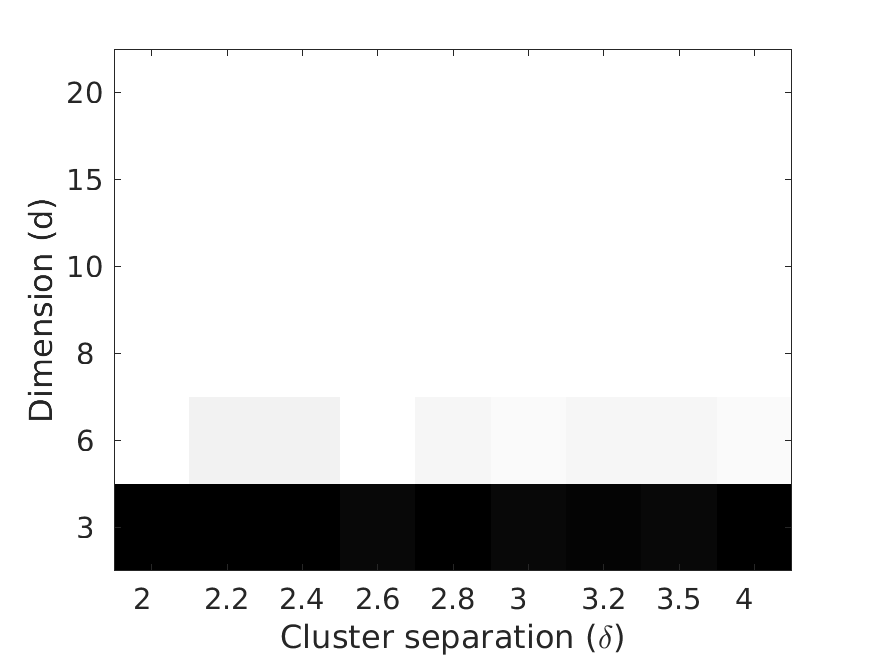}
  \caption{Heatmap showing the probability of success of the $k$-means regularised sdp algorithm. Lighter color indicates probability closer to one while darker indicates probability closer to zero.}  
\end{figure}

\subsection{Results on MNIST dataset}
We compare our regularised SDP algorithm against $k$-means++ on the MNIST dataset. MNIST is a dataset of images of handwritten digits from zero to nine. It contains 60,000 training images and 10,000 test images. We choose $k = 4$ different classes and randomly sample a total of $N = 1,000$ images from these classes.  We then run both our regularised SDP algorithm and the $k$-means++ algorithm on this dataset. We repeat this process for 10 different random samples of MNIST. We measure the performance of the two algorithms in terms of the precision and recall over the pairs of points in the same cluster. Given a clustering $\mc C$ and some target clustering $C^*$. Define the precision $p$ of $\mc C$ as the fraction of pairs that were in the same cluster according to $\mc C^*$ given that they were in the same cluster according to $\mc C$. The recall $r$ of $\mc C$ is the fraction of pairs that were in the same clustering according to $\mc C$ given that they were in the same cluster according to $\mc C^*$. We finally measure the $f_1$ score of the clustering $\mc C$ as the harmonic mean of its precision and recall. $f_1 = \frac{2pr}{p+r}$. 

Note that the regularised algorithm outputs $k+1$ clusters. Hence, to make a fair comparison, we finally assign each point in the noisy cluster ($C_{k+1}$) to one of the clusters $C_1, \ldots, C_k$ depending upon the distance of the point to the clusters. Another point is that the $f_1$ measures are sensitive to the choice of the $k$ digits or classes. For some choice of $k$ classes, the $f_1$ measures for both the algorithms are higher than compared to other classes. This shows that some classes are more difficult to cluster than other classes. Hence, we only report the difference in performance of the two algorithms. 

We report the performance on datasets with and without noisy points. The first is when there are no outliers or noisy points. In this case, the difference in the $f_1$ values was about $4.34\%$ in favor of $k$-means++. We then added noisy points to the dataset. In the first case, we added images from different datasets like EMNIST (images of handwritten letters). In this case, the difference was $2.54\%$ in favor of the regularised algorithm. In the second case, besides images from different datasets, we also added a few random noisy points to the MNIST dataset. In this case, the difference increased further to about $6.9\%$ in favor of the regularised algorithm. 

\section{Conclusion}
We introduced a regularisation paradigm which can transform any center-based clustering objective to one that is more robust to the addition of noisy points.  We proved that regularised objective is NP-Hard for common cost functions like $k$-means. We then obtained regularised versions of an existing clustering algorithm based on convex (sdp) relaxation of the $k$-means cost. We then proved noise robustness guarentees for the regularised algorithm. The proof improved existing bounds (in terms of cluster separation) for sdp-based standard (non-regularised) $k$-means algorithm. Our experiments showed that regularised sdp-based $k$-means performed better than existing algorithms like $k$-means++ on MNIST especially in the presence of noisy points.
 
\subsection*{Acknowledgements}
We would like to thank Nicole McNabb for helpful discussions on the topics in this paper.
\bibliographystyle{alpha}
\bibliography{optimizationClustering}

\appendix
\section{Proof of theorems and lemmas}
\subsection{Hardness of regularised $k$-means}
\label{a-section:hardness}
\begin{theorem}
\label{a-theorem:hardFork1Fixed}
Given a clustering instance $\mc X \subset \mb R^{d}$, define $m(\mc X) := \min_{x \neq y \in \mc X} \|x-y\|_{2}^2$ and $n := |\mc X|$. Finding the optimal solution to the regularised $1$-means objective is NP-Hard for $\frac{m(\mc X)}{2} < \lambda < \frac{m(\mc X)}{2} + \frac{1}{2n^2(n-1)}$. 

In particular, the optimization problem is NP-Hard for $\lambda = \lambda_0(\mc X) := \frac{m(\mc X)}{2} + \frac{1}{4n^3}$.
\end{theorem}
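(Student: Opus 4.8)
The plan is to prove NP-hardness via a polynomial-time reduction from \textbf{MAX-CLIQUE}. Given a graph $G = (V,E)$ on $n$ vertices with at least one edge, I would build a point set $\mc X = \{x_1,\dots,x_n\} \subset \mb R^d$ whose pairwise squared distances realize the matrix $D$ defined by $D_{ii} = 0$, $D_{ij} = 1$ when $\{i,j\} \in E$, and $D_{ij} = 1 + \Delta$ when $\{i,j\} \notin E$, for the small parameter $\Delta = \tfrac{1}{2n}$. The first thing to check is that $D$ is a genuine squared-Euclidean distance matrix. Writing $J = I - \tfrac1n \mb 1 \mb 1^T$ for the centering matrix and $B$ for the adjacency matrix of the complement graph, one computes $-\tfrac12 J D J = \tfrac12 J - \tfrac{\Delta}{2} J B J$, which is positive semidefinite whenever $\Delta \le 1/\lambda_{\max}(JBJ)$. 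Since the eigenvalues of $B$ lie in $[-(n-1), n-1]$ and compression by $J$ cannot increase the top eigenvalue, the choice $\Delta = \tfrac{1}{2n}$ is comfortably in range, so Schoenberg's criterion yields an embedding into some $\mb R^d$ with $d \le n-1$. Because every edge gives squared distance $1$ and every non-edge gives $1 + \Delta > 1$, we have $m(\mc X) = 1$, so the hypothesis $\lambda > m(\mc X)/2$ reads $\lambda > \tfrac12$.

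Next I would write down the cost of any candidate inlier set. Since the regularised $1$-means objective optimizes the single center freely, the identity $\min_c \sum_{x \in S}\|x-c\|^2 = \tfrac{1}{2|S|}\sum_{x,y\in S}\|x-y\|^2$ from the preliminaries gives, for a subset $S \subseteq \mc X$ of size $s$ kept as the cluster (the rest paying the penalty),
\begin{align*}
\text{cost}(S) = \frac{s-1}{2} + \frac{\Delta}{s}\,M(S) + \lambda(n-s),
\end{align*}
where $M(S)$ denotes the number of non-adjacent pairs inside $S$ (the ``missing edges''); this follows by counting that the $\binom{s}{2} - M(S)$ true edges contribute $1$ and the $M(S)$ missing edges contribute $1+\Delta$. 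In particular $M(S) = 0$ exactly when $S$ is a clique, whereupon the cost collapses to $\tfrac{s-1}{2} + \lambda(n-s)$.

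The crux of the argument is an exchange step showing the global minimizer must be a clique. Suppose $S$ is not a clique, so $M(S) \ge 1$, and let $v \in S$ be a vertex with the largest number $t$ of non-neighbors inside $S$. Since $\sum_{u\in S} t_u = 2M(S)$, averaging gives $t \ge 2M(S)/s$, hence $M(S) - s t \le -M(S) \le -1$. Removing $v$ then changes the cost by exactly
\begin{align*}
\text{cost}(S\setminus\{v\}) - \text{cost}(S) = \Big(\lambda - \tfrac12\Big) + \Delta\cdot\frac{M(S) - s t}{s(s-1)} \le \Big(\lambda - \tfrac12\Big) - \frac{\Delta}{s(s-1)},
\end{align*}
which is strictly negative as soon as $\Delta > s(s-1)(\lambda - \tfrac12)$. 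Because $s \le n$ and the theorem restricts $\lambda - \tfrac12 < \tfrac{1}{2n^2(n-1)}$, we get $s(s-1)(\lambda-\tfrac12) < n(n-1)\cdot\tfrac{1}{2n^2(n-1)} = \tfrac{1}{2n} = \Delta$, so the removal strictly decreases the cost. Iterating, any minimizer is a clique; and among cliques the cost $\tfrac{s-1}{2} + \lambda(n-s)$ is strictly decreasing in $s$ because $\lambda > \tfrac12$, so the optimal inlier set is precisely a maximum clique of $G$. Hence an exact solver for regularised $1$-means recovers a maximum clique in polynomial time, proving NP-hardness; and one checks $\lambda_0 = \tfrac12 + \tfrac{1}{4n^3}$ lies in $(\tfrac12, \tfrac12 + \tfrac{1}{2n^2(n-1)})$, giving the ``in particular'' claim.

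The main obstacle is calibrating $\Delta$ to serve two opposing constraints simultaneously: it must be large enough that the missing-edge penalty dominates the gain $\lambda - \tfrac12$ from retaining extra vertices (forcing the minimizer to be a clique), yet small enough that $D$ remains Euclidean-embeddable. The theorem's narrow window $\lambda - \tfrac12 < \tfrac{1}{2n^2(n-1)}$ is exactly what makes the single choice $\Delta = \tfrac1{2n}$ satisfy both across the whole admissible $\lambda$-range, and verifying that this window is nonempty and uniform is the delicate point; the remainder is bookkeeping with the cost formula.
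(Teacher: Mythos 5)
Your proposal is correct and rests on the same reduction as the paper: both start from MAX\nobreakdash-CLIQUE, assign squared distance $1$ to edges and $1+\Delta$ to non-edges for a small $\Delta$ with $n\Delta<1$, and verify Euclidean embeddability via the Schoenberg criterion (your $-\tfrac12 JDJ=\tfrac12 J-\tfrac{\Delta}{2}JBJ$ computation is the same PSD check as the paper's $u^TDu\le -(1-n\Delta)\|u\|^2$ lemma). Where you genuinely differ is in how optimality is certified. The paper proves a decision-version equivalence --- $G$ has a clique of size at least $q$ iff the instance admits a solution of cost at most $\tfrac{q-1}{4}+\lambda(n-q)$ --- via a three-way case analysis on $|C_1|$ versus $q$. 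You instead characterize the global minimizer outright: the exchange step (remove the vertex with the most non-neighbours in $S$, use $t\ge 2M(S)/s$ so the cost change is at most $(\lambda-\tfrac12)-\tfrac{\Delta}{s(s-1)}<0$) forces the optimal inlier set to be a clique, and monotonicity of $\tfrac{s-1}{2}+\lambda(n-s)$ for $\lambda>\tfrac12$ forces it to be a maximum clique. Your route buys two things: it makes the role of the narrow window $\lambda-\tfrac12<\tfrac{1}{2n^2(n-1)}$ fully explicit (it is precisely what lets the single choice $\Delta=\tfrac1{2n}$ dominate $s(s-1)(\lambda-\tfrac12)$ for every $s\le n$ while keeping $n\Delta<1$), and it gives the slightly stronger conclusion that the optimizer itself is a maximum clique rather than only that the optimal cost crosses a threshold. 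It also avoids the paper's inconsistent bookkeeping between ordered and unordered pairs (the target cost uses $\tfrac{q-1}{4}$ while the case analysis uses $\tfrac{n'-1}{2}$); your cost formula $\tfrac{s-1}{2}+\tfrac{\Delta}{s}M(S)+\lambda(n-s)$ is the correct instantiation of the paper's own identity for $\min_c\sum_{x\in S}\|x-c\|^2$. The paper's case analysis is the more conventional decision-problem reduction, but both arguments are sound and establish the stated hardness.
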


\begin{proof}
The proof uses reduction from the clique problem. Given a graph $G = (V, E)$ and an integer $q$, the clique problem asks the following: does there exist a clique in $G$ of size at least $q$?

Given an instance of the clique problem, we construct an instance of regularised $1$-means as follows. For every $v \in V$, construct $x_v \in \mc X$. Define the metric as 
\[
d^2(x_i, x_j) = 
\begin{cases}
1 \hspace{0.51in}\text{if }(i, j) \in E\\
1 + \Delta \hspace{0.221in}\text{if }(i, j) \not\in E\numberthis\label{a-eqn:dmetric}
\end{cases}
\]
where $0 < n\Delta < 1 $. Now, we will show that $G$ has a clique of size $ \ge q$ $\iff$$\mc X$ has a clustering of cost $\le c = \frac{q-1}{4} + \lambda (n - q)$.

$\Longrightarrow$ Assume $G$ has a clique of size at least $q$. Assign all points in the clique to $C_1$ and the remaining points to $C_2$. This clustering has cost as desired.

$\Longleftarrow$ Let $|C_1| = n'$. Now, there are three possibilities.

Case 1: $n' > q$. If all the distances in $C_1$ are 1, then the cost of the clustering is $\frac{n'-1}{4} + \lambda (n - n') \le c$ as $\lambda > \frac{1}{2}$. Thus, the vertices corresponding to the points in $C_1$ form a clique of size $n' > q$. If at least one distance is $1 + \Delta$ then the cost of the clustering is 
\begin{flalign*}
\frac{n'-1}{4} + \lambda (n - n') + \frac{\Delta}{n'} \le c \implies \lambda \ge \frac{1}{2} + \frac{\Delta}{n'(n'-q)}
\end{flalign*}
This is a contradiction because of the choice of $\lambda$.

Case 2: $n' < q$. If all the distances in $C_1 = 1$, then the cost of the clustering is $\frac{n'-1}{2} + \lambda (n - n') > c$. If atleast one distance is $1 + \Delta$ then the cost of the clustering is even greater as $\lambda > \frac{1}{2}$. 

Case 3: $n' = q$. If atleast one distance is $1 + \Delta$ then the cost of the clustering is $\frac{q-1}{2} + \lambda (n - q) + \frac{\Delta}{m} > c$ . 
Hence, the only possibility remains that $|C_1| = q$ and all the distances in $C_1 = 1$. Hence, $G$ has a clique of size $q$.
\end{proof}

\begin{lemma}
Let $d$ be as in Eqn. \ref{a-eqn:dmetric}. Then $d$ can be embedded into $\mb R^{|\mc X|}$.
\end{lemma}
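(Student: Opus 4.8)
The plan is to invoke the classical characterization of squared-Euclidean distance matrices (Schoenberg's criterion, equivalently classical multidimensional scaling). Index the points by $i \in \{1,\dots,n\}$ with $n := |\mc X|$, and form the symmetric hollow matrix $D$ with entries $D_{ij} = d^2(x_i,x_j)$ as in Eqn.~\eqref{a-eqn:dmetric}. Such a $D$ is the matrix of pairwise squared Euclidean distances of some $n$ points iff the doubly-centered matrix $B := -\tfrac12\, J D J$ is positive semidefinite, where $J := I - \tfrac1n \mb 1 \mb 1^T$ is the centering projection; in that case the points can be taken in $\mb R^{\mathrm{rank}(B)}$, and since $\mathrm{range}(B) \subseteq \mathrm{range}(J)$ we have $\mathrm{rank}(B) \le n-1$. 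Thus it suffices to prove $B \succeq 0$, and the embedding will automatically land in $\mb R^{n-1} \subseteq \mb R^{|\mc X|}$.

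Next I would compute $B$ explicitly for our $D$. Writing $A$ for the adjacency matrix of the complement graph $\bar G$ (so $A_{ij}=1$ exactly on the non-edges $(i,j)\notin E$, and $0$ on edges and the diagonal), we have $D = (\mb 1 \mb 1^T - I) + \Delta A$. Because $J\mb 1 = 0$, the rank-one term is annihilated: $J(\mb 1 \mb 1^T)J = 0$. Using $J^2 = J$, this collapses to $B = \tfrac12 J - \tfrac{\Delta}{2} J A J = J\big(\tfrac12 I - \tfrac{\Delta}{2} A\big)J$. Hence for every $u$ one has $u^T B u = (Ju)^T\big(\tfrac12 I - \tfrac{\Delta}{2}A\big)(Ju)$, so $B \succeq 0$ as soon as $M := \tfrac12 I - \tfrac{\Delta}{2} A \succeq 0$.

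It remains to verify that $M$ is positive semidefinite, which is the only place the hypothesis $0 < n\Delta < 1$ is used. The eigenvalues of $M$ are $\tfrac12 - \tfrac{\Delta}{2}\mu$ as $\mu$ ranges over the eigenvalues of $A$. Since $A$ is the adjacency matrix of a simple graph on $n$ vertices, its largest eigenvalue is bounded by the maximum degree, so $\mu \le n-1$ for every eigenvalue. The hypothesis gives $1/\Delta > n > n-1 \ge \mu$, whence $\tfrac12 - \tfrac{\Delta}{2}\mu > 0$ for all eigenvalues, i.e.\ $M \succ 0$. Consequently $B = JMJ \succeq 0$, the Schoenberg condition is met, and $d$ embeds isometrically into $\mb R^{|\mc X|}$.

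The genuinely substantive step is the spectral bound $\lambda_{\max}(A) \le n-1 < 1/\Delta$; everything else is bookkeeping with the centering matrix. I expect this to be the main (and only) obstacle, and it is controlled precisely by $n\Delta < 1$. I would note in passing that a coordinate-by-coordinate construction is tempting---the scaled simplex $x_i \mapsto e_i/\sqrt2$ already realizes all squared distances equal to $1$---but adding the extra $\Delta$ on the non-edges couples many pairs simultaneously, so the global positive-semidefiniteness argument above is cleaner than patching an explicit embedding by hand.
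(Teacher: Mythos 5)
Your proof is correct and follows essentially the same route as the paper: both verify Schoenberg's criterion (negative semidefiniteness of $D$ on the hyperplane $u^T\mb 1=0$, equivalently $-\tfrac12 JDJ\succeq 0$) and use the hypothesis $n\Delta<1$ to absorb the perturbation on the non-edges. The only cosmetic difference is that you control the $\Delta$-term via the spectral bound $\lambda_{\max}(A)\le n-1$ for the complement graph's adjacency matrix, whereas the paper bounds $\Delta\sum_{ij}|u_i||u_j|\le n\Delta\|u\|^2$ directly by Cauchy--Schwarz.
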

\begin{proof}
The proof of embedding is very similar to Thm. 8 in \cite{dasgupta2008hardness}. Using Cor. 7 in \cite{dasgupta2008hardness}, we know that $D$ can be embedded into $\mb R^{|\mc X|}$ if and only if $u^TDu \le 0$ for all $u^T1 = 0$.
\begin{align*}
u^TDu &= \sum_{ij} u_{i}d_{ij}u_{j} = \sum_{ij}u_iu_j \Big(1 - \mb 1(i=j) + \Delta \mb 1(i \not\sim_{E} j) \big)\\
&\le \Big(\sum_{i}u_i \Big)^2 - \sum_i u_i^2 +\Delta\sum_{ij}|u_i||u_j| \le -\|u\|^2 + |\mc X|\Delta \|u\|^2 = -(1-|\mc X|\Delta)\|u\|^2,
\end{align*}
which completes our proof.
\end{proof}

We now use Thm. \ref{a-theorem:hardFork1Fixed} to show that regularised $1$-means is Np-Hard for any $\lambda > m(\mc X) / 2$.

\begin{restatable}{theorem}{hardForkone}
\label{a-theorem:hardFork1}
Given a clustering instance $\mc X \subset \mb R^d$. Finding the optimal solution to the regularised $1$-means objective is NP-Hard for all $\lambda > m(\mc X) / 2$.
\end{restatable}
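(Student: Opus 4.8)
The plan is to bootstrap from Theorem~\ref{a-theorem:hardFork1Fixed}, which already pins down NP-hardness at a single parameter value $\lambda_0(\mc X)=\tfrac{m(\mc X)}{2}+\tfrac{1}{4n^3}$, and to spread this hardness across the whole half-line $\lambda>m(\mc X)/2$ by exploiting the homogeneity of the regularised $1$-means objective under rescaling of the data.

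First I would record the scaling identity. For any $t>0$, writing $t\mc X:=\{tx:x\in\mc X\}$ and $t\mc I:=\{tx:x\in\mc I\}$, every squared distance obeys $\|tx-tc\|_2^2=t^2\|x-c\|_2^2$, so
$$
\sum_{x\in t\mc I}\|x-tc\|_2^2+t^2\lambda\,|t\mc X\setminus t\mc I|
= t^2\Big(\sum_{x\in\mc I}\|x-c\|_2^2+\lambda\,|\mc X\setminus\mc I|\Big).
$$
Since scaling the objective by the positive constant $t^2$ leaves the minimiser untouched, solving regularised $1$-means on the instance $(t\mc X,\,t^2\lambda)$ is equivalent to solving it on $(\mc X,\lambda)$; the optimal center and optimal noise set simply scale by $t$. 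Moreover $m(t\mc X)=t^2m(\mc X)$, so $t^2\lambda>m(t\mc X)/2$ holds if and only if $\lambda>m(\mc X)/2$: the rescaled instance occupies the same relative position inside the admissible range.

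Next I would run the reduction at an arbitrary target value $\lambda^\star>0$. Given a clique instance on $n$ vertices, Theorem~\ref{a-theorem:hardFork1Fixed} and the subsequent embedding lemma produce, in polynomial time, points $\mc X\subset\mb R^{|\mc X|}$ with $m(\mc X)=1$ whose regularised $1$-means decides the clique question at $\lambda_0=\tfrac12+\tfrac{1}{4n^3}$. I then set $t=\sqrt{\lambda^\star/\lambda_0}$ and hand the solver the instance $(t\mc X,\lambda^\star)$, noting $t^2\lambda_0=\lambda^\star$. By the identity above its optimum coincides with that of $(\mc X,\lambda_0)$, hence decides the clique question. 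The hypothesis of the present theorem is satisfied for the constructed instance, because $m(t\mc X)/2=t^2/2=\lambda^\star/(2\lambda_0)<\lambda^\star$, where the strict inequality uses $\lambda_0>\tfrac12$. As $\lambda^\star$ was arbitrary, the problem is NP-hard throughout the regime $\lambda>m(\mc X)/2$, as claimed.

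The step that needs the most care — and the only real obstacle — is keeping the reduction polynomial and representable, since $t=\sqrt{\lambda^\star/\lambda_0}$ is typically irrational. Here I would use that both the $1$-means cost and the penalty depend on the data only through the pairwise squared-distance matrix $D$, via the identity $\min_c\sum_{x\in C}\|x-c\|_2^2=\tfrac{1}{2|C|}\sum_{x,y\in C}\|x-y\|_2^2$ recorded in the preliminaries. It therefore suffices to scale $D$ by the rational factor $t^2=\lambda^\star/\lambda_0$ (for rational $\lambda^\star$) and re-embed the rescaled matrix exactly as in the lemma following Theorem~\ref{a-theorem:hardFork1Fixed}, so the construction stays polynomial in the graph size and the equivalence of optima is exact rather than approximate.
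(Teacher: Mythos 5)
Your proposal is correct and follows essentially the same route as the paper: invoke the fixed-parameter hardness at $\lambda_0(\mc X)$ via the clique reduction, then transfer it to an arbitrary admissible $\lambda$ by exploiting the degree-two homogeneity of the objective under rescaling (the paper scales the metric by $1/\lambda'$ with $\lambda = \lambda'^2\lambda_0$, which is exactly your factor $t$). Your added care about keeping the scaled instance rationally representable, by rescaling the squared-distance matrix and re-embedding via the lemma, addresses a point the paper glosses over but does not change the argument.
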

\begin{proof}
Denote $d(x,y)$ to be the $l_2$ distance. Thm. \ref{a-theorem:hardFork1Fixed} showed that optimizing the problem  
\begin{align*}
\min_{C\subseteq \mc X} \enspace \sum_{x \in C}d^2(x, c) + \lambda_0(\mc X) \enspace |\mc X\setminus \mc C| \hspace{0.5in}\text{(FRM)}
\end{align*}
is NP-Hard when $\lambda_0(\mc X) := \frac{m(\mc X)}{2} +\frac{1}{4n^3}$. Given $\lambda>0$, we want to show that for all $\lambda > m(\mc X) / 2$, the following is also NP-Hard to optimize: 
\begin{align*}
\min_{C\subseteq \mc X} \enspace \sum_{x \in C}d^2(x, c) + \lambda \enspace |\mc X\setminus \mc C| \hspace{0.5in}\text{(GRM)}
\end{align*}

Let $\lambda' = \sqrt{n^3(4\lambda - 2m(\mc X))} > 0$, and define a new Euclidean distance $d'(x, y) = \frac{d(x,y)}{\lambda'}$. Then
\begin{flalign*}
\sum_{x \in C}d^2(x, c) + \lambda \enspace |\mc X\setminus \mc C| &= \lambda'^2\Bigg(\sum_C d'^2(x, c) + \frac{|\mc X \setminus \mc C|}{\lambda'^2} \Big(\frac{m(\mc X)}{2} +\frac{\lambda'^2}{4n^3}\Big)\Bigg)\\
&= \lambda'^2\Big(\sum_C d'^2(x, c) + |\mc X \setminus \mc C| \lambda_0(\mc X')\Big).&
\end{flalign*}
Hence, we see that GRM is equivalent to FRM which is NP-Hard.
\end{proof}

\subsection{The regularised $k$-means algorithm}
\label{a-section:heuristic}
 
\subsubsection{Equivalence of regularised $k$-means with $0,1$-SDP}
\label{a-subsection:modifiedkmeans01sdp}
We follow a similar technique to that of \cite{peng2007approximating} to translate equation \ref{eqn:modifiedkmeans} into a 0-1 SDP. We are given a set $\mc X$ with $n$ data points. The goal is to partition the set into $k$ clusters with the option of throwing some points into the garbage $k+1$ cluster. Let $S$ be an assignment matrix of size $n\times k$ and $y$ be an $n \times 1$ column vector that assigns points to the ``garbage'' cluster.  

 \[s_{ij} = 
    \begin{dcases}
		\hspace{0.1in} 1 \enspace \text{iff }x_i \in C_j, j \leq k\\
		\hspace{0.1in} 0 \enspace \text{otherwise}
	\end{dcases}
 \quad y_{i} = 
    \begin{dcases}
		\hspace{0.1in} 1 \enspace \text{iff }x_i \in C_{k+1}\\
		\hspace{0.1in} 0 \enspace \text{otherwise}
	\end{dcases}
\]

Provided that $C_i\ne\emptyset$, we have the following equality by expanding the mean $c_i = \frac{1}{|C_i|} \sum_{x\in C_i} x$:
\begin{align}
\label{a-eqn:kmeansobj1}
\sum_{x \in C_i} \|x-c_i\|^2 
&= \frac{1}{2|C_i|}\sum_{x, y \in C_i} \|x-y\|^2 
\end{align}

\noindent $c_j = \frac{\sum_{i} s_{ij} x_i}{\sum_{i} s_{ij}}$ is the average of points in the $j^{\text{th}}$ cluster. Hence,
\begin{align*}
	&\sum_j \sum_i s_{ij}(\langle c_j, c_j\rangle -2\langle x_i, c_j\rangle) = 	 \sum_j \langle\sum_i s_{ij} c_j, \enspace c_j\rangle - 2 \sum_j \langle\sum_i s_{ij} x_i, \enspace c_j\rangle \\
	&=\sum_j \langle c_j, \sum_i s_{ij}x_i\rangle - 2 \sum_j \langle\sum_i s_{ij} x_i, \enspace c_j\rangle = -\sum_j \langle\frac{\sum_i s_{ij} x_i}{\sum_i s_{ij}}, \sum_i s_{ij}x_i\rangle = -\sum_j \frac{\| \sum_i s_{ij} x_i\|^2}{\sum_i s_{ij}} \numberthis \label{a-eqn:kmeansobj2}
\end{align*}

\noindent Combining equations \ref{a-eqn:kmeansobj1} and \ref{a-eqn:kmeansobj2}, we get that 
\begin{align*}
  &\sum_{j=1}^k \sum_{x \in C_i} \|x-c_i\|^2 = \sum_{i=1}^n \|x_i\|^2 (1-y_i) - \sum_{j=1}^k \frac{\| \sum_i s_{ij} x_i\|^2}{\sum_i s_{ij}} = \sum_{i=1}^n \|x_i\|^2 (1-y_i) - \sum_{i=1}^n\sum_{j=1}^n \langle x_i, x_j\rangle Z_{ij} \label{a-eqn:halfcost}\numberthis
\end{align*}

where $Z = S (S^T S)^{-1} S^T$. Observe that if $x_i \not\in C_{k+1}$ then $Z_{ij} = \frac{1}{|S_{c(i)}|} \langle s_i, s_j\rangle$ where $S_{c(i)}$ denotes the size of the $i^{th}$ cluster. If $x_i \in C_{k+1}$ then $Z_{ij}=0$. Thus, 
\[
	Z_{ij} = 
	\begin{dcases}
	\frac{1}{|S_{c(i)}|} \langle s_i, s_j\rangle \quad &\text{if }y_i = 0\\
	0 \quad &\text{otherwise }
	\end{dcases}
\]
Observe that $Z_{ij} = Z_{ji}$ and
\begin{align*}
\tr(Z) &= \sum_i Z_{ii} = \sum_{x \not\in C_{k+1}} \frac{1}{|S_{c(i)}|} = k. 
\end{align*}
Also, we have that  and $\langle Z_i, \mb 1\rangle = \sum_{j} Z_{ij}$. If $y_i = 0$ then $\sum_{j}Z_{ij} = 0$ else $\sum_j Z_{ij} = \frac{1}{|S_{c(i)}|}\sum_j \langle s_i, s_j\rangle = 1$. Hence, we get that $\langle Z_i, \mb 1\rangle = \sum_j Z_{ij} = 1-y_i$, or equivalently, $Z\cdot \mb 1 + y = \mb 1$. Also, it is fairly easy to see that $Z^2 = Z$.

Let $D$ be a matrix such that $D_{ij} = d^2(x_i, x_j)$. Using the above properties of $Z$, we get that
\begin{align*}
&\tr(DZ) = \sum_{ij} \langle x_i - x_j, x_i-x_j\rangle z_{ij} = \sum_i\|x_i\|^2\sum_j z_{ij} + \sum_j\|x_j\|^2\sum_i z_{ij} -2\sum_{ij}\langle x_i, x_j\rangle z_{ij} \\
&= 2\Big( \sum_i\|x_i\|^2 (1-y_i) - \sum_{ij}\langle x_i, x_j\rangle z_{ij}\Big) = 2 \sum_{i=1}^k \sum_{x \in C_i} \|x-c_i\|^2 \text{ (using Eqn. \ref{a-eqn:halfcost})}
\end{align*}

Finally, observe that $\lambda |C_{k+1}| = \lambda\langle \mb 1, y\rangle$.

\begin{equation*}
	\begin{split}
	\textbf{0-1}\\
	\textbf{SDP}
  \end{split}
	\begin{cases}
		\min_{Z, y} \enspace &\tr(DZ) + \lambda \langle \mb 1, y\rangle\\
		\text{s.t. } \enspace &\tr(Z) = k\\
		& Z\cdot \mb 1 + y = \mb 1\\	
		&Z\ge 0, Z^2 = Z, Z^T = Z \\
		& y \in \{0, 1\}^n
	\end{cases}
	\xrightarrow{\text{relaxed}} \textbf{ SDP } 
	\begin{cases}
		\min_{Z, y} \enspace &\tr(DZ) + \lambda \langle \mb 1, y\rangle\\
        \text{s.t. } \enspace &\tr(Z) = k\\
		& \Big(\frac{Z+Z^T}{2}\Big)\cdot \mb 1 + y = \mb 1\\		
		&Z \ge 0, y \ge 0, Z \succeq 0 \numberthis\label{a-eqn:regularisedSDP}
	\end{cases}
\end{equation*}

\begin{restatable}{theorem}{modifiedkmeans}
\label{a-thm:modifiedkmeans}
Finding a solution to the 0-1 SDP (\ref{eqn:regularisedSDP}) is equivalent to finding a solution to the regularised $k$-means objective (\ref{eqn:modifiedkmeans}). 
\end{restatable}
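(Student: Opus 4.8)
The plan is to build an explicit, objective-matching correspondence between feasible clusterings of \eqref{eqn:modifiedkmeans} and feasible points of the 0-1 SDP in \eqref{a-eqn:regularisedSDP}, and then argue that optimisers coincide under it. The forward direction is the construction already performed above: from a partition $\{C_1,\dots,C_k,C_{k+1}\}$ with assignment matrix $S$ and garbage indicator $y$ I set $Z = S(S^TS)^{-1}S^T$ (well defined since every genuine cluster is nonempty, so $S^TS$ is invertible). I would verify feasibility directly from the identities already derived: $Z$ is symmetric and entrywise nonnegative, $Z^2 = Z$, $\tr(Z)=\sum_{x\notin C_{k+1}} 1/|S_{c(i)}| = k$, and $Z\cdot\mb 1 + y = \mb 1$. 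The objectives then match through $\tr(DZ) = 2\sum_{i=1}^k\sum_{x\in C_i}\|x-c_i\|^2$ from \eqref{a-eqn:halfcost} together with $\lambda\langle\mb 1,y\rangle = \lambda|C_{k+1}|$; I would note that the clustering term carries a factor of $2$, so exact identification of the two minimisation problems is obtained after the harmless rescaling that replaces $\lambda$ by $2\lambda$ in the SDP (equivalently, absorbing the factor into $D$).

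The substantive part is the converse: every feasible $(Z,y)$ of the 0-1 SDP comes from a partition, which I would establish by a structural characterisation of the admissible $Z$. Since $y\in\{0,1\}^n$ and $Z\cdot\mb 1 + y=\mb 1$, every row sum of $Z$ equals $1$ (if $y_i=0$) or $0$ (if $y_i=1$); entrywise nonnegativity plus symmetry turns a vanishing row sum into a vanishing row and column, so the indices with $y_i=1$ are exactly the garbage points and may be deleted. Because those rows and columns are entirely zero, the restriction $\tilde Z$ to the complementary indices is itself symmetric, entrywise nonnegative, idempotent ($\tilde Z^2=\tilde Z$ is inherited from the block form of $Z$), and has all row sums equal to $1$. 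I would then show $\tilde Z$ is, up to a common permutation, block diagonal with each block equal to $\frac{1}{m}\mb 1\mb 1^T$: writing $\tilde Z$ as block diagonal with respect to the connected components of its support graph, each diagonal block is an irreducible nonnegative idempotent matrix, so by Perron–Frobenius its eigenvalue $1$ is simple, forcing the block to be rank one; nonnegativity together with unit row sums then pins it to $\frac{1}{m}\mb 1\mb 1^T$. Reading off the blocks recovers clusters $C_1,\dots,C_k$, and $\tr(\tilde Z)=k$ fixes their number. This structural lemma — essentially the tightness statement underlying the 0-1 SDP of \cite{peng2007approximating}, here applied to $\tilde Z$ — is the main obstacle and the crux of the whole equivalence.

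Finally I would assemble the two directions: the correspondence is a bijection between partitions and feasible SDP points and preserves the objective (up to the factor of $2$ absorbed above), so a minimiser of one problem maps to a minimiser of the other. The only remaining bookkeeping is the treatment of empty genuine clusters, which I would dispose of by observing that collapsing such a cluster only lowers $\tr Z$ and hence cannot occur at a feasible point with $\tr Z=k$ unless some block is a singleton, a case already covered by the $\frac{1}{m}\mb 1\mb 1^T$ description with $m=1$.
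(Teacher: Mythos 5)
Your proposal is correct, and the forward direction coincides with the paper's (the paper likewise constructs $Z=S(S^TS)^{-1}S^T$ and verifies feasibility and cost from the identities for $\tr(DZ)$); you are in fact more careful than the paper about the factor of $2$, which the paper silently absorbs by computing the cost of the SDP solution as $\tfrac12\tr(DZ)+\lambda\langle\mb 1,y\rangle$ even though the stated objective is $\tr(DZ)+\lambda\langle\mb 1,y\rangle$. Where you genuinely diverge is the converse. The paper peels off one cluster at a time: it takes the maximal diagonal entry $Z_{i^*i^*}$, uses $(e_i-e_j)^TZ(e_i-e_j)\ge 0$ to get $Z_{ij}\le\max(Z_{ii},Z_{jj})$, and then plays the two identities $\sum_{j\in I}Z_{i^*j}=1$ and $\sum_{j\in I}Z_{i^*j}^2/Z_{i^*i^*}=1$ (the latter from $Z^2=Z$) against each other to force the row to be constant on its support, extracting one block $\frac{1}{|I|}\mb 1\mb 1^T$ and recursing on a residual problem with $\tr(Z')=k-1$. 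You instead delete the garbage rows globally (zero row sum plus nonnegativity kills the whole row and column), decompose the remainder by connected components of the support graph, and invoke Perron--Frobenius on each irreducible nonnegative idempotent block with unit row sums to conclude simplicity of the eigenvalue $1$, hence rank one, hence the block is $\frac{1}{m}\mb 1\mb 1^T$; the count of blocks then falls out of $\tr(\tilde Z)=k$ in one step rather than by induction. Both arguments establish the same structural characterisation; the paper's is more elementary and self-contained (no spectral theory beyond $Z\succeq 0$), while yours is shorter once Perron--Frobenius is granted and makes the role of idempotency plus stochasticity more transparent. The only loose ends in your write-up are bookkeeping ones you already flag: empty genuine clusters (which both you and the paper effectively exclude, since $S^TS$ must be invertible) and the $\lambda$ rescaling, neither of which affects the equivalence.
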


\begin{proof}
We will use the same proof ideas as in the proof of Theorem 2.2 in \cite{peng2007approximating}. However, we need to modify the proof slightly according to our formulation. From the discussion in the previous subsection, we can see that any solution for (\ref{eqn:modifiedkmeans}) implies a solution for the 0-1 SDP (\ref{eqn:regularisedSDP}) with same cost. Now, we will prove the other direction. Any solution for the 0-1 SDP implies a solution for for (\ref{eqn:modifiedkmeans}) with same cost.

Let $e_i$ be a vector with all zeros except in the $i^{\text{th}}$ index. Observe that $u_i^T Z u_i \ge 0$. Hence, $Z_{ii} \ge 0$. Similarly, $(e_i-e_j)^T Z (e_i-e_j) = Z_{ii} - 2Z_{ij} + Z_{jj}$. Hence, $Z_{ij} \le \max (Z_{ii}, Z_{jj})$. Let $Z_{i^*i^*} = \max_i Z_{ii}$. Hence, we have that for all $i, j$, $Z_{ij} \le Z_{i^* i^*}$. 

Suppose $Z_{i^*i^*} = 0$. Then, $Z = \textbf{0}$ and $y = \textbf{1}$. This implies that all points are assigned to the $k+1$ cluster. The cost of both the solutions in this case is $\lambda |C_{k+1}|$.

Now, suppose $Z_{i^*i^*}>0$. Let $I =\{j: Z_{i^* j} > 0\}$. Since, $Z^2 = Z$, we have that $Z_{i^*i^*} = \sum_{j=1}^n Z^2_{i^* j} = \sum_{j \in I} Z^2_{i^*j}$. Hence, $\sum_{j \in I}\frac{Z_{i^*j}}{Z_{i^*i^*}}Z_{i^*, j} = 1$. Also, we have that $\sum_{j} Z_{i^*j} + y_{i^*} = 1$. If $y_{i^*} = 1$, then $\sum_{j} Z_{i^*j} = 0$ which contradicts our assumption that $Z_{i^*i^*} > 0$. Hence, $y_{i^*} = 0$ and we have $\sum_{j} Z_{i^*j} = \sum_{j\in I} Z_{i^*j} = 1$. Since we have the following constraints,
\begin{align*}
	\sum_{j\in I} Z_{i^*j} = 1 \hspace{0.4in}\text{and} \hspace{0.4in}\sum_{j\in I} \frac{Z_{i^*j}}{Z_{i^*i^*}}Z_{i^*j} = 1
\end{align*}
$Z_{i^*j} = Z_{i^*i^*}$ for all $j \in I$. Hence, we see that the matrix $Z$ and the vector $y$ can be decomposed as 
\[ Z = 
\begin{bmatrix}
    Z_{II}  & 0 \\
    0       & Z'
\end{bmatrix}
\quad y=\begin{bmatrix}
    0 \\
    y' 
\end{bmatrix}
\]
where $Z_{II} = \frac{1}{|I|}\mb1_{|I|}\mb1_{|I|}^T$. Now, we can see that $\tr(Z') = k-1$ and
$$Z\mb 1 + a = \begin{bmatrix}1\\Z'\mb 1\end{bmatrix} + \begin{bmatrix}0\\a'\end{bmatrix} = \begin{bmatrix}1\\1\end{bmatrix}.$$

This implies that $Z'\cdot \mb 1 + y' = \textbf{1}$. Hence, the optimization problem now reduces to
 \[
    \begin{cases}
		\min_{Z',y'} \enspace &\tr(DZ') + \lambda\langle \cdot \mb 1, y'\rangle\\
		\text{subject to } \enspace &\tr(Z') = k-1\\
		&Z'\cdot \mb 1 + y' = \mb 1
	\end{cases}
\]
Repeating this process $k$ times, we get that $Z$ can be decomposed into $k$ non-zero block diagonal matrices and one zero block diagonal matrix. Hence, using this we construct a solution for the original clustering problem as follows. For all $i$, if the row $Z_i$ belongs to the $j^{th}$ diagonal block then $x_i$ is assigned to $C_i$. Given $Z$ and $a$, the cost of the 0-1 SDP solution is
\begin{align*}
\frac{1}{2} \tr(DZ) + \lambda\langle\mb 1, y\rangle &= \sum_{i=1}^k\sum_{x,y \in C_i} \frac{\|x_i-x_j\|^2}{2|C_i|} + \lambda|C_{k+1}| = \sum_{i=1}^k \sum_{x \in C_i} \|x-c_i\|^2 + \lambda |C_{k+1}|
\end{align*}
which is the same as the cost of the regularised $k$-means objective. Hence, from a feasible solution of (\ref{eqn:modifiedkmeans}), we can obtain a feasible solution of the 0-1 SDP of same cost.
\end{proof}

\subsection{Tightness of the SDP based algorithm}
\begin{equation*}
	\textbf{0-1 SDP} 
	\begin{cases}
		\min_{Z} \enspace &\tr(DZ) \\
		\text{subject to } \enspace &\tr(Z) = k\\
		& Z\cdot \mb 1 = \mb 1\\	
		&Z\ge 0, Z^2 = Z, Z^T = Z 
	\end{cases}
	\xrightarrow{\text{relaxed}} \textbf{ SDP } 
	\begin{cases}
		\min_{Z} \enspace &\tr(DZ)\\
        \text{subject to } \enspace &\tr(Z) = k\\
		& \Big(\frac{Z+Z^T}{2}\Big)\cdot \mb 1 = \mb 1\\		
		&Z \ge 0, Z \succeq 0 \numberthis\label{a-eqn:SDP}
	\end{cases}
\end{equation*}

We are given a set $\mc X \subset \mb R^d$. $\mc X$ can be covered by a set of $k$ ``well-separated'' balls. That is, $\mc X := \cup_{i=1}^k B_i$ where $B_i$ is a ball of radius at most $r$ centered at $\mu_i$ and $\|\mu_i - \mu_j\| \ge \delta r$. 
Define $n_i := |B_i|$ and $n := \min_{i\in[k]} n_i$ and $N = \sum_i n_i$. $D$ is an $N\times N$ matrix such that $D_{ij} = \|x_i -x_j\|^2$.  

The goal is to output a clustering $\mc C^*$ of $\mc X$ such that $C^* = \{B_1, B_2, \ldots, B_k\}$. From the way we constructed the 0-1 SDP, this corresponds to
\begin{align}
  Z^* = \sum_{p=1}^k \frac{\mb 1^p \mb 1^{p^T}}{n_p} \label{a-eqn:intendedsolution}
\end{align}
where $\mb 1^p$ is an $N$-dimensional indicator vector for the $p^{th}$ cluster. That is, $Z$ is a block diagonal matrix and consists of $k$ non-zero diagonal blocks. Observe that $Z$ consists of blocks $Z^{(p, q)}$. Also, $Z^{(p, p)} = \frac{1}{n_p}\mb 1\mb 1^T$ and for $p \neq q, Z^{(p, q)} = 0$. To prove that our SDP (Eqn. \ref{a-eqn:SDP}) finds this solution, we  will adopt the following strategy. We first construct a dual for Eqn. \ref{a-eqn:SDP}. We then show that under certain conditions on $\delta$ (well-separateness of the balls) the following happens. The primal objective value and the dual objective value are the same. Also, the corresponding $Z$ satisfies Eqn. \ref{a-eqn:intendedsolution}. 

Before, we describe the dual, lets introduce a bit of notation. We index every point as $(p, i)$ where $p$ denotes the ball (or cluster) to which it belongs and $i$ denotes the index within that ball. Observe that the distance matrix $D$ consists of blocks $D^{(p, q)}$ such that $D^{(p,q)}_{ij} = \|x_{(p,i)}-x_{(q,j)}\|^2_2$. Now, to construct the dual, we introduce variables $z, \alpha_{(p,i)}, \beta_{(p,i)(q,j)}$ and $Q$ for each of the constraints in the primal problem. 
\[\textbf{SDP Dual}
    \begin{cases}
		\max \enspace &-zk - \sum_{p=1}^{k}\sum_{i=1}^{n_p} \alpha_{(p,i)}\\
		\text{subject to } \\
		&Q = D + zI + \sum_p\sum_i \alpha_{(p,i)}A_{(p, i)} -\sum_{p,q}\sum_{i,j} \beta_{(p,i)(q,j)}E_{(p,i)(q,j)}\\
		&\beta \ge 0 \\
		&Q \succeq 0
	\end{cases}
	\label{a-eqn:SDPDual}
	\numberthis
\]
where, $A_{(p,i)} = \frac{1}{2}(e_{p,i}\mb 1^T + \mb 1 e_{p, i}^T)$ and $E_{(p,i)(q,j)} = e_{(p,i)}e_{(q,j)}^T$. $\mb 1$ is an $N$-dimensional vector of all ones while $e_{p,i}$ is the indicator vector with one in position $(p, i)$ and zeros elsewhere. Now, we will examine the conditions under which the dual objective value matches the primal objective such that all the constraints of the dual are satisfied. 
\subsubsection*{Complementary slackness}
We know that $\beta_{(p, i)(q, j)} Z_{(p, i)(q, j)} = 0$. Now, $Z^{(p, q)} = 0$ and $Z^{(p, p) \neq 0}$. Hence, we get that $\beta^{(p, p)} = 0$. Also, if we have that $Q^{(p,p)}\mb1 = 0$ then $\langle Q, Z \rangle = 0$. This ensures that the second complementary slackness condition is also satisfied. 

\subsubsection*{Some properties of the $Q$ matrix}
Before we proceed, let's examine some properties of the dual matrix $Q$. Observe that for all $1 \le p\neq q \le k$, 
\begin{alignat*}{1}
&Q^{(p,p)} = D^{(p,p)} + zI_{n_p} + \frac{1}{2}\sum_{i=1}^{n_p} \alpha_{(p,i)} (e_{i}\mb 1^T + \mb 1e_{i}^T)\\
&Q^{(p,q)} = D^{(p,q)} + \frac{1}{2}\sum_{i=1}^{n_p} \alpha_{(p,i)} e_{i}\mb 1^T + \frac{1}{2}\sum_{i=1}^{n_q}\alpha_{(q,i)}\mb 1e_{i}^T - \beta^{(p,q)}
\label{a-eqn:q}\numberthis
\end{alignat*} 

\subsubsection*{Dual matches intended primal solution}
Now, using the fact that $Q^{(p, p)}\mb 1 = 0$ implies that
\begin{align*}
  0 &= e_r^TD^{(p,p)}\mb1 + z + \frac{1}{2}\sum\alpha_{(p,i)} (e_r^Te_{i}\mb 1^T\mb1 + \mb e_r^T\mb1e_{i}^T\mb1) = e_r^TD^{(p,p)}\mb1 + z\\
  & + \frac{1}{2}\sum\alpha_{(p,i)} (e_r^Te_{i}n_p + 1) = e_r^TD^{(p,p)}\mb1 + z + \frac{1}{2}\sum\alpha_{(p,i)} +\frac{n_p\alpha_{(p,r)}}{2}.
\end{align*}
Summing over all $r$ and then substituiting back, we get that for all $1\le p \le k$ and for all $i$,  
\begin{align*}
  &\alpha_{(p, i)} = \frac{1^TD^{(p,p)}1}{n_p^2}-\frac{z}{n_p} -\frac{2e_i^TD^{(p,p)}1}{n_p}\\
  & = \frac{\sum_{i,j}\langle x_{pi}-x_{pj}, x_{pi}-x_{pj}\rangle -2n_p \sum_j\langle x_{pi}-x_{pj}, x_{pi}-x_{pj}\rangle}{n_p^2}-\frac{z}{n_p}\\
  &= \frac{-2n_p^2 \|x_{pi}\|^2 + 4n_p \langle x_{pi}, \sum_jx_{pj}\rangle - 2\langle \sum_i x_{pi}, \sum_j x_{pj}\rangle -zn_p}{n_p^2}\\
  &= -2\|x_{pi} - x_p\|^2-\frac{z}{n_p} \hspace{1in}\text{ ($x_p$ denotes the center of the $p^th$ cluster)}\numberthis \label{a-eqn:alphapi}
\end{align*}
Now, we have the value of $\alpha$ for all $1\le p \le k$ and for all $i$. Computing the objective function, we get that 
\begin{align*}
  kz + \sum_p\sum_i \alpha_{p,i} &= kz + \sum_{p}\frac{1^TD^{(p, p)}1}{n_p} - \sum_p z -2\sum_p \frac{1^TD^{(p,p)}1}{n_p} = -\sum_p \frac{\mb1^TD^{(a,a)}\mb1}{n_p}\\
  & = -\langle D, Z \rangle = -\tr(DZ)
\end{align*}
Hence, we see that for the intended solution, the primal and dual values are the same. Hence, solution is optimal. Now, the main question is to find $Q$ such that $Q$ is positive semi-definite while simultaneously ensuring that $\beta \ge 0$.  

\subsubsection*{Satisfying PSD for $Q$}
We already know that $Q^{(p,p)}\mb 1 = 0$. We will now try to ensure that $Q^{(p, q)}\mb 1 = 0$. As we will see later, this will help us to prove the positive semi-definiteness property for $Q$. Now, $Q^{(p,q)}1 = 0$ implies that for all $r$, we have $e_r^TQ^{(p,q)}1 = 0$.  
\begin{flalign*}
  &0 = e_r^TQ^{(p, q)}1 = \sum_s Q^{(p,q)}_{rs} = \sum_{s} D^{(p, q)}_{rs} + \frac{n_q \alpha_{(p,r)}}{2} + \frac{1}{2}\sum_{i=1}^{n_q}\alpha_{(q, i)} - \sum_{s}\beta^{(p, q)}_{rs}&
\end{flalign*}
It is always possible to satisfy the above equation by choosing $\beta_{rs}^{(p,q)}$ as long as it is greater than zero. That is we need that, 
\begin{align*}
  &\beta^{(p,q)}_{rs}  := \frac{\sum_{s} D^{(p, q)}_{rs}}{n_q} + \frac{\alpha_{(p,r)}}{2} + \frac{\sum_{i=1}^{n_q}\alpha_{(q, i)}}{2n_q} \ge 0\\
  &\iff \frac{\sum_{s} \|x_{pr}-x_{qs}\|^2}{n_q} \ge \frac{\sum_s\|x_{qs} - x_q\|^2}{n_q} + \|x_{pr} - x_p\|^2 + \frac{z}{2n_p} + \frac{z}{2n_q} \numberthis \label{a-eqn:qpq1tmp}
\end{align*}
Before, we go further lets examine,
\begin{align*}
  &\sum_{s} \|x_{pr}-x_{qs}\|^2 = n_q\|x_{pr}\|^2 - 2n_q\langle x_{pr}, x_q \rangle + \sum_s \|x_{qs}\|^2\\
  & = n_q \|x_{pr}-x_q\|^2 + \sum_s \langle x_{qs}, x_{qs}\rangle - n_q\langle x_q, x_q \rangle = n_q \|x_{pr}-x_q\|^2 + \sum_s \|x_{qs} - x_q \|^2
\end{align*} 
Substituting this in Eqn. \ref{a-eqn:qpq1tmp}, we get that it is always possible to satisfy $Q^{(p, q)}1 = 0$ as long as for all $r$, we have that 
\begin{align}
  \|x_{pr} - x_q\|^2 - \|x_{pr} - x_p\|^2 \ge \frac{z}{2n_p} + \frac{z}{2n_q}\label{a-eqn:zupper}
\end{align}
Also, note that from $\beta_{rs}^{(p,q)}$ as defined in Eqn. \ref{a-eqn:qpq1tmp}, we get that 
\begin{align}
  &Q_{rs}^{(p,q)} = D^{(p, q)}_{r,s} + \frac{1}{2}\alpha_{q, s} - \frac{1}{n_q}\sum_j D^{pq}_{rj} -\frac{1}{2}\frac{\sum_j \alpha_{qj}}{n_q} \hspace{0.3in}\text{ and from Eqn. \ref{a-eqn:q}}\\
  &Q_{rs}^{(p,p)} = D^{(p, p)}_{r,s} + \frac{1}{2}\alpha_{p, r} + \frac{1}{2}\alpha_{p, s} + z \mb 1_{[r=s]}
\end{align}
If Eqn.\ref{a-eqn:zupper} holds, then for all $1 \le p, q \le k$, we have that $Q^{(p,q)}\mb1 = 0$. Let $\mb 1_p$ denote the $N$-dimensional indicator vector for the $p^{th}$ cluster. Then, we see that for all $1 \le p \le k$, we have that $Q \mb 1_p = 0$. Let $V$ be the subspace spanned by these vectors. That is, $V = span\{1_p\}_{p=1}^k$. Then, for all $v \in V$, $v^T Q y = v^T \mb 0 = \mb 0$. Hence, we need to only show that for all $v \bot V$, $v^TQv \ge 0$. Let $v = [v_1, \ldots, v_k]^T$. Since, $v \bot V$, we know that for all $p$, $\langle v_p, 1\rangle = 0 = \sum_{r}v_{pr}$. Now,
\begin{align*}
  &v^TQv = \sum_{pq}\sum_{rs}x_{pr}Q^{(p, q)}_{rs}v_{qs} = \sum_{p \neq q}\sum_{rs}v_{pr}Q^{(p, q)}_{rs}v_{qs} + \sum_{p}\sum_{rs}v_{pr}Q^{(p, p)}_{rs}v_{qs}
\end{align*}
Now, we analyse the case when $p \neq q$. Then, we have that
\begin{align*}
  &\sum_{rs}v_{pr}Q^{pq}_{rs}v_{qs} = \sum_{rs}v_{pr}D^{pq}_{rs}v_{qs} + \frac{1}{2}\sum_{rs}\alpha_{qs}v_{qs}v_{pr} -\frac{1}{n_q}\sum_{rs} \sum_j v_{pr}v_{qs}D^{pq}_{rj} -\frac{1}{2n_q}\sum_{rs}\sum_jv_{pr}v_{qs}\alpha_{qj}\\
  &= \sum_{rs}v_{pr}v_{qs}D^{pq}_{rs} + \frac{1}{2}\sum_{s}\alpha_{qs}v_{qs}\sum_r v_{pr} -\frac{1}{n_q}\sum_{r}\sum_j D^{pq}_{rj}v_{pr}\sum_sv_{qs} -\frac{1}{2n_q}\sum_{s}\sum_jv_{qs}\alpha_{qj}\sum_{r}v_{pr}\\
  &= \sum_{rs}v_{pr}v_{qs}D^{pq}_{rs}
\end{align*}
Now for the other case, we have that
\begin{align*}
  \sum_{rs}v_{pr}Q^{pp}_{rs}v_{ps} &= \sum_{rs}v_{pr}D^{pp}_{rs}v_{ps} + \frac{1}{2}\sum_{rs}\alpha_{pr}v_{pr}v_{ps} + \frac{1}{2}\sum_{rs}\alpha_{ps}v_{pr}v_{ps} + \sum_r zv_{pr}v_{pr}\\
  &= \sum_{rs}v_{pr}D^{pp}_{rs}v_{ps} + \frac{1}{2}\sum_{r}\alpha_{pr}v_{pr}\sum_sv_{ps} + \frac{1}{2}\sum_{s}\alpha_{ps}v_{ps}\sum_r v_{pr} + \sum_r zv_{pr}v_{pr}\\
  &=\sum_{rs}v_{pr}D^{pp}_{rs}v_{ps} + \sum_r zv_{pr}v_{pr}
\end{align*}
Combining the above two equations, we get that 
\begin{align*}
  &v^TQv = \sum_{pq}v_{pr}D^{pq}v_{qs} + z\sum_p\sum_{r} (v_{pr})^2 = v^TDv + zv^Tv
\end{align*}
Now, let $X$ be the $N \times d$ dimensional input matrix. That is, the matrix $X$ contains the $N$ points in $d$ dimensional euclidean space. Then, $D = W + W^T - 2 XX^T$ where $W$ is a rank one matrix such that its $i^{th}$ contains $\|x_i\|^2$ in its $i^{th}$ row. That is, $W = \sum_i \|x_i\|^2 e_i\mb 1^T$. Now, $v \bot V$, hence we get that $v^TDv = -2v^TXX^Tv$. Thus, $Q$ is positive semi-definite as long as we can find $z$ such that  
\begin{align*}
  &z > 2 \max_{v\bot V} \frac{v^T XX^Tv}{v^Tv}. \label{a-eqn:zlower}\numberthis
\end{align*}

\subsubsection*{Putting it all together}
Eqns. \ref{a-eqn:zlower} and \ref{a-eqn:zupper}, show that as long as 
\begin{align}
  \|x_{pr} - x_q\|^2 - \|x_{pr}-x_p\|^2  > \frac{2}{n}\Big(\max_{v\bot V} \frac{v^T XX^Tv}{v^Tv}\Big)\label{a-eqn:mainConstraint}
\end{align}
  then we can find $Q$ and $\beta$ satisfying the constraints of the dual and there is no primal and dual gap. First observe that LHS of Eqn. \ref{a-eqn:mainConstraint} has a minimum of $(\delta - 1)^2 - 1$. Now, we need to upper bound the RHS of Eqn. \ref{a-eqn:mainConstraint}. Note that $X = X' + X$ where $C$ is a rank $k$ matrix which contains the centers $\mu_1, \ldots, \mu_k$. Also, for any $v \bot V$, we have that $v^TC = 0$. Let $\sigma_{\max}$ denote the maximum eigenvalue of the matrix $X$. Hence, 
\begin{align*}
  &\frac{2}{n}\Big(\max_{v\bot V} \frac{v^T XX^Tv}{v^Tv}\Big) = \frac{2}{n}\Big(\max_{v\bot V} \frac{v^T X'X'^Tv}{v^Tv}\Big) \le \frac{2}{n}\sigma_{\max}\big(X'\big)^2\numberthis\label{a-eqn:mainConstrintUpper}
\end{align*}
The last inequality follows from the Defn. of $\sigma_{\max}$. (Eqn. 5.3 in \cite{vershynin2010introduction}). We are now in a position to state our result.
  
\begin{theorem}
\label{a-thm:SDPGeneral}
Given a clustering instance $\mc X \subset \mb R^{N\times d}$ and $k$. Let $\mc X := \cup_{i=1}^k B_i$ where $B_i$ is a finite set of radius at most $r$ centered at $\mu_i$. That is, $B_i = \{x : \|x -\mu_i\| \le r\}$. Furthermore, let $\|\mu_i - \mu_j\| \ge \delta r$ and $ n := \min_{i} |B_i|$. Define $B_i' := \{x - \mu_i : x \in B_i\}.$  and $\mc X' = \cup B_i'$. If 
$$\delta > 1 + \sqrt{1+\frac{2\sigma_{\max}^2(\mc X')}{n}}$$ 
then the $k$-means SDP finds the intended cluster solution  $\mc C^* = \{B_1, \ldots, B_k\}$.
\end{theorem}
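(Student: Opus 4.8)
The plan is to prove tightness of the relaxed SDP \eqref{a-eqn:SDP} by a dual-certificate argument: I would show that the intended block-diagonal matrix $Z^*$ of \eqref{a-eqn:intendedsolution} is an optimal solution, which forces the relaxation to recover $\mc C^* = \{B_1,\ldots,B_k\}$. Since $Z^*$ is manifestly primal feasible, by weak duality it suffices to exhibit dual variables $(z,\alpha,\beta,Q)$ feasible for \eqref{a-eqn:SDPDual} whose objective equals $-\tr(DZ^*)$. All of the scaffolding for this is already in place, so the proof is mostly a matter of assembling the pieces and then verifying a single scalar inequality per triple $(p,q,r)$.

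Concretely, I would first fix $\alpha$ by \eqref{a-eqn:alphapi}; this is the choice that makes $Q^{(p,p)}\mb 1 = 0$ and, as already computed, makes the dual objective coincide with $-\tr(DZ^*)$, so there is no duality gap for any admissible $z$. For the off-diagonal blocks I would pick $\beta^{(p,q)}$ so that $Q^{(p,q)}\mb 1 = 0$; this is a legitimate (nonnegative) choice exactly when the separation inequality \eqref{a-eqn:zupper} holds. Once $Q\mb 1_p = 0$ for every $p$, the span $V = \mathrm{span}\{\mb 1_p\}_{p=1}^k$ lies in $\ker Q$, so $Q \succeq 0$ reduces to checking $v^T Q v \ge 0$ on $V^\perp$; there $v^T D v = -2\,v^T X X^T v$, which turns positive semidefiniteness into the lower bound \eqref{a-eqn:zlower} on $z$. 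Hence the entire certificate is feasible iff there is a scalar $z$ lying between the PSD lower bound and the $\beta \ge 0$ upper bound, and this gap is nonempty precisely when the master inequality \eqref{a-eqn:mainConstraint} holds for all $(p,q,r)$.

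It then remains to estimate the two sides of \eqref{a-eqn:mainConstraint}. For the left side I would use the triangle inequality with the radius bound $\|x-\mu_i\|\le 1$ and the separation $\|\mu_p-\mu_q\|\ge\delta$: since $\|x_{pr}-\mu_p\|\le 1$ and $\|x_{pr}-\mu_q\|\ge\delta-1$, the difference is at least $(\delta-1)^2-1$. For the right side I would invoke \eqref{a-eqn:mainConstrintUpper}: because every $v\perp V$ annihilates the center matrix $C$, one may replace $X$ by the recentered data $X'$, giving $\tfrac{2}{n}\max_{v\perp V}\tfrac{v^T X X^T v}{v^T v} = \tfrac{2}{n}\max_{v\perp V}\tfrac{v^T X'(X')^T v}{v^T v} \le \tfrac{2\sigma_{\max}^2(\mc X')}{n}$. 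Combining, \eqref{a-eqn:mainConstraint} is implied by $(\delta-1)^2-1 > \tfrac{2}{n}\sigma_{\max}^2(\mc X')$, and rearranging gives exactly $\delta > 1+\sqrt{1+2\sigma_{\max}^2(\mc X')/n}$ (the general radius $r$ following by scale invariance). I expect the real leverage to sit in the right-hand estimate: the recentering step, which discards $C$ using $v\perp V$, is what replaces the large $\sigma_{\max}(X)$ (dominated by the $\delta$-separated centers) with the much smaller within-cluster spread $\sigma_{\max}(\mc X')$, and is precisely the source of the sharpened separation requirement. The one bookkeeping subtlety I would watch is that $x_p$ in \eqref{a-eqn:zupper} is the empirical centroid rather than the true center $\mu_p$, so the clean bound $\|x_{pr}-x_p\|\le 1$ must be justified, the discrepancy between empirical and true centers being itself controlled by the within-cluster spread that already appears on the right side.
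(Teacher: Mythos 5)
Your proposal follows essentially the same dual-certificate argument as the paper: the same choice of $\alpha$ from $Q^{(p,p)}\mb 1=0$, the same nonnegative $\beta^{(p,q)}$ forcing $Q^{(p,q)}\mb 1=0$ under \eqref{a-eqn:zupper}, the same reduction of $Q\succeq 0$ to $V^{\perp}$ via $v^TDv=-2v^TXX^Tv$, and the same recentering bound $\tfrac{2}{n}\sigma_{\max}^2(\mc X')$ leading to \eqref{a-eqn:mainConstraint}. The centroid-versus-center subtlety you flag is resolved in the paper's setup by taking $\mu_i$ to be the empirical mean of $B_i$ (made explicit in the regularised version), so $x_p=\mu_p$ and $\|x_{pr}-x_p\|\le 1$ holds by assumption.
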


Note that Thm. \ref{a-thm:SDPGeneral} doesn't make any assumptions on the distribution that generated the points $\mc X$. In general, the eigenvalues of the matrix $\mc X'$ are bounded by $\sqrt{Nr}$. Let $\rho := \frac{N}{nk}$. $\rho$ measures the balance of the clusters. If all the clusters have the same size then $\rho = 1$. Using this notation, we get the following corollary
 
\begin{corollary}
\label{a-corollary:SDPGeneral}
Given a clustering instance $\mc X \subset \mb R^{N\times d}$ and $k$. Let $\mc X := \cup_{i=1}^k B_i$ where $B_i$ is a finite set of radius at most $r$ centered at $\mu_i$. That is, $B_i = \{x : \|x -\mu_i\| \le r\}$. Furthermore, let $\|\mu_i - \mu_j\| \ge \delta r$. If 
$$\delta > 1 + \sqrt{1+ 2r\rho k}$$ 
then the $k$-means SDP finds the intended cluster solution  $\mc C^* = \{B_1, \ldots, B_k\}$.
\end{corollary}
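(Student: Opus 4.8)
The plan is to obtain the corollary as an immediate specialisation of Theorem~\ref{a-thm:SDPGeneral}: that theorem already certifies exact recovery whenever $\delta > 1 + \sqrt{1 + 2\sigma_{\max}^2(\mc X')/n}$, so all that remains is to replace the instance-dependent spectral quantity $\sigma_{\max}^2(\mc X')$ by a crude bound that involves only the coarse parameters $N$, $r$ and $k$, and then to propagate that bound through the (monotone) separation condition. No new dual construction, feasibility check, or complementary-slackness argument is needed beyond what Theorem~\ref{a-thm:SDPGeneral} supplies.

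First I would bound the top singular value of the centred data matrix $\mc X'$. Its $N$ rows are precisely the vectors $x - \mu_i$ for $x \in B_i$, and by hypothesis each of these has Euclidean norm at most $r$. Since the largest singular value of any matrix is dominated by its Frobenius norm, this gives $\sigma_{\max}^2(\mc X') \le \|\mc X'\|_F^2 = \sum_{i=1}^{k}\sum_{x \in B_i}\|x-\mu_i\|^2 \le N r^2 \le N r$, the last inequality holding in the normalised regime $r \le 1$ used throughout. This single estimate is the only place the geometry of the balls enters, and it is the step I expect to require the most care: one must check that the distribution-free maximum of the leading singular value, taken over all admissible placements of the points inside balls of radius $r$, is genuinely controlled by the radius and the cardinality alone (the extremal configuration being the rank-one one in which all centred points coincide), which is what justifies the stated bound $\sigma_{\max}(\mc X') \le \sqrt{Nr}$ appearing just before the statement.

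Finally I would substitute the balance parameter $\rho = N/(nk)$, i.e. $N = \rho n k$, to rewrite the bound as $\frac{2\sigma_{\max}^2(\mc X')}{n} \le \frac{2Nr}{n} = 2 r \rho k$. Because the map $t \mapsto 1 + \sqrt{1+t}$ is increasing, the hypothesis $\delta > 1 + \sqrt{1 + 2 r \rho k}$ then forces $\delta > 1 + \sqrt{1 + 2\sigma_{\max}^2(\mc X')/n}$, so the premise of Theorem~\ref{a-thm:SDPGeneral} is met and the $k$-means SDP recovers $\mc C^* = \{B_1,\dots,B_k\}$. In short, the corollary reduces to a one-line spectral estimate feeding a monotone substitution.
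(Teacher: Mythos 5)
Your proposal is correct and matches the paper's (very terse) justification: the paper simply asserts $\sigma_{\max}(\mc X')\le\sqrt{Nr}$ before stating the corollary, and your Frobenius-norm argument $\sigma_{\max}^2(\mc X')\le\|\mc X'\|_F^2=\sum_i\sum_{x\in B_i}\|x-\mu_i\|^2\le Nr^2\le Nr$ (using $r\le 1$) together with the substitution $N=\rho nk$ and monotonicity of $t\mapsto 1+\sqrt{1+t}$ is exactly the intended one-line reduction to Theorem~\ref{a-thm:SDPGeneral}.
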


\noindent Cor. \ref{a-corollary:SDPGeneral} holds for any distribution that generated the data $\mc X$. However, the separation $\delta$ depends on the number of clusters $k$. Next, we show that if the data is generated by an isotropic distribution $\mc P$ then we can get rid of the dependence on $k$ and get the positive results as long as $\delta > 2$. 

In this setting, as before we are given a set $\mc X$ which can be covered by $k$ ``well-separated'' balls. That is, $\mc X := \cup_{i=1}^k B_i$ where each $B_i$ is generated as follows. Let $\mc P$ denote the isotropic distribution on the ball centered at origin of radius atmost $r$, that is $B_1(r)\subset \mb R^d$. Let $B_i$ be a set of $n_i$ points drawn according to $P_i$, the measure $\mc P$ translated to $\mu_i$. Also, $\|\mu_i - \mu_j\| > \delta r > 2r$. 



Let $\Theta = E[\|x_{pr}'\|^2]$. Using Thm. \ref{a-thm:spectralNormCOncentration}, we can bound the RHS of Eqn. \ref{a-eqn:mainConstraint} by upper bounding the maximum eigenvalue of $X'$ as 
\begin{align}
  &\mb P\bigg[\sigma_{\max}\bigg(\sqrt{\frac{d}{\theta}}X'\bigg) > \sqrt{N} + t\sqrt{\frac{d}{\theta}} \bigg] \le 2d\exp(-ct^2)
\end{align}

Now, let $t\sqrt{\frac{d}{\theta}} = s\sqrt{N}$. Then, we get that with probability atleast $1-2d\exp(-\frac{c\theta Ns^2}{d})$ we have that 
\begin{align*}
  &\frac{2}{n}\sigma_{\max}(X')^2 \le 2(1+s)^2\frac{N\theta}{nd} \le 2\rho\theta (1+s)^2\frac{1}{d}
\end{align*}

So we see that as long $(\delta - 1)^2 - 1 > 2k\rho\theta (1+s)^2\frac{1}{d}$, the primal and dual objective value are the same with high probability. In other words $\delta > 1 + \sqrt{1+2\rho\theta(1+s)^2\frac{k}{d}}$ implies the desired conditions. Now, we are finally ready to state our result.

\begin{theorem}
\label{a-thm:SDPIsometric}
Let $\mc P$ denote the isotropic distribution on the ball centered at origin of radius $r$, that is on, $B_1(r)$ in $\mb R^d$. Given centers $\mu_1, \ldots, \mu_k$ such that $\|\mu_i - \mu_j\| > \delta r > 2 r$. Let $\mc P_i$ be the measure $\mc P$ translated with respect to the center $\mu_i$. Given a clustering instance $\mc I := \cup_{i=1}^k B_i$ where each $B_i$ is drawn i.i.d w.r.t $\mc P_i$. If the distance between the centers of any two balls
$$\delta > 1 + \sqrt{1+\frac{2\theta\rho k}{d}\Big(1+\frac{1}{\log N}\Big)^2}$$  
where $\rho = \frac{N}{nk}$ and $ n := \min_{i\in[k]} |B_i|$ and $\theta = \mb E[\|x_{pi}-\mu_p\|] < 1$, then there exists a constant $c > 0$ such that with probability at least $1 - 2d\exp(\frac{-cN\theta}{d\log^2N})$ the $k$-means SDP finds the intended cluster solution  $\mc C^* = \{B_1, \ldots, B_k\}$.
\end{theorem}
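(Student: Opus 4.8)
The plan is to specialize the dual-certificate argument already developed for the distribution-free case (Thm.~\ref{a-thm:SDPGeneral}) and to control its only remaining random ingredient by a concentration inequality. Recall that the explicit feasible dual $(z,\alpha,\beta,Q)$ constructed above reduces exact recovery of $\mc C^* = \{B_1,\ldots,B_k\}$ to the single scalar margin condition in Eqn.~\ref{a-eqn:mainConstraint}: for every cross-cluster index the quantity $\|x_{pr}-x_q\|^2-\|x_{pr}-x_p\|^2$ must exceed $\frac{2}{n}\max_{v\perp V}\frac{v^\top XX^\top v}{v^\top v}$. Since each sampled point lies within the unit ball about its own center while the centers are separated by at least $\delta$, the left-hand side is bounded below by $(\delta-1)^2-1$ \emph{deterministically}. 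Thus the whole theorem reduces to showing that, with the stated probability, the right-hand side stays strictly under $(\delta-1)^2-1$.

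For the right-hand side I would use the orthogonality $v\perp V$, which annihilates the rank-$k$ matrix of centers, to replace $XX^\top$ by $X'X'^\top$ and hence bound the Rayleigh quotient by $\sigma_{\max}^2(X')$, where $X'$ has independent, mean-zero, isotropic rows $x_{pr}'=x_{pr}-\mu_p$ with $\mathbb{E}\|x_{pr}'\|^2=\theta$. The key probabilistic input is then the spectral-norm concentration bound (Thm.~\ref{a-thm:spectralNormCOncentration}), giving for every $t>0$
\begin{align*}
\mathbb{P}\Big[\sqrt{\tfrac{d}{\theta}}\,\sigma_{\max}(X') > \sqrt{N}+t\sqrt{\tfrac{d}{\theta}}\,\Big] \le 2d\exp(-ct^2).
\end{align*}
This is the step that genuinely exploits isotropy: it produces the dimension-discounted scaling $\sigma_{\max}(X')\lesssim\sqrt{N\theta/d}$ rather than the crude $\sqrt{N}$, which is exactly what converts the explicit factor $k$ of the distribution-free bound into the much smaller $k/d$ appearing here (after substituting $N=\rho n k$).

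Finally I would calibrate the free deviation parameter by setting $t\sqrt{d/\theta}=s\sqrt{N}$, so that on the complementary event $\frac{2}{n}\sigma_{\max}^2(X')\le 2(1+s)^2\rho\theta k/d$ while the failure probability is $2d\exp(-c\theta N s^2/d)$. Choosing $s=1/\log N$ reproduces both the $(1+\tfrac{1}{\log N})^2$ factor in the separation requirement and the claimed success probability $1-2d\exp(-cN\theta/(d\log^2 N))$; combining with the deterministic lower bound $(\delta-1)^2-1$ and solving for $\delta$ yields $\delta>1+\sqrt{1+\frac{2\theta\rho k}{d}(1+\tfrac{1}{\log N})^2}$. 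The one genuinely delicate point is the concentration inequality itself: I expect the main obstacle to be verifying that the isotropic ball model meets the sub-gaussian / row-independence hypotheses of Thm.~\ref{a-thm:spectralNormCOncentration} with a constant $c$ uniform in $N$ and $d$, since everything downstream is deterministic algebra together with the single choice of $s$.
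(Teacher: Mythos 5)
Your proposal follows essentially the same route as the paper's own proof: reduce recovery to the dual-certificate margin condition of Eqn.~\ref{a-eqn:mainConstraint}, lower-bound the left side deterministically by $(\delta-1)^2-1$, use $v\perp V$ to pass from $XX^\top$ to $\sigma_{\max}^2(X')$, apply Thm.~\ref{a-thm:spectralNormCOncentration} to the rescaled matrix $\sqrt{d/\theta}\,X'$, and set $t\sqrt{d/\theta}=s\sqrt{N}$ with $s=1/\log N$ to get both the $(1+\tfrac{1}{\log N})^2$ factor and the stated probability. The calibration, the substitution $N=\rho nk$, and even the flagged delicate point (checking that the rescaled ball samples satisfy the isotropy and row-independence hypotheses of the concentration theorem) all match the paper's argument.
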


\subsection{Tightness of the regularised SDP based algorithm}
\begin{equation*}
	\textbf{0-1 SDP} 
	\begin{cases}
		\min_{Z, y} \enspace &\tr(DZ) + \lambda \langle \mb 1, y\rangle\\
		\text{subject to } \enspace &\tr(Z) = k\\
		& Z\cdot \mb 1 + y = \mb 1\\	
		&Z\ge 0, Z^2 = Z, Z^T = Z \\
		& y \in \{0, 1\}^n
	\end{cases}
	\xrightarrow{\text{relaxed}} \textbf{ SDP } 
	\begin{cases}
		\min_{Z} \enspace &\tr(DZ) + \lambda \langle \mb 1, y\rangle\\
        \text{subject to } \enspace &\tr(Z) = k\\
		& \Big(\frac{Z+Z^T}{2}\Big)\cdot \mb 1 + y = \mb 1\\		
		&Z \ge 0, y \ge 0, Z \succeq 0 \numberthis\label{a-eqn:regularisedSDP}
	\end{cases}
\end{equation*}

We are given a set $\mc X \subset \mb R^d$. $\mc X = \mc I \cup \mc N$ is such that $\mc I$ can be covered by a set of $k$ ``well-separated'' balls. That is, $\mc I := \cup_{i=1}^k B_i$ where $B_i$ is a ball of radius at most $r$ centered at $\mu_i$ and $\|\mu_i - \mu_j\|_{2}^{2} \ge \delta r$. 
Define $n_i := |B_i|$ and $n := \min_{i\in[k]} n_i$ and $m := |\mc N| = n_{k+1}$ and $N = \sum_i n_i + m$. $D$ is an $N\times N$ matrix such that $D_{ij} = \|x_i -x_j\|^2$.  

Note that the clustering algorithm gets $\mc X$ as input and does not know about the sets $B_i$'s or $\mc I$ or $\mc N$. The goal is to output a clustering $\mc C^*$ of $\mc X$ such that $C^* = \{B_1, B_2, \ldots, B_k, \mc N\}$. From the way we constructed the 0-1 SDP, this corresponds to
\begin{align}
  Z^* = \sum_{p=1}^k \frac{\mb 1_p \mb 1_{p}^T}{n_p} \enspace\enspace\text{ and }\enspace\enspace y^* = \mb 1_{k+1}\label{a-eqn:regularisedIntendedsolution}
\end{align}
where $\mb 1_p$ is an $N$-dimensional indicator vector for the $p^{th}$ cluster. That is, $Z$ is a block diagonal matrix and consists of $k$ non-zero diagonal blocks. Observe that $Z$ consists of blocks $Z^{(p, q)}$. Also, for $1\le p \le k$, we have that $Z^{(p, p)} = \frac{1}{n_p}\mb 1\mb 1^T$ and for all $p \neq q, Z^{(p, q)} = 0$. Also, $Z^{(k+1, k+1)} = 0$. To prove that the regularised SDP (Eqn. \ref{a-eqn:regularisedSDP}) finds the desired solution, we  will adopt the following strategy. We first construct a dual for Eqn. \ref{a-eqn:regularisedSDP}. We then show that under certain conditions on $\delta$ (well-separateness of the balls) and $m$ (the number of noisy points) the following happens. The primal objective value and the dual objective value are the same. Also, the corresponding $Z$ satisfies Eqn. \ref{a-eqn:regularisedIntendedsolution}. 

Before, we describe the dual, lets introduce a bit of notation. We index every point as $(p, i)$ where $p$ denotes the ball (or cluster) to which it belongs and $i$ denotes the index within that ball. Observe that the distance matrix $D$ consists of blocks $D^{(p, q)}$ such that $D^{(p,q)}_{ij} = \|x_{(p,i)}-x_{(q,j)}\|^2_2$. Now, to construct the dual, we introduce variables $z, \alpha_{(p,i)}, \beta_{(p,i)(q,j)}$, $\gamma_{(p, i)}$ and $Q$ for each of the constraints in the primal problem. 
\[\textbf{SDP Dual}
    \begin{cases}
		\max \enspace &-zk - \sum_{(p, i)} \alpha_{(p,i)}\\
		\text{subject to } 
		&Q = D + zI + \sum_p\sum_i \alpha_{(p,i)}A_{(p, i)} -\sum_{p,q}\sum_{i,j} \beta_{(p,i)(q,j)}E_{(p,i)(q,j)}\\
		&\sum_{(p,i)} (\gamma_{(p,i)}-\alpha_{(q,i)}) e_{(p, i)} = \lambda \mb 1\\
		&\beta \ge 0, \gamma \ge 0 \\
		&Q \succeq 0
	\end{cases}
	\label{a-eqn:resularisedSDPDual}
	\numberthis
\]
where, $A_{(p,i)} = \frac{1}{2}(e_{p,i}\mb 1^T + \mb 1 e_{p, i}^T)$ and $E_{(p,i)(q,j)} = e_{(p,i)}e_{(q,j)}^T$. $\mb 1$ is an $N$-dimensional vector of all ones while $e_{(p,i)}$ is the indicator vector with one in position $(p, i)$ and zeros elsewhere. Now, we will examine the conditions under which the dual objective value matches the primal objective such that all the constraints of the dual are satisfied. 

\subsubsection*{Complementary slackness}
We know that $\beta_{(p, i)(q, j)} Z_{(p, i)(q, j)} = 0$. Now, for all $1 \le p \le k$, $Z^{(p, p) \neq 0}$ and for all the other pairs $(p, q)$ we have that $Z^{(p, q)} = 0$. Hence, we get that for all $1 \le p \le k$, $\beta^{(p, p)} = 0$. Also, we know that $\gamma_{(p, i)} y_{(p,i)} = 0$. Now, $y_{(k+1, i)} \neq 0$, hence $\gamma_{(k+1, i)} = 0$. Also, if we have that for all $1 \le k \le p$, $Q^{(p,p)}\mb1 = 0$ then $\langle Q, Z \rangle = 0$. This ensures that the second complementary slackness condition is also satisfied. 

\subsubsection*{Some properties of the $Q$ matrix}
Before we proceed, let's examine some properties of the dual matrix $Q$. Observe that for all $1 \le p\neq q \le k$, 

\begin{alignat*}{1}
  Q^{(p, q)} = 
  \begin{cases}
    &D^{(p,p)} + z \mb I + \frac{1}{2}\sum_{i=1}^{n_p} \alpha_{(p,i)} (e_{i}\mb 1^T + \mb 1e_{i}^T) \hspace{1in}\text{ if } 1 \le p = q \le k\\
    &D^{(k+1,k+1)} + z\mb I - \lambda \mb 1 \mb 1^T - \beta^{(p,q)} \hspace{1.45in}\text{ if } p = q = k+1\\
    &D^{(p,q)} + \frac{1}{2}\sum_{i=1}^{n_p} \alpha_{(p,i)} e_{i}\mb 1^T + \frac{1}{2}\sum_{i=1}^{n_q}\alpha_{(q,i)}\mb 1e_{i}^T - \beta^{(p,q)} \hspace{0.3in}\text{ otherwise }
  \end{cases}
  \label{a-eqn:resularisedQ}\numberthis
\end{alignat*} 

\subsubsection*{Dual matches intended primal solution}
Now, using the fact that for all $1\le p \le k$, $Q^{(p, p)}\mb 1 = 0$ implies that
\begin{align*}
  0 &= e_r^TD^{(p,p)}\mb1 + z + \frac{1}{2}\sum\alpha_{(p,i)} (e_r^Te_{i}\mb 1^T\mb1 + \mb e_r^T\mb1e_{i}^T\mb1)\\
  & = e_r^TD^{(p,p)}\mb1 + z + \frac{1}{2}\sum\alpha_{(p,i)} (e_r^Te_{i}n_p + 1) = e_r^TD^{(p,p)}\mb1 + z + \frac{1}{2}\sum\alpha_{(p,i)} +\frac{n_p\alpha_{(p,r)}}{2}.
\end{align*}
Summing over all $r$ and then substituiting back, we get that for all $1\le p \le k$ and for all $i$,  
\begin{align*}
  &\alpha_{(p, i)} = \frac{1^TD^{(p,p)}1}{n_p^2}-\frac{z}{n_p} -\frac{2e_i^TD^{(p,p)}1}{n_p} \\
  &= \frac{-2n_p^2 \|x_{pi}\|^2 + 4n_p \langle x_{pi}, \sum_jx_{pj}\rangle - 2\langle \sum_i x_{pi}, \sum_j x_{pj}\rangle -zn_p}{n_p^2} = -2\|x_{pi} - x_p\|^2-\frac{z}{n_p}
\end{align*}
Again, using complementary slackness, we know that $\gamma^{(k+1)} = 0$. This implies that $\alpha_{(k+1, i)} = -\lambda$. Combining these, we get that
\begin{align*}
  \alpha_{(p, i)} &= -2\|x_{pi} - x_p\|^2-\frac{z}{n_p} \\
  \alpha_{(k+1, i)} &= -\lambda \numberthis\label{a-eqn:regularisedAlpha} 
\end{align*}
Now, we have the value of $\alpha$ for all $1\le p \le k$ and for all $i$. Computing the objective function, we get that 
\begin{align*}
  kz + \sum_p\sum_i \alpha_{p,i} &= kz + \sum_{p}\frac{1^TD^{(p, p)}1}{n_p} - \sum_p z -2\sum_p \frac{1^TD^{(p,p)}1}{n_p} - \lambda m \\
  &= -\sum_p \frac{\mb1^TD^{(a,a)}\mb1}{n_p} - \lambda \langle \mb 1, y\rangle = -\langle D, Z \rangle - \lambda \langle \mb 1, y\rangle = - \tr(DZ) - \lambda \langle \mb 1, y\rangle
\end{align*}

\subsubsection*{Satisfying the $\lambda$ constraint of dual}
This constraint implies for all $1 \le p \le k, \gamma_{(p,r)} = \alpha_{(p,r)} + \lambda$. This will be satisfied as long as for all $p$ and for all $r$, $\lambda \ge -\alpha_{(p, r)}$. Choosing $\lambda$ as below ensures that the constraint is satisfied for all $p$ and all $r$.
\begin{align}
\lambda \ge \frac{z}{n} + 2\|x_{p,r} - x_p\|^2\label{a-eqn:lambdaLower}
\end{align}
where $x_p$ denotes the center of the $p^{th}$ cluster and $x_{p. r}$ denotes the $r^{th}$ point in the $p^{th}$ cluster. Hence, we see that for the intended solution, the primal and dual values are the same. Hence, solution is optimal. Now, the main question is to find $Q$ such that $Q$ is positive semi-definite while simultaneously ensuring that $\beta, \gamma \ge 0$.  

\subsubsection*{Satisfying PSD for $Q$}
Decompose $Q$ as follows.\[ Q = 
\begin{bmatrix}
    Q'    & B_1 \\
    B_2   & Q^{(k+1, k+1)}
\end{bmatrix}
\]
If $B_1 = B_2 = 0$ and $Q' \succeq 0$ and $Q^{(k+1, k+1)} \succeq 0$ then we know that $Q \succeq 0$. Let $X_1 = \{C_1, \ldots, C_k\}$ be the set of all points which were assigned to the $1 \le p \le k$ clusters. From the proof of the noiseless case, we know that if 
\begin{align*} \|x_{pr} - x_{q}\|^2 - \|x_{pr} - x_{p}\|^2 \ge \frac{z}{n} \ge \frac{2}{n}\Big(\max_{v\bot V} \frac{v^T X_1X_1^Tv}{v^Tv}\Big)  \numberthis\label{a-eqn:zLower}
\end{align*}
where $V = span\{1_p\}_{p=1}^k$  is the subspace spanned by $\mb 1_p$ (the indicator vector for the $p^{th}$ cluster) then $Q'$ is positive semi-definite. We know that the RHS is upper bounded by the square of maximum eigenvalue of $X_1$, which gives the following
\begin{align*} \|x_{pr} - x_{q}\|^2 - \|x_{pr} - x_{p}\|^2 \ge \frac{z}{n} \ge \frac{2}{n}\sigma_{\max}^2(X_1')  \numberthis\label{a-eqn:zConstraint}
\end{align*}
Hence, if either Eqn. \ref{a-eqn:zLower} or Eqn. \ref{a-eqn:zConstraint} can be satisfied then we $Q'$ is positive semi-definite. Here, the matrix $X_1'$ is such that $X_1 = X_1' + C$ where $C$ is a rank $k$ matrix which contains the centers $\mu_1, \ldots \mu_k$. Next, to ensure that $B_1 = 0$, we need that for all $1 \le p \le k, Q^{(p, k+1)} = 0$. Using Eqn. \ref{a-eqn:resularisedQ}, 
\begin{flalign*}
  0 &= Q^{(p, k+1)}_{rs} = D^{(p,k+1)}_{rs} + \frac{1}{2}\sum_i \alpha_{(p,i)} e_r^Te_{i}\mb 1^Te_s  - \frac{\lambda}{2}e_r^T\mb 1 \mb 1^Te_s - \beta^{(p,k+1)}_{rs}&\\
  & = D^{(p,k+1)}_{rs} - \|x_{pr} - x_{p}\|^2 - \frac{z}{2n_p} - \frac{\lambda}{2} - \beta^{(p,k+1)}_{rs}.\enspace \text{We need to choose $\beta^{(p, k+1)}_{rs} \ge 0$. Hence, }& \\
  &\implies \| x_{k+1, s} - x_{pr} \|^2 \enspace\ge\enspace \frac{\lambda}{2} + \frac{z}{2n_p} + \|x_{pr} - x_p\|^2&
\end{flalign*}
Thus, we see that if 
\begin{align*}
  &\| x_{k+1, s} - x_{pr} \|^2 \enspace\ge\enspace \frac{\lambda}{2} + \frac{z}{2n} + \|x_{pr} - x_p\|^2 \numberthis\label{a-eqn:lambdaUpper1}
\end{align*}
then for all $1 \le p \le k$, we have that $Q^{(p, k+1)} = 0$. In other words, we have that $B_1 = 0$. Next, to ensure that $B_2 = 0$, we need that for all $1 \le q \le k, Q^{(k+1, q)} = 0$. Using Eqn. \ref{a-eqn:resularisedQ}, 
\begin{flalign*}
  0 &= Q^{(k+1, q)}_{rs} = D^{(k+1,q)}_{rs} + \frac{1}{2}\sum_i \alpha_{(q,i)} e_r^Te_{i}\mb 1^Te_s  - \frac{\lambda}{2}e_r^T\mb 1 \mb 1^Te_s - \beta^{(k+1, q)}_{rs}&
\end{flalign*}
Using the same analysis as before, we see that if $\| x_{k+1, r} - x_{qs} \|^2 \enspace\ge\enspace \frac{\lambda}{2} + \frac{z}{2n} + \|x_{qs}-x_q\|^2$ then for all $1 \le q \le k$, we have that $Q^{(p, k+1)} = 0$. Observe that this is the same condition as Eqn. \ref{a-eqn:lambdaUpper1}. Thus, this ensures that $B_2 = 0$. Next, we need to show positive semi-definiteness of the matrix $Q^{(k+1, k+1)}$. Again, using Eqn. \ref{a-eqn:resularisedQ}, we get that for any vector $v \in \mb R^m$
\begin{flalign*}
  &v^TQ^{(k+1, k+1)}v = v^TD^{(k+1, k+1)}v - v^T\beta^{(k+1, k+1)}v + z v^T v - \lambda (v^T \mb 1)^2&
\end{flalign*}
To show that $Q^{(k+1, k+1)}$ is positive semi-definite, we need to ensure that the above is $\ge 0$ for all $v$. If we choose $\beta^{(k+1, k+1)} = D^{(k+1, k+1)}$ and 
\begin{align*}
  &\frac{z}{m} > \lambda\numberthis\label{a-eqn:lambdaUpper2}\\
  \text{then, we have that }\enspace &v^TQ^{(k+1, k+1)}v = z\sum_i v_i^2 - \lambda (\sum_i v_i)^2 \ge (z -\lambda m)\sum_i v_i^2 \ge 0
\end{align*}

\subsubsection*{Putting it all together}
We are given $X := I \cup N$. Let $I = \cup B_i$ where each $B_i$ is ball of radius atmost one and centered at $\mu_i$ where $\mu_i$ is the average of points in $B_i$. Also, $d(\mu_i, \mu_j) \ge \delta$. Decompose $N = N_1 \cup N_2$ into two sets. Let $N_2 = \{n \in N :$ for all $b \in I, d(n,b) \ge \nu\}$ and $N_1 = N \setminus N_2$. Let $N_1$ be such that for all $n \in N_1$, $| d^2(n, \mu_i) - d^2(n, \mu_j)| \ge \alpha$. 

We will show that the regularised SDP outputs the clustering $\mc C = \{C_1, \ldots, C_k, N_2\}$, where each $B_i \subseteq C_i$. Hence, the clusters contain all the points from the balls $B_i$ plus (maybe) points from the set $N_1$.  

Consider the $p^{th}$ cluster $C_p$. We know that $C_p = B_p \cup M_p$ where $M_p \subseteq N_1$. Now, $x_p = \frac{\sum_{x \in B_p} x + \sum_{n \in M_p} n}{|B_p|+|M_p|} = \frac{\mu_p |B_p| + |M_p|avg(M_p)}{|B_p| + |M_p|}$. Thus, we get that $d(x_p, \mu_p) = \frac{|M_p|}{|B_p| + |N_p|}d(\mu_p, avg(M_p)) \le \frac{|N_1|}{n}(\nu + 1) =: a$. Thus, we have that for all $x_{pr} \in B_p$
\begin{flalign*}
  &\|x_{pr} - x_q\|^2 - \|x_{pr} - x_p\|^2 \ge (\|x_{pr} - \mu_q\| - \|x_q - \mu_q\|)^2 - (\|x_{pr} - \mu_p\| + \|x_p - \mu_p\|)^2&\\
  &= \|x_{pr} - \mu_q\|^2 -\|x_{pr} - \mu_p\|^2 +  \|x_q - \mu_q\|^2 - \|x_p - \mu_p\|^2 - 2\|x_{pr} - \mu_p\|\|x_{p} - \mu_p\|&\\
  & - 2\|x_{pr}-\mu_q\|\|x_q-\mu_q\| \ge (\delta-1)^2 - 1 - a^2 - 4a&
\end{flalign*}
and for $x_{pr} \in M_p$
\begin{flalign*}
  &\|x_{pr} - x_q\|^2 - \|x_{pr} - x_p\|^2 \ge \alpha - a^2 - 4a&
\end{flalign*}
Choosing $\frac{z}{n} = (\delta-1)^2-1-a^2-4a > \frac{2}{n}\sigma_{\max}^2(X_1')$ and $\alpha - a^2 - 4a > \frac{2}{n}\sigma_{\max}^2(X_1')$ ensures that  Eqn. \ref{a-eqn:zConstraint} is satisfied. Next, we see that if $\lambda$ is such that
\begin{align*}
  & 2\|x_{k+1,r}-x_{qs}\|^2 - (\delta-1)^2 - 1 \ge \lambda \ge (\delta-1)^2 + 1 \numberthis\label{a-eqn:lambdaConstraint}
\end{align*}
then Eqns. \ref{a-eqn:lambdaLower} and Eqns. \ref{a-eqn:lambdaUpper1} can be satisfied. Hence, if $\nu \ge \sqrt{1 + (\delta-1)^2}$ then the above condition can be satisfied. Finally, combining Eqns. \ref{a-eqn:zConstraint} and \ref{a-eqn:lambdaConstraint}, we see that if 
\begin{align*}|N_2| \le n\frac{(\delta-1)^2-1-a^2-4a}{\lambda}
\end{align*}
then Eqn. \ref{a-eqn:lambdaUpper2} can also be satisfied. 

\begin{theorem}
\label{a-thm:regSDPGeneral}
Given a clustering instance $\mc X \subset \mb R^l$ and $k$. Let $\mc X : = \mc I \cup \mc N$. Let $\mc I := \cup_{i=1}^k B_i$ where $B_i$ is a ball of radius at most $1$ centered at $\mu_i$ and $d(\mu_i, \mu_j) \ge \delta $. Define $B_i' := \{x - \mu_i : x \in B_i\}$. Let $\mc N = \mc N_1 \cup N_2$ have the following properties. For all $n \in N_1$ and for all $i, j$, we have that $| d(n, \mu_i) - d(n, \mu_j)| \ge \alpha$. For all $n \in \mc N_2$ and for all $x \in I, d(n, x) \ge \nu \ge \sqrt{1 + (\delta-1)^2}$. Note that $\mc N_1 \cap \mc N_2 = \phi$. Let $n = \min_i |B_i|$ and $a = \frac{|\mc N_1|(\nu+1)}{n}$. If  

\begin{itemize}
  \item $\delta > 1 + \sqrt{1+ a^2 + 4a + \frac{2\sigma_{\max}^2(X_1')}{n}}$ where $ X_1' = \cup B_i' \cup N_1$.
  \item $\alpha > \sqrt{a^2 + 4a + \frac{2\sigma_{\max}^2(X_1')}{n}}$ 
  \item $\frac{|\mc N_2|}{n} \le \frac{(\delta-1)^2-1-a^2-4a}{\lambda}$
\end{itemize}
then the regularised $k$-means SDP finds the intended cluster solution  $\mc C^* = \{C_1, \ldots, C_k, \mc N_2\}$ where $B_i \subseteq C_i$ when given $\mc X$ and $2\nu^2 - (\delta-1)^2 - 1 \ge \lambda \ge (\delta-1)^2 + 1$ as input.
\end{theorem}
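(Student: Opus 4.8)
The plan is to certify optimality of the intended solution $Z^* = \sum_{p=1}^k \frac{\mb 1_p \mb 1_p^T}{n_p}$, $y^* = \mb 1_{k+1}$ by exhibiting an explicit dual-feasible point whose objective matches the primal. First I would form the SDP dual, introducing a scalar $z$ for the trace constraint, a vector $\alpha$ for the row-sum constraints, a nonnegative matrix $\beta$ for $Z \ge 0$, a nonnegative vector $\gamma$ for $y \ge 0$, and a PSD slack $Q \succeq 0$. By weak duality, the entire claim reduces to producing dual variables for which the dual objective equals $-\tr(DZ^*) - \lambda\langle \mb 1, y^*\rangle$ and all dual constraints hold.

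Next I would pin down most dual variables via complementary slackness. Since $Z^*$ is nonzero only on the $k$ good diagonal blocks, slackness forces $\beta^{(p,p)} = 0$ for $1 \le p \le k$; since $y^*$ is supported on the noise cluster, $\gamma_{(k+1,\cdot)} = 0$. Imposing $Q^{(p,p)}\mb 1 = 0$ on each good block — which is exactly what makes $\langle Q, Z^*\rangle = 0$ — then determines $\alpha_{(p,i)} = -2\|x_{pi} - x_p\|^2 - z/n_p$, as in the noiseless derivation, while the $\lambda$-constraint on the noise block forces $\alpha_{(k+1,i)} = -\lambda$. A direct computation, identical in spirit to the noiseless case, shows these choices make the dual objective coincide with the primal, so only feasibility remains.

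The heart of the proof is verifying $Q \succeq 0$, $\beta \ge 0$, $\gamma \ge 0$ simultaneously. I would split $Q$ into the block form $\begin{bmatrix} Q' & B_1 \\ B_2 & Q^{(k+1,k+1)}\end{bmatrix}$, with $Q'$ indexed by $\mc I \cup \mc N_1$ and the last block by $\mc N_2$. For $Q'$ I would invoke the noiseless analysis (Thm. \ref{a-thm:SDPGeneral}): $Q' \succeq 0$ holds once $z/n$ exceeds twice the top squared singular value of the centered matrix $X_1'$. To annihilate the cross-blocks $B_1, B_2$ I would choose $\beta^{(p,k+1)}$ so as to zero out $Q^{(p,k+1)}$, which stays nonnegative precisely when the $\mc N_2$ points are far from every ball — this is where $\nu \ge \sqrt{1+(\delta-1)^2}$ and the lower bound on $\lambda$ enter. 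Finally $Q^{(k+1,k+1)} \succeq 0$ is secured by setting $\beta^{(k+1,k+1)} = D^{(k+1,k+1)}$ and requiring $z/m > \lambda$, which caps $|\mc N_2|$.

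The main obstacle — and the reason the hypotheses take the form they do — is that the empirical center $x_p$ of a recovered cluster is not $\mu_p$: absorbing points of $\mc N_1$ shifts it by at most $a = \frac{|\mc N_1|(\nu+1)}{n}$. I would propagate this perturbation through the separation inequality $\|x_{pr} - x_q\|^2 - \|x_{pr} - x_p\|^2 \ge (\delta-1)^2 - 1 - a^2 - 4a$ for $x_{pr} \in B_p$, and through the analogous margin bound $\ge \alpha - a^2 - 4a$ for $x_{pr} \in \mc N_1$, so that the single scalar $z$ can meet both the lower bound forced by PSD of $Q'$ and the upper bound $z/m > \lambda$. Balancing these competing demands on $z$ against $a$ is the delicate step; once it is resolved, collecting the constraints yields the stated bounds on $\delta$, $\alpha$, $\lambda$, and $|\mc N_2|/n$.
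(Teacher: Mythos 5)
Your proposal follows the paper's own argument essentially step for step: the same dual construction, the same complementary-slackness determination of $\beta^{(p,p)}=0$, $\gamma^{(k+1)}=0$, $\alpha_{(p,i)}=-2\|x_{pi}-x_p\|^2-z/n_p$ and $\alpha_{(k+1,i)}=-\lambda$, the same block decomposition of $Q$ with $\beta^{(k+1,k+1)}=D^{(k+1,k+1)}$ and the condition $z/m>\lambda$, and the same perturbation analysis via the center shift $a=\frac{|\mc N_1|(\nu+1)}{n}$ yielding the bounds $(\delta-1)^2-1-a^2-4a$ and $\alpha-a^2-4a$. The plan is correct and matches the paper's proof.
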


\noindent In general, the eigenvalues of the matrix $\mc X_1'$ are bounded by $\sqrt{|I| + |N_1|(\nu+1)}$. Let $\rho := \frac{|I|}{nk}$. $\rho$ measures the balance of the clusters. If all the clusters have the same size then $\rho = 1$. Hence, $\frac{2}{n}\sigma^2_{\max}(X_1') \le 2\rho k + a$. Assuming that $\frac{|\mc N_1|}{n} \le \epsilon$ and choosing $\nu = 2\delta - 1$, we get the following corollary.
 
\begin{corollary}
\label{a-corollary:SDPGeneral}
Given a clustering instance $\mc X \subset \mb R^l$ and $k$. Let $\mc X : = \mc I \cup \mc N$. Let $\mc I := \cup_{i=1}^k B_i$ where $B_i$ is a ball of radius at most $1$ centered at $\mu_i$ and $d(\mu_i, \mu_j) \ge \delta $. Define $B_i' := \{x - \mu_i : x \in B_i\}$. Let $\mc N = \mc N_1 \cup N_2$ have the following properties. For all $n \in N_1$ and for all $i, j$, we have that $| d(n, \mu_i) - d(n, \mu_j)| \ge \alpha$. For all $n \in \mc N_2$ and for all $x \in I, d(n, x) \ge 2\delta$. Note that $\mc N_1 \cap \mc N_2 = \phi$. Let $n = \min_i |B_i|$ and $\epsilon = \frac{|\mc N_1|}{n}$. If  

\begin{itemize}
  \item $\delta > \frac{1+5\epsilon}{1-4\epsilon^2} + \sqrt{ \Big(\frac{1+5\epsilon}{1-4\epsilon^2}\Big)^2 + \frac{2\rho k}{1-4\epsilon^2}}$  where $ X_1' = \cup B_i' \cup N_1$. 
  \item $\alpha \ge \sqrt{10\delta \epsilon + 4\delta^2 \epsilon^2+ 2\rho k}$ 
  \item $\frac{|\mc N_2|}{n} \le \frac{\delta^2(1-4\epsilon^2)-2\delta(1+4\epsilon)}{\lambda}$
\end{itemize}
then the regularised $k$-means SDP finds the intended cluster solution  $\mc C^* = \{C_1, \ldots, C_k, \mc N_2\}$ where $B_i \subseteq C_i$ when given $\mc X$ and $\delta^2+2\delta \ge \lambda \ge (\delta-1)^2 + 1$ as input.
\end{corollary}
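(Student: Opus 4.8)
The plan is to obtain this corollary as a direct specialization of Theorem~\ref{a-thm:regSDPGeneral}, in which the two ``abstract'' ingredients of that theorem---the spectral quantity $\sigma_{\max}^2(X_1')$ and the free separation parameter $\nu$ controlling the far noise $\mc N_2$---are replaced by an explicit distribution-free bound and a convenient numerical choice, respectively. Once these substitutions are made, each of the three hypotheses of the theorem collapses, after elementary algebra, into the three displayed conditions of the corollary, and the admissible $\lambda$-window of the theorem is checked to contain the window $[(\delta-1)^2+1,\ \delta^2+2\delta]$ claimed here.

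First I would bound the spectral norm. Writing $X_1 = X_1' + C$ with $C$ the rank-$k$ matrix of true centers, the rows of $X_1'$ are the residuals $x-\mu_{c(x)}$: for the $|\mc I|$ structured points these have squared norm at most $1$, and for the $|\mc N_1|$ points captured by some cluster they are displaced by at most $\nu+1$. Bounding the top eigenvalue by the sum of squared row norms gives $\tfrac{2}{n}\sigma_{\max}^2(X_1')\le 2\rho k + a$, where $\rho=|\mc I|/(nk)$ and $a=|\mc N_1|(\nu+1)/n$; this is exactly the estimate recorded just before the corollary. The delicate point here is the bookkeeping of the $\mc N_1$ contribution, since it is this term that must be folded into the single quantity $a$ for the subsequent algebra to close with the stated constants.

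Next I would fix $\nu = 2\delta-1$, so that $a = 2\delta\epsilon$ with $\epsilon = |\mc N_1|/n$, and verify the theorem's side condition $\nu\ge\sqrt{1+(\delta-1)^2}$: squaring reduces it to $(3\delta+1)(\delta-1)\ge 0$, which holds for all $\delta\ge 1$ and in particular under $\delta>2$. Substituting $a=2\delta\epsilon$ and $\tfrac{2}{n}\sigma_{\max}^2(X_1')\le 2\rho k + a$ into the first hypothesis $\delta>1+\sqrt{1+a^2+4a+\tfrac{2}{n}\sigma_{\max}^2(X_1')}$ and using $(\delta-1)^2-1=\delta^2-2\delta$, the inequality becomes $\delta^2(1-4\epsilon^2)-2\delta(1+5\epsilon)>2\rho k$, i.e. $\delta^2-2A\delta-B>0$ with $A=\tfrac{1+5\epsilon}{1-4\epsilon^2}$ and $B=\tfrac{2\rho k}{1-4\epsilon^2}$; solving the quadratic gives precisely $\delta>A+\sqrt{A^2+B}$. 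The same substitution turns the $\alpha$-hypothesis into $\alpha\ge\sqrt{4\delta^2\epsilon^2+10\delta\epsilon+2\rho k}$ and the noise-count hypothesis into $|\mc N_2|/n\le[\delta^2(1-4\epsilon^2)-2\delta(1+4\epsilon)]/\lambda$, matching the corollary. Finally, with $\nu=2\delta-1$ the theorem's upper $\lambda$-bound $2\nu^2-(\delta-1)^2-1$ equals $7\delta^2-6\delta$, and since $7\delta^2-6\delta\ge\delta^2+2\delta$ for $\delta\ge 4/3$, the window $[(\delta-1)^2+1,\ \delta^2+2\delta]$ indeed lies inside the admissible range.

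I expect essentially no conceptual obstacle here: all the heavy lifting---constructing the SDP dual, choosing $\alpha_{(p,i)}$, $\beta$, $\gamma$, proving $Q\succeq 0$ via the $Q\mb 1_p=0$ reduction, and matching primal and dual objectives---has already been carried out for Theorem~\ref{a-thm:regSDPGeneral}. The only genuine care needed is the spectral-norm bookkeeping of the previous paragraph and tracking the $\epsilon$-dependent constants so that $1+5\epsilon$ (rather than a looser coefficient) emerges. It is worth noting that this distribution-free route retains the $\rho k$ term in full; the extra $1/d$ savings of the stochastic-ball statement is \emph{not} recoverable from this crude estimate and would instead require the spectral-concentration argument used in the isotropic case.
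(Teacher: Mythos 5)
Your proposal follows exactly the paper's route: the corollary is obtained from Theorem~\ref{a-thm:regSDPGeneral} by substituting the distribution-free estimate $\tfrac{2}{n}\sigma_{\max}^2(X_1')\le 2\rho k + a$ and the choice $\nu = 2\delta-1$ (so $a = 2\delta\epsilon$), and your algebra correctly reproduces the three displayed conditions, the side condition $\nu\ge\sqrt{1+(\delta-1)^2}$, and the containment of $[(\delta-1)^2+1,\ \delta^2+2\delta]$ in the theorem's $\lambda$-window via $7\delta^2-6\delta\ge\delta^2+2\delta$. This is essentially the same (one-line) argument the paper gives immediately before stating the corollary, with the checks spelled out more carefully.
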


\noindent Cor. \ref{a-corollary:SDPGeneral} holds for any distribution that generated the data $\mc X$. However, the separation $\delta$ depends on the number of clusters $k$. Next, we show that if the data is generated by an isotropic distribution $\mc P$ then we can get rid of the dependence on $k$ and get the positive results as long as $\delta > 2$. 

In this setting, as before the set $\mc I$ which can be covered by $k$ ``well-separated'' balls. That is, $\mc I := \cup_{i=1}^k B_i$ where each $B_i$ is generated as follows. Let $\mc P$ denote the isotropic distribution on the ball centered at origin of radius atmost $1$, that is $B_1(1)\subset \mb R^l$. Let $B_i$ be a set of $n_i$ points drawn according to $P_i$, the measure $\mc P$ translated to $\mu_i$. Also, $\|\mu_i - \mu_j\| > \delta$. We need to ensure that for $\mc X$ as generated above, Eqn. \ref{a-eqn:zLower} can be satisfied. 


We will now upper bound the we can bound the RHS of Eqn. \ref{a-eqn:zLower}. Decompose $X_1 := A_1' + C + N_1'$. Now, $A_1'$ contains points from the balls $B_1', \ldots, B_k'$. $C$ contains the centers $\mu_1, \ldots, \mu_k$ and $N_1'$ contains the points from the set $|N_1|$ but shifted by $\mu_p$. Now,
\begin{align*}
\frac{2}{n}\Big(\max_{v\bot V} \frac{v^T X_1X_1^Tv}{v^Tv}\Big) = \frac{2}{n}\Big(\max_{v\bot V} \frac{v^T A_1'A_1^Tv}{v^Tv}\Big) + \frac{2}{n}\Big(\max_{v\bot V} \frac{v^T N_1N_1^Tv}{v^Tv}\Big) \le \sigma_{\max}^2(A_1') + \sigma_{\max}^2(N_1')
\end{align*}
Now, it's easy to see that $\sigma_{\max}^2(N_1') \le |N_1|(\nu+1)$. Let $\theta = E[\|a_{pr}'\|^2]$. Using Thm. \ref{a-thm:spectralNormCOncentration}, we will upper bound the maximum eigenvalue of $A_1'$ as 
\begin{align}
  &\mb P\bigg[\sigma_{\max}\bigg(\sqrt{\frac{d}{\theta}}A'\bigg) > \sqrt{|I|} + t\sqrt{\frac{d}{\theta}} \bigg] \le 2d\exp(-ct^2)
\end{align}

Now, let $t\sqrt{\frac{d}{\theta}} = s\sqrt{|I|}$. Then, we get that with probability atleast $1-2d\exp(-\frac{c\theta |I|s^2}{d})$ we have that 
\begin{align*}
  &\frac{2}{n}\sigma_{\max}^2(A_1') \le 2(1+s)^2\frac{|I|\theta}{nd} \le 2k\rho\theta (1+s)^2\frac{1}{d}
\end{align*}

Thus, $(\delta-1)^2 -1 - a^2 - 4a > a + 2\rho k\theta(1+s)^2\frac{1}{d}$ implies the desired conditions. Now, we are finally ready to state our result.

\begin{theorem}
\label{a-thm:regSDPIsometric}
Given a clustering instance $\mc X \subset \mb R^l$ and $k$. Let $\mc X : = \mc I \cup \mc N$. Let $\mc I := \cup_{i=1}^k B_i$ where $B_i$ is a ball of radius at most $1$ centered at $\mu_i$ and $d(\mu_i, \mu_j) \ge \delta $. Define $B_i' := \{x - \mu_i : x \in B_i\}$. Let $\mc N = \mc N_1 \cup N_2$ have the following properties. For all $n \in N_1$ and for all $i, j$, we have that $| d(n, \mu_i) - d(n, \mu_j)| \ge \alpha$. For all $n \in \mc N_2$ and for all $x \in I, d(n, x) \ge 2\delta$. Note that $\mc N_1 \cap \mc N_2 = \phi$. Let $n = \min_i |B_i| \ge c_1\frac{\log(k/\delta)}{\gamma^2}$ and $\epsilon = \frac{|\mc N_1|}{n}$. If  

\begin{itemize}
  \item $\delta > \frac{1+5\epsilon}{1-4\epsilon^2} + \sqrt{ \Big(\frac{1+5\epsilon}{1-4\epsilon^2}\Big)^2 + \frac{2\rho k\theta(1+1/\log(|I|))^2}{d(1-4\epsilon^2)}}$ 
  \item $\alpha \ge \sqrt{10\delta \epsilon + 4\delta^2 \epsilon^2+ \frac{2\rho k\theta(1+1/\log|I|)^2}{d(1-4\epsilon^2)}}$ 
  \item $\frac{|\mc N_2|}{n} \le \frac{\delta^2(1-4\epsilon^2)-2\delta(1+4\epsilon)}{\lambda}$
\end{itemize}
then there exists a constant $c_2 > 0$ such that with probability at least $1-\delta - 2d\exp(\frac{-c_2|I|\theta}{d\log^2|I|})$ the regularised $k$-means SDP finds the intended cluster solution  $\mc C^* = \{C_1, \ldots, C_k, \mc N_2\}$ where $B_i \subseteq C_i$ when given $\mc X$ and $\delta^2+2\delta \ge \lambda \ge (\delta-1)^2 + 1$ as input.
\end{theorem}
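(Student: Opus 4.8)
The plan is to reduce to the distribution-free Theorem~\ref{a-thm:regSDPGeneral} and then control the only data-dependent quantity appearing in its hypotheses, namely $\sigma_{\max}^2(X_1')$, using concentration of the spectral norm under the isotropic ball model. Theorem~\ref{a-thm:regSDPGeneral} already guarantees exact recovery whenever $(\delta-1)^2 - 1 - a^2 - 4a > \frac{2}{n}\sigma_{\max}^2(X_1')$, together with the matching $\alpha$-margin and $|\mc N_2|$ conditions, where $a = \frac{|\mc N_1|(\nu+1)}{n}$. So the entire task is to produce a high-probability upper bound on $\frac{2}{n}\sigma_{\max}^2(X_1')$ and then substitute the choice $\nu = 2\delta - 1$ to turn the resulting inequality into the stated separation requirement.

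First I would decompose the centered data matrix as $X_1 = A_1' + C + N_1'$, where $A_1'$ collects the ball points shifted to the origin, $C$ holds the true centers $\mu_1,\ldots,\mu_k$, and $N_1'$ holds the $\mc N_1$ points shifted by their assigned center. Since every column of $C$ lies in the span $V = \mathrm{span}\{\mb 1_p\}$, it is annihilated when the quadratic form is restricted to $v \perp V$, so $C$ drops out. The noise block is handled deterministically: $\sigma_{\max}^2(N_1') \le |\mc N_1|(\nu+1)$, contributing a term of order $a$ after dividing by $n$. The genuinely random part is $A_1'$, and here I would invoke the spectral-norm concentration bound of Theorem~\ref{a-thm:spectralNormCOncentration}, which gives $\mb P\big[\sigma_{\max}(\sqrt{d/\theta}\,A_1') > \sqrt{|\mc I|} + t\sqrt{d/\theta}\,\big] \le 2d\exp(-ct^2)$, where $\theta = \mb E[\|a_{pr}'\|^2]$.

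Next I would set $t\sqrt{d/\theta} = s\sqrt{|\mc I|}$ and choose $s = 1/\log|\mc I|$. This converts the tail bound into the clean form $\frac{2}{n}\sigma_{\max}^2(A_1') \le 2\rho k \theta (1+s)^2/d$, holding with probability at least $1 - 2d\exp(-c_2|\mc I|\theta/(d\log^2|\mc I|))$, which explains both the $(1+1/\log|\mc I|)^2$ factor and the failure probability in the statement. Substituting $\nu = 2\delta - 1$ gives $a = 2\delta\epsilon$, hence $a^2 + 5a = 4\delta^2\epsilon^2 + 10\delta\epsilon$ (the extra unit of $a$ coming from the $N_1'$ block), so the recovery condition collapses to $\delta^2(1 - 4\epsilon^2) - 2\delta(1+5\epsilon) > 2\rho k\theta(1+s)^2/d$. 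Treating this as a quadratic in $\delta$ and taking the positive root yields exactly $\delta > \frac{1+5\epsilon}{1-4\epsilon^2} + \sqrt{\big(\frac{1+5\epsilon}{1-4\epsilon^2}\big)^2 + \frac{2\rho k\theta(1+1/\log|\mc I|)^2}{d(1-4\epsilon^2)}}$, while the $\alpha$ and $|\mc N_2|/n$ budgets follow from the same substitution into the corresponding hypotheses of Theorem~\ref{a-thm:regSDPGeneral}. A final union bound over the event that the empirical ball statistics concentrate — the source of the $n \ge c_1\log(k/\delta)/\gamma^2$ requirement and the extra $1-\delta$ in the probability — closes the argument.

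The main obstacle is the spectral-norm concentration step: securing a bound on $\sigma_{\max}(A_1')$ that scales like $\sqrt{|\mc I|/d}$ rather than $\sqrt{|\mc I|}$ is precisely what removes the $\sqrt{k}$-type penalty present in the distribution-free corollary and drives the separation toward the optimal $\delta > 2$ for large $d$. Everything downstream — the quadratic solve, the $\nu = 2\delta-1$ substitution, and the margin and noise-budget bookkeeping — is routine algebra once that concentration estimate is in hand.
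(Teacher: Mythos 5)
Your proposal follows essentially the same route as the paper: the same decomposition $X_1 = A_1' + C + N_1'$ with $C$ annihilated on $v \perp V$, the deterministic bound $\sigma_{\max}^2(N_1') \le |\mc N_1|(\nu+1)$, the Vershynin spectral-norm concentration applied to $A_1'$ with $t\sqrt{d/\theta} = s\sqrt{|\mc I|}$ and $s = 1/\log|\mc I|$, and the substitution $\nu = 2\delta - 1$ followed by the quadratic solve in $\delta$. The bookkeeping (including the single extra $a$ from the $N_1'$ block yielding $a^2 + 5a = 4\delta^2\epsilon^2 + 10\delta\epsilon$) matches the paper's, so this is correct and not a genuinely different argument.
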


\subsection{Tightness of the regularised LP based algorithm}
\begin{equation*}
  \begin{split}
	\textbf{IP}\\
	\textbf{}
  \end{split}
  \begin{cases}
		\min_{Z, y} \enspace &\sum_{pq}d^2(p, q)z_{pq} + \lambda \sum_p y_p\\
		\text{s.t. } \enspace &\sum_{p}z_{pp}= k\\
		& \sum_{q}z_{pq} + y_p = 1\\	
		&z_{pq} = z_{qp} \\
		& z_{pq} \in \{0, z_{pp}\}, y \in \{0, 1\}^n
	\end{cases}
	\xrightarrow{\text{relaxed}} \textbf{ LP } 
	\begin{cases}
		\min_{Z, y} \enspace &\sum_{pq}d^2(p, q)z_{pq} + \lambda \sum_p y_p\\
        \text{s.t. } &\sum_{q}z_{pq} + y_p = 1\\
		& z_{pq} \le z_{pp}\\
		& \sum_{p} z_{pp} = k\\		
		& z_{pq} \ge 0, y_p \ge 0 \numberthis\label{a-eqn:regularisedLP}
	\end{cases}
\end{equation*}

We are given a set $\mc X \subset \mb R^d$. $\mc X = \mc I \cup \mc N$ is such that $\mc I$ can be covered by a set of $k$ ``well-separated'' balls. That is, $\mc I := \cup_{i=1}^k B_i$ where $B_i$ is a ball of radius at most $r$ centered at $\mu_i$ and $\|\mu_i - \mu_j\|_{2}^{2} \ge \delta r$. Let $\mc P$ denote the isotropic distribution in the unit ball centered at origin $B_1(r)$ in $\mb R^d$. The ball $B_i$ is drawn from the isotropic distribution $\mc P_i$, that is, the measure $\mc P$ translated with respect to the center $\mu_i$. Define $n_i := |B_i|$ and $n := \min_{i\in[k]} n_i$ and $m := |\mc N| = n_{k+1}$ and $N = \sum_i n_i + m$. $D$ is an $N\times N$ matrix such that $D_{ij} = \|x_i -x_j\|^2$.

Note that the clustering algorithm gets $\mc X$ as input and does not know about the sets $B_i$'s or $\mc I$ or $\mc N$. The goal is to output a clustering $\mc C^*$ of $\mc X$ such that $C^* = \{B_1, B_2, \ldots, B_k, \mc N\}$. From the way we constructed the Integer program, this corresponds to
\begin{align}
  Z^* = \sum_{p=1}^k \frac{\mb 1_p \mb 1_{p}^T}{n_p} \enspace\enspace\text{ and }\enspace\enspace y^* = \mb 1_{k+1}\label{a-eqn:regularisedIntendedsolution}
\end{align}
where $\mb 1_p$ is an $N$-dimensional indicator vector for the $p^{th}$ cluster. That is, $Z$ is a block diagonal matrix and consists of $k$ non-zero diagonal blocks. Observe that $Z$ consists of blocks $Z^{(p, q)}$. Also, for $1\le p \le k$, we have that $Z^{(p, p)} = \frac{1}{n_p}\mb 1\mb 1^T$ and for all $p \neq q, Z^{(p, q)} = 0$. Also, $Z^{(k+1, k+1)} = 0$. To prove that the regularised LP (Eqn. \ref{a-eqn:regularisedLP}) finds the desired solution, we  will adopt the following strategy. We first construct a dual for Eqn. \ref{a-eqn:regularisedLP}. We then show that under certain conditions on $\delta$ (well-separateness of the balls) and $m$ (the number of noisy points) the following happens. The primal objective value and the dual objective value are the same. Also, the corresponding $Z$ and $y$ satisfy Eqn. \ref{a-eqn:regularisedIntendedsolution}.

Now to construct the dual, we introduce variables $\alpha_p, \beta_{pq}, \gamma$, $\mu_{pq}$ and $\eta_p$ for each of the constraints in the primal problem. 
\[\textbf{LP Dual}
    \begin{cases}
		\max \enspace & \sum_{p} \alpha_{p} -\gamma k\\
		\text{subject to } 
		&\alpha_p + \mu_{pq} = \beta_{pq} + d^2(p, q)\\
		&\gamma = \sum_q \beta_{pq}\\
		&\lambda = \alpha_p + \eta_p \\
		&\mu_{pq} \ge 0, \eta_p \ge 0
	\end{cases}
	\label{a-eqn:resularisedLPDual}
	\numberthis
\]
Now, we will examine the conditions under which the dual objective value matches the primal objective such that all the constraints of the dual are satisfied. Before we go into more details, lets introduce the following notation. We will refer to points using symbols $a$, $a'$, $b$, $b'$, $c$ and $c'$. $a, b$ denotes two points in different clusters. $a$, $a'$ and $b, b'$ will refer to a pair of points in the same cluster. $c$ and $c'$ refers to the points in the noisy cluster $k+1$. 

\subsubsection*{Complementary slackness}
\begin{itemize}
  \item $\mu_{pq}z_{pq} = 0$. We get that
  \begin{align}
    &\mu_{aa'} = \mu_{bb'} = 0 \label{eqn:muCS}
  \end{align}
  \item $\eta_{p}y_{p} = 0$. We get that
  \begin{align}
    &\eta_{c} = 0 \label{a-eqn:etaCS}
  \end{align}
  \item $\beta_{pq}(z_{pq}-z_{pp}) = 0$. We get that
  \begin{align}
    &\beta_{ab} = \beta_{ac} = \beta_{ba} = \beta_{bc} = 0 \label{a-eqn:betaCS}
  \end{align}  
\end{itemize}

\subsubsection*{Dual matches intended primal solution}
Let $a \in C_i$ where $1 \le i \le k$. Hence, we get that 
\begin{align*}
  &\sum_{a' \in C_i}\alpha_{a} + \sum_{a'\in C_i}\mu_{aa'} = \sum_{a' \in C_i}\beta_{aa'}+ \sum_{a' \in C_i}d^2(a, a')\\
  \implies&\alpha_a = \frac{\gamma}{n_i} + \frac{\sum_{a'\in C_i}d^2(a, a')}{n_i}\numberthis\label{a-eqn:alphaA}
\end{align*}
For $c \in C_{k+1}$, we have that
\begin{align*}
  &\alpha_c + \eta_c= \lambda \implies \alpha_c = \lambda.
\end{align*}
Using these two equations, we get that
\begin{align*}
  \sum_p \alpha_p - k\gamma &= \sum_{i=1}^k \sum_{a \in C_i} \alpha_i + \sum_{c \in C_{k+1}}\alpha_c - k\gamma = \sum_{i=1}^k \sum_{a \in C_i} \frac{\gamma}{n_i} + \frac{\sum_{a'\in C_i}d^2(a, a')}{n_i} + \lambda \langle \mb 1, y\rangle - k\gamma\\
  &= \sum_{i=1}^k \gamma + \sum_{i=1}^k \frac{1}{n_i}\sum_{a, a' \in C_i}d^2(a, a') + \lambda \langle \mb 1, y\rangle - k\gamma = \sum_{pq} d^2(p, q)z^*_{pq} + \lambda\sum_{p}y^*_{p}
\end{align*}

\subsubsection*{Satisfying the $\lambda$ constraint of dual}
We have already seen that $\alpha_c$ should be equal to $\lambda$. Hence, if $\lambda \ge \alpha_a$ for all $a \in C_1, \ldots, C_k$ then this constraint can be satisfied. Thus, we get that if
\begin{align*}
  & \alpha_a \le \lambda \enspace\text{ and }\enspace\alpha_c = \lambda \numberthis\label{a-eqn:lambda}
\end{align*}
then the $\lambda$ constraint of the dual can be satisfied.
\subsubsection*{Satisfying the $\alpha_p$ constraint}
Again observe that for $a \in C_i$ and $b \in C_j \not=C_i$ and $c \in C_{k+1}$
\begin{align*}
  &\alpha_a + \mu_{ab} = d^2(a, b) \implies \alpha_a \le d^2(a, b)\\
  &\alpha_a + \mu_{ac} = d^2(a, c) \implies \alpha_a \le d^2(a, c)\numberthis\label{a-eqn:alphaUpper}
\end{align*}
To satisfy the constraint for $\alpha_c$, we need that 
\begin{align*}
  &\alpha_c + \mu_{cq} = \beta_{cq} + d^2(c, q) \implies \alpha_c |X| + \sum_q\mu_{cq} = \gamma + \sum_q d^2(c, q)
\end{align*}
This can be satisfied as long as 
\begin{align}
  &\lambda |X| \le \gamma + \sum_q d^2(c, q) \label{a-eqn:lambdaUpper}
\end{align}

\subsubsection*{Putting it all together}
From Eqns. \ref{a-eqn:alphaUpper}, \ref{a-eqn:lambda}, \ref{a-eqn:alphaA} and \ref{a-eqn:lambdaUpper}, we see that the following constraints need to be satisfied. Let $a, a' \in C_a$, $b \not\in C_a \cup C_{k+1}$ and $c 
\in C_{k+1}$.
\begin{align}
  d^2(a, a') &\le \frac{\gamma}{n_i} + \frac{\sum_{a_1 \in C_i}d^2(a, a_1)}{n_i} \le d^2(a, b) \label{a-eqn:dab}\\
  d^2(a, a') &\le \frac{\gamma}{n_i} + \frac{\sum_{a_1 \in C_i}d^2(a, a_1)}{n_i} \le d^2(a, c) \label{a-eqn:dac}\\
  \frac{\gamma}{n_i} + \frac{\sum_{a_1 \in C_i}d^2(a, a_1)}{n_i} &\le \lambda \le \frac{\gamma}{|X|} + \sum_q \frac{d^2(c, q)}{|X|} \label{a-eqn:lambdaUL}
\end{align}

Following the exact same analysis as in \cite{awasthi2015relax}, we know that Eqn. \ref{a-eqn:dab} can be satisfied with high probability (as the points in the balls $B_i$ are generated by an isotropic distribution).   Let $\nu$ denote the minimum distance between any point in $C_{k+1}$ to any other point in $C_1, \ldots, C_k$. Choosing $\nu \ge (\delta-2)$ ensures that Eqn. \ref{a-eqn:dac} is satisfied. Furthermore, if the number of noisy points $m \le N (1-\frac{4}{\nu^2})$ then Eqn. \ref{a-eqn:lambdaUL} can be satisfied.

\begin{theorem}
\label{a-theorem:lptight}
Given a clustering instance $\mc X \subset \mb R^d$ and $k$. Let $\mc X : = \mc I \cup \mc N$. Let $\mc I := \cup_{i=1}^k B_i$ where $B_i$ is a ball of radius at most $r$ centered at $\mu_i$ and $\|\mu_i - \mu_j\| \ge \delta r$. Let $\mc P$ denote the isotropic distribution on the ball centered at origin of radius $r$, that is on, $B_1(r)$ in $\mb R^d$. The ball $B_i$ is drawn from the isotropic distribution $\mc P_i$, that is, the measure $\mc P$ translated with respect to the center $\mu_i$.  Let $\mc N$ have the following property. Each $p_n \in \mc N$ is such that $\min_{p_i \in \mc I} \|p_n - p_i\| \ge \nu r$. Let $|\mc N| =: m$. If

\begin{itemize}
  \item $\delta > 4$ and $\nu > \delta - 2$ 
  \item $m \le |\mc I|\Big(\frac{\nu^2}{(\delta-2)^2}-1\Big)$
\end{itemize}
then the regularised $k$-means LP finds the intended cluster solution  $\mc C^* = \{B_1, \ldots, B_k, \mc N\}$ when given $\mc X$ and $(\delta - 2)^2r^2 \le \lambda \le \nu^2(1-\frac{m}{N})$ as input.
\end{theorem}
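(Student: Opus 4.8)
The plan is to certify optimality of the intended integral solution $(Z^*, y^*)$ of Eqn.~\ref{a-eqn:regularisedIntendedsolution} by LP duality: exhibit a feasible point of the LP dual (Eqn.~\ref{a-eqn:resularisedLPDual}) whose objective equals the primal value at $(Z^*, y^*)$. Since the LP is a relaxation of the integer program, matching the two objectives forces both the primal and the dual to be optimal; in particular no other, possibly fractional, primal solution can do strictly better, which yields the claimed recovery of $\mc C^* = \{B_1, \ldots, B_k, \mc N\}$. So the whole argument is a primal--dual certificate, identical in spirit to the SDP proof of Thm.~\ref{a-thm:regSDPGeneral} but now with linear constraints.

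First I would impose complementary slackness against $(Z^*, y^*)$. Because $z^*_{aa'} > 0$ for same-cluster pairs and $y^*_c > 0$ for noise points, while $z^*_{ab} = z^*_{ac} = 0$ across clusters, the dual variables are forced to satisfy $\mu_{aa'} = \mu_{bb'} = 0$ (Eqn.~\ref{eqn:muCS}), $\eta_c = 0$ (Eqn.~\ref{a-eqn:etaCS}), and $\beta_{ab} = \beta_{ac} = 0$ (Eqn.~\ref{a-eqn:betaCS}). Substituting these into the dual equality $\alpha_p + \mu_{pq} = \beta_{pq} + d^2(p,q)$ and summing over a cluster pins down $\alpha_a = \gamma/n_i + \tfrac{1}{n_i}\sum_{a_1 \in C_i} d^2(a,a_1)$ for clustered points (Eqn.~\ref{a-eqn:alphaA}) and $\alpha_c = \lambda$ for noise points. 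A direct computation (the ``Dual matches intended primal solution'' block) then shows $\sum_p \alpha_p - k\gamma$ equals $\sum_{pq} d^2(p,q) z^*_{pq} + \lambda \sum_p y^*_p$, so the duality gap vanishes provided the remaining dual inequalities can be met.

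What is left is verifying dual feasibility, which reduces to the three sandwich conditions (Eqns.~\ref{a-eqn:dab}, \ref{a-eqn:dac}, \ref{a-eqn:lambdaUL}) collected in ``Putting it all together.'' The second half of (dac) and the bounds on $\lambda$ are the easy part: taking $\nu > \delta - 2$ makes every noise-to-cluster squared distance dominate the within-cluster average, so $\mu_{ac} \ge 0$; the lower bound $\lambda \ge (\delta-2)^2 r^2$ keeps $\alpha_a \le \lambda$ for all clustered $a$ (Eqn.~\ref{a-eqn:lambda}); and the noise count $m \le |\mc I|(\nu^2/(\delta-2)^2 - 1)$, equivalently $\lambda \le \nu^2(1 - m/N)$, ensures the $\alpha_c$-row constraint $\lambda|\mc X| \le \gamma + \sum_q d^2(c,q)$ of Eqn.~\ref{a-eqn:lambdaUpper} stays consistent with $\alpha_c = \lambda$.

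The main obstacle is the intra/inter-cluster chain (Eqn.~\ref{a-eqn:dab}), namely $d^2(a,a') \le \gamma/n_i + \tfrac{1}{n_i}\sum_{a_1} d^2(a,a_1) \le d^2(a,b)$ holding simultaneously for all clusters; this is exactly the noiseless LP-tightness statement. Here I would invoke the isotropic stochastic ball model and borrow the concentration argument of \cite{awasthi2015relax}: for points drawn i.i.d.\ from an isotropic measure on $B_1(r)$, the average squared intra-cluster distance concentrates tightly, while any inter-cluster squared distance is at least of order $(\delta-2)^2 r^2$, so for $\delta > 4$ there is an admissible window for $\gamma$ in which the sandwich holds with high probability. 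The delicate point I expect to be decisive is choosing a \emph{single} $\gamma$ that threads (dab), (dac) and (lambdaUL) at once; it is precisely this simultaneous feasibility that forces the thresholds $\delta > 4$ and $\nu > \delta - 2$, and it is the only step that must appeal to the probabilistic rather than purely combinatorial structure of the balls.
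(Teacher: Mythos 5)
Your proposal follows essentially the same route as the paper's own proof: a primal--dual certificate for the LP, complementary slackness pinning down $\mu, \eta, \beta$ and hence $\alpha_a$ and $\alpha_c = \lambda$, the vanishing duality gap computation, and then reducing feasibility to the three sandwich conditions (Eqns.~\ref{a-eqn:dab}, \ref{a-eqn:dac}, \ref{a-eqn:lambdaUL}), with the intra/inter-cluster chain discharged by importing the isotropic-ball concentration argument of \cite{awasthi2015relax} exactly as the paper does. Your closing remark about threading a single $\gamma$ through all three conditions is a fair identification of where the thresholds $\delta > 4$, $\nu > \delta - 2$ and the bound on $m$ actually come from, which the paper states only implicitly.
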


\begin{figure}[t]
  \label{a-figure:simulation}
  \centering
  \includegraphics[width=\textwidth]{kmeans_sdp/results/deltaLambda.png}
  \includegraphics[width=\textwidth]{kmeans_sdp/results/deltaD.png}
  \caption{Heatmap showing the probability of success of the $k$-means regularised sdp algorithm. Lighter color indicates probability closer to one while darker indicates probability closer to zero.}
\end{figure}
\begin{figure}
  \includegraphics[width=\textwidth]{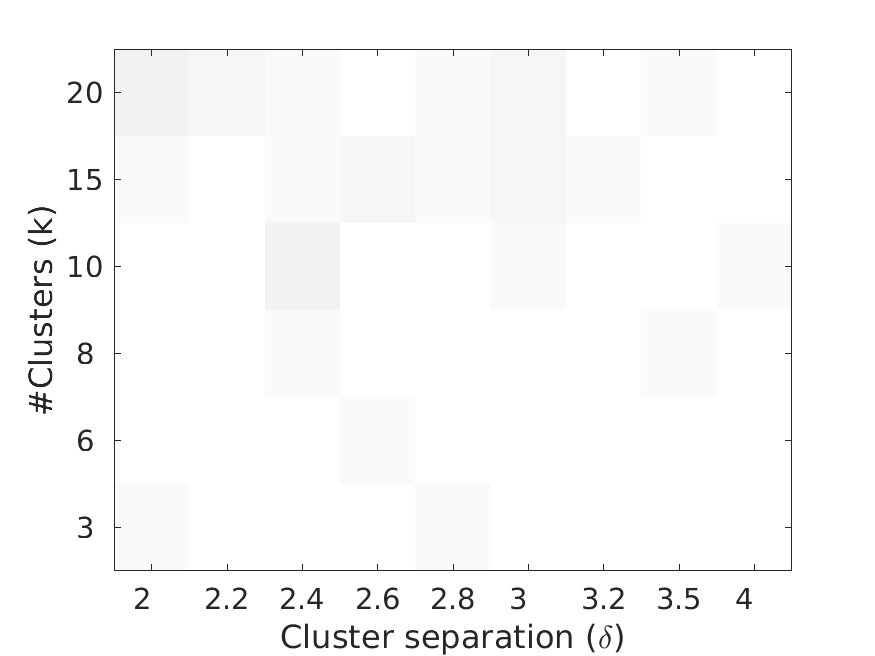}
  \includegraphics[width=\textwidth]{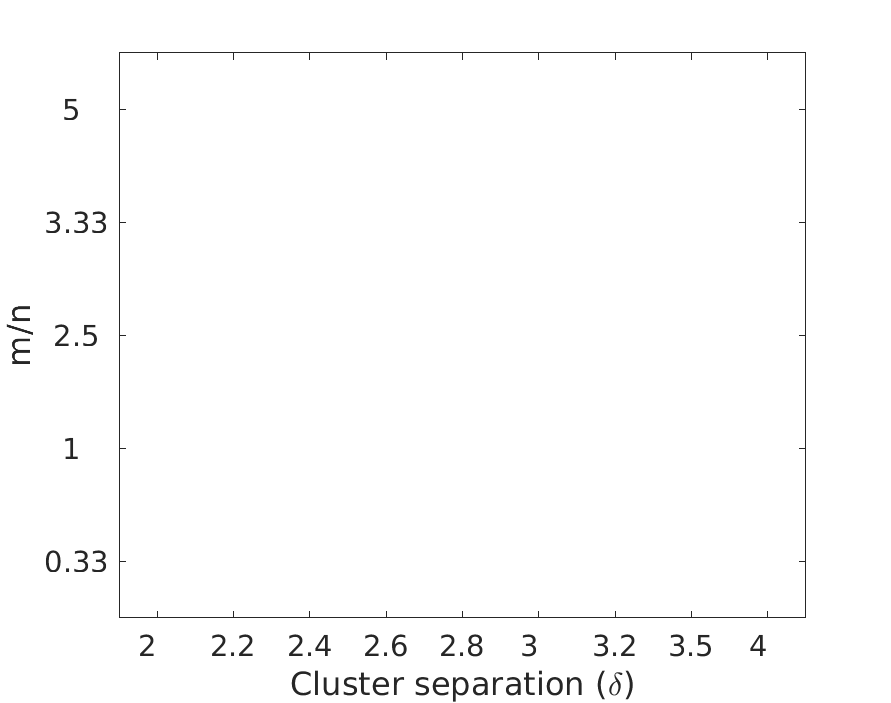}
  \caption{Heatmap showing the probability of success of the $k$-means regularised sdp algorithm. Lighter color indicates probability closer to one while darker indicates probability closer to zero.$m$ denotes the number of noisy points while $n$ denotes the number of points in the smallest cluster.}
\end{figure}
\begin{figure}
  \includegraphics[width=\textwidth]{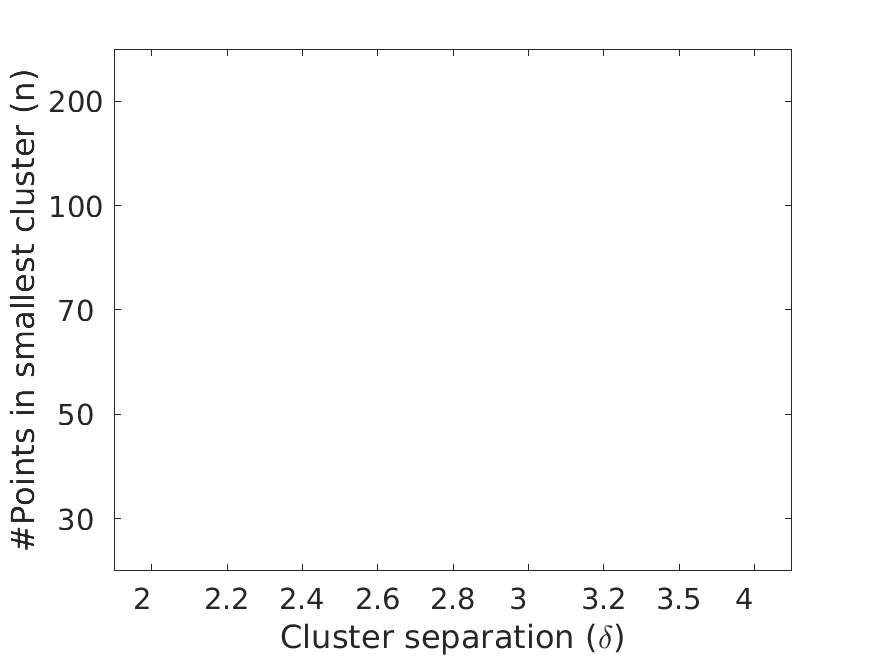}
  \includegraphics[width=\textwidth]{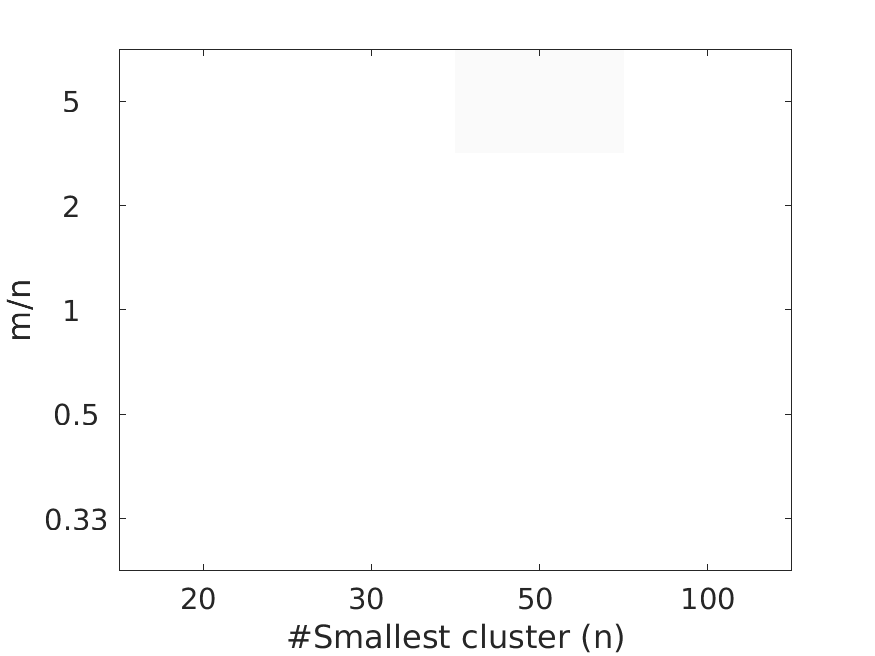}
  \caption{Heatmap showing the probability of success of the $k$-means regularised sdp algorithm. Lighter color indicates probability closer to one while darker indicates probability closer to zero. $m$ denotes the number of noisy points while $n$ denotes the number of points in the smallest cluster.}  
\end{figure}

\section{Technical lemma}
\begin{theorem}[Thm. 5.41 in \cite{vershynin2010introduction}]
\label{a-thm:spectralNormCOncentration}
Let $A$ be an $N\times d$ matrix whose rows $A_i$ are independent isotropic random vectors in $\mb R^d$. Let $m$ be a number such that $\|A_i\| \le \sqrt{m}$ almost surely for all $i$. Then for every $t$, one has
$$\sqrt{N} - t\sqrt{m} \le \sigma_{\min}(A) \le \sigma_{\max}(A) \le \sqrt{N} + t\sqrt{m}$$
with probability atleast $1-2d\exp(-ct^2)$, where $c$ is an absolute constant. $\sigma_{\min}$ and $\sigma_{\max}$ are the spectral norms or the minimum and maximum eigenvalues respectively of the matrix $A$.
\end{theorem}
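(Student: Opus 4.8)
The plan is to pass from the singular values of $A$ to the operator norm of the centered empirical second-moment matrix $\Delta := \frac1N A^T A - I_d$, and to control $\|\Delta\|$ by a matrix concentration inequality. The bridge is the elementary algebraic fact that for $B := A/\sqrt N$, if $\|B^TB - I_d\| \le \max(\delta,\delta^2)$ for some $\delta \ge 0$, then $1-\delta \le \sigma_{\min}(B) \le \sigma_{\max}(B) \le 1+\delta$; this follows by writing $\sigma_{\max}^2(B) = \|B^TB\| \le 1 + \|\Delta\|$ and $\sigma_{\min}^2(B) = \min_{\|x\|=1} x^T B^TB x \ge 1 - \|\Delta\|$ and taking square roots (the gap between $\sqrt{1+\|\Delta\|}$ and $1+\delta$ is absorbed into the constant). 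Setting $\delta := t\sqrt{m/N}$ and rescaling by $\sqrt N$ turns this into exactly the claimed bound $\sqrt N - t\sqrt m \le \sigma_{\min}(A) \le \sigma_{\max}(A) \le \sqrt N + t\sqrt m$. Thus the entire task reduces to showing
$$\mb P\Big[\, \|\Delta\| > \max(\delta,\delta^2)\,\Big] \le 2d\exp(-ct^2), \qquad \delta := t\sqrt{m/N}.$$

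For the concentration step I would write $\Delta = \frac1N\sum_{i=1}^N X_i$ with $X_i := A_iA_i^T - I_d$. Isotropy of the rows means $\mb E A_iA_i^T = I_d$, hence $\mb E X_i = 0$, so the $X_i$ are independent, symmetric, mean-zero matrices. Two moment estimates drive the bound. First, since $A_iA_i^T$ is rank one with nonzero eigenvalue $\|A_i\|^2$, the matrix $X_i$ has eigenvalues $\|A_i\|^2-1$ and $-1$, so $\|X_i\| \le \max\{\|A_i\|^2-1,\,1\} \le m$ using $\|A_i\|\le\sqrt m$ (note $m \ge \mb E\|A_i\|^2 = \tr I_d = d \ge 1$). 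Second, the variance proxy: expanding $(A_iA_i^T-I_d)^2 = \|A_i\|^2 A_iA_i^T - 2A_iA_i^T + I_d$ and using $\|A_i\|^2 \le m$ together with $A_iA_i^T \succeq 0$ gives $0 \preceq \mb E X_i^2 = \mb E[\|A_i\|^2 A_iA_i^T] - I_d \preceq m\,\mb E A_iA_i^T - I_d \preceq m I_d$, whence $\big\|\sum_i \mb E X_i^2\big\| \le Nm$.

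With these two ingredients in hand I would invoke the non-commutative (matrix) Bernstein inequality: for independent symmetric mean-zero $X_i$ with $\|X_i\| \le K$ and $\sigma^2 := \|\sum_i \mb E X_i^2\|$,
$$\mb P\Big[\big\|\textstyle\sum_i X_i\big\| > s\Big] \le 2d\exp\!\Big(\frac{-c\,s^2}{\sigma^2 + Ks}\Big).$$
Taking $K = m$, $\sigma^2 = Nm$ and $s = N\varepsilon$ with $\varepsilon := \max(\delta,\delta^2)$, the exponent equals $c\,N\varepsilon^2/\big(m(1+\varepsilon)\big)$. When $\delta \le 1$ we have $\varepsilon=\delta$ and this is at least $cN\delta^2/(2m) = ct^2/2$; when $\delta > 1$ we have $\varepsilon=\delta^2$ and it is at least $cN\delta^4/(2m\delta^2) = cN\delta^2/(2m) = ct^2/2$. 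In either regime the failure probability is $\le 2d\exp(-c't^2)$, and dividing through by $N$ yields $\|\Delta\| \le \varepsilon = \max(\delta,\delta^2)$ on the complementary event, which combined with the first paragraph completes the argument.

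The main obstacle is obtaining the ambient-dimension prefactor $2d$ rather than the exponential $9^d$ that a naive $\epsilon$-net over $S^{d-1}$ would produce; this is precisely what forces the matrix-concentration route, and the crux of the work is the variance computation $\mb E X_i^2 \preceq m I_d$, which is where both isotropy and the norm bound $\|A_i\|\le\sqrt m$ are used at once. An $\epsilon$-net argument with a scalar Bernstein bound at each net point also reproduces the same $t^2$ scaling, but one must then argue that $t \gtrsim \sqrt d$ absorbs the $9^d$ union bound; the matrix Bernstein route sidesteps this bookkeeping and delivers the stated $2d\exp(-ct^2)$ directly.
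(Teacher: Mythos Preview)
The paper does not prove this statement at all: it is quoted verbatim as Theorem~5.41 of Vershynin's notes and used as a black-box tool, with no argument supplied. So there is nothing in the paper to compare against.

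That said, your proposal is correct and is essentially the standard proof (the one Vershynin himself gives). The reduction via $\|B^TB-I_d\|\le\max(\delta,\delta^2)\Rightarrow 1-\delta\le\sigma_{\min}(B)\le\sigma_{\max}(B)\le 1+\delta$ is exact, not merely ``absorbed into the constant'': indeed $1+\max(\delta,\delta^2)\le(1+\delta)^2$ always, and $1-\max(\delta,\delta^2)\ge(1-\delta)^2$ when $\delta\le 1$ (while for $\delta>1$ the lower bound is vacuous). Your moment bounds $\|X_i\|\le m$ and $\mb E X_i^2\preceq mI_d$ are correct, and the case split $\delta\le 1$ versus $\delta>1$ in the Bernstein exponent is handled cleanly. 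The only thing worth flagging is that the $2d$ prefactor you obtain is the matrix-Bernstein version; Vershynin's Theorem~5.41 as literally stated uses an $\epsilon$-net and yields $2n\exp(-ct^2)$ (with $n$ the larger dimension), so the $2d$ form in the paper's citation is a slight conflation of two nearby results---but your route does deliver the $2d$ directly, as you note.
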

\end{document}